\newtheorem{theorem}{Theorem}
\newtheorem{lemma}[theorem]{Lemma}
\newtheorem{corollary[theorem]}{Corollary}
\newtheorem{definition}[theorem]{Definition}
\newtheorem{assumption}{Assumption}
\let\oldremark\remark
\renewcommand{\remark}{\oldremark\normalfont}
\DeclareMathOperator*{\argmin}{argmin}
\newcommand{\red}[1]{\textcolor{red}{#1}}
\newcommand{\kibitz}[2]{\ifnum\Comments=1\textcolor{#1}{#2}\fi}
\title{A Primal-Dual Algorithm for Offline Constrained Reinforcement Learning with Linear MDPs}
\author{%
  Kihyuk Hong \\
  University of Michigan \\
  \texttt{kihyukh@umich.edu} \\
  \And
  Ambuj Tewari \\
  University of Michigan \\
  \texttt{tewaria@umich.edu} \\
}
\begin{document}

\maketitle

\begin{abstract}
We study offline reinforcement learning (RL) with linear MDPs under the infinite-horizon discounted setting which aims to learn a policy that maximizes the expected discounted cumulative reward using a pre-collected dataset.
Existing algorithms for this setting either require a uniform data coverage assumptions or are computationally inefficient for finding an $\epsilon$-optimal policy with $\mathcal{O}(\epsilon^{-2})$ sample complexity.
In this paper, we propose a primal dual algorithm for offline RL with linear MDPs in the infinite-horizon discounted setting.
Our algorithm is the first computationally efficient algorithm in this setting that achieves sample complexity of $\mathcal{O}(\epsilon^{-2})$ with partial data coverage assumption.
Our work is an improvement upon a recent work that requires $\mathcal{O}(\epsilon^{-4})$ samples.
Moreover, we extend our algorithm to work in the offline constrained RL setting that enforces constraints on additional reward signals.
\end{abstract}

\section{Introduction}

We study the offline constrained reinforcement learning (RL) setting where a dataset of trajectories collected previously is given and the goal is to learn a decision making policy that performs well with respect to a reward signal while satisfying constraints on additional reward signals.
The setting is applicable to real-world problems that have safety concerns.
Learning from a previously collected dataset without interacting with the environment, a key property of offline RL \parencite{levine2020offline}, is useful in real-world problems where interacting with the environment is expensive or dangerous \parencite{kumar2021workflow,tang2021model,levine2018learning}.
Enforcing constraints on additional reward signals, a key property of constrained RL \parencite{altman2021constrained}, is useful for applications with safety concerns \parencite{wang2019safe,brunke2022safe}.

A challenge in offline RL is \textit{distribution shift} \parencite{levine2020offline}, a mismatch of the state-action distribution in the offline dataset to the state-action distributions induced by candidate policies.
For sample-efficient learning, offline RL requires the data distribution of the target policy to be covered by the offline dataset \parencite{chen2019information}.
A uniform data coverage assumption \parencite{antos2007fitted} is a convenient, but a strong assumption that requires the offline dataset to cover state-action distributions induced by all policies.
Recent works study offline RL with partial data coverage assumption that only requires the offline dataset to cover state-action distribution induced by a single target policy \parencite{jin2021pessimism}.

Another challenge in offline RL, which is also a challenge in online RL, is that many practical problems have large state spaces, making sample efficient learning difficult.
For sample efficient learning in large state space, we need to assume a structure in the problem.
In this paper, we study the linear MDP setting \parencite{jin2020provably} that assumes the transition probability matrix and the reward function have linear structures.
This setting ensures the value function is linear in a low-dimensional representation of state-action pairs, allowing sample-efficient learning.
To the best of our knowledge, none of the previous works on offline RL for linear MDPs provides with partial data coverage provide a computationally efficient algorithm with $\mathcal{O}(\epsilon^{-2})$ sample complexity.
In this paper, we introduce a novel algorithm that achieves this.
Furthermore, we extend to the offline \textit{constrained} RL setting that allows specifying constraints on additional reward signals.

\begin{table*}[t]
\caption{Comparison of algorithms for offline (constrained) RL}
\label{table:comparison}
\centering
\begin{tabular}{cccccc}
 \toprule
 \makecell{Setting} & Algorithm & \makecell{Partial \\ coverage} & \makecell{Computationally \\ efficient} & \makecell{Support \\ constraints} & N \\
 \midrule
 General & FQI \parencite{munos2008finite} & \red{No} & Yes & \red{No} & $\epsilon^{-2}$ \\
 General & CBPL \parencite{le2019batch} & \red{No} & Yes & Yes & $\epsilon^{-2}$ \\
 General & Minimax \parencite{xie2021bellman} & Yes & \red{No} & \red{No} & $\epsilon^{-2}$ \\
 General & CPPO \parencite{uehara2022pessimistic} & Yes & \red{No} & \red{No} & $\epsilon^{-2}$ \\
 General & Minimax \parencite{zanette2023realizability} & \red{No} & \red{No} & \red{No} & $\epsilon^{-2}$ \\
 \midrule
 Linear & PSPI \parencite{xie2021bellman} & Yes & Yes & \red{No} & $\red{\epsilon^{-5}}$ \\
 Linear MDPs & Primal-Dual \parencite{gabbianelli2023offline} & Yes & Yes & \red{No} & $\red{\epsilon^{-4}}$ \\
 Linear MDPs & Primal-Dual (\textbf{Ours}) & Yes & Yes & Yes & $\epsilon^{-2}$ \\
 \bottomrule
\end{tabular}
\end{table*}

\subsection{Related Work}

In Table~\ref{table:comparison}, we compare our work to previous works.
The column $N$ shows how the sample complexity bound scales with the error tolerance $\epsilon$.
The first five algorithms are for offline RL with general function approximation. The algorithms can be reduced to the linear function approximation setting by taking a value function class consisting of linear functions.
The sixth algorithm is for offline RL with linear function approximation.
The last two algorithms are results on offline RL with linear MDPs, which is a special case of the linear function approximation setting.
The computational efficiency of algorithms for the general function approximation setting is judged based on the efficiency when applied to linear function class.
As the table shows, our algorithm is the first computationally efficient algorithm with sample complexity $\mathcal{O}(\epsilon^{-2})$ for finding $\epsilon$-optimal policy under partial data coverage assumption.
Moreover, our algorithm supports constraints on additional reward signals.

\paragraph{Offline RL with General Function Approximation}

Offline RL with general function approximation is widely studied in the discounted infinite-horizon setting.
When casting the linear function approximation setting to the general function approximation setting, we get the realizability and Bellman completeness for free when using linear function class since the value function under linear function approximation is linear.
In Table~\ref{table:comparison}, we only compared works on general function approximation that assumes realizability, Bellman completeness and data coverage.
There are other works that relax Bellman completeness assumption at the cost of introducing another assumption.
For example, \textcite{xie2020q,zhan2022offline,zhu2023importance,hong2023primal} relax Bellman completeness assumption and introduce marginalized importance weight assumption.

\paragraph{Offline RL with Episodic Setting}

Offline RL with linear function approximation has been studied in the finite-horizon episodic setting.
\textcite{zanette2021provable} propose a computationally efficient actor-critic algorithm with pessimism to achieve $\mathcal{O}(\epsilon^{-2})$ sample complexity under partial data coverage. \textcite{jin2021pessimism} propose a computationally efficient value iteration based algorithm with pessimism to achieve $\mathcal{O}(\epsilon^{-2})$ sample complexity under partial data coverage. However, they require the knowledge of the covariance matrix induced by the state-action data distribution.
Although their results are computationally efficient and work under partial data coverage, they do not apply to the infinite-horizon discounted setting.
\textcite{wu2021offline} study offline constrained RL with a more general way of specifying constraints. Their focus is on episodic setting with linear mixture MDP.

\section{Preliminaries}

\paragraph{Notations}
We denote by $\Delta(\mathcal{X})$ the probability simplex over a finite set $\mathcal{X}$.
We write $\bm\Delta^I = \{ \bm{x} \in \mathbb{R}_+^I : \sum_{i = 1}^I x_i \leq 1 \}$.
We write $\mathbb{B}_d(B) = \{ \bm{x} \in \mathbb{R}^d : \Vert \bm{x} \Vert_2 \leq B \}$.
Given a matrix $A$, denote by $A^\dagger$ its pseudoinverse.

We consider a Markov decision process (MDP) $\mathcal{M} = (\mathcal{S}, \mathcal{A}, P, r, \gamma, \nu_0)$ where $\mathcal{S}$ is the state space, $\mathcal{A}$ is the action space, $P : \mathcal{S} \times \mathcal{A} \rightarrow \Delta(\mathcal{S})$ is the probability transition kernel, $r : \mathcal{S} \times \mathcal{A} \rightarrow [0, 1]$, is the reward function, $\gamma$ is the discount factor and $\nu_0$ is the initial state distribution.
We assume that initial state is fixed to $s_0$ for simplicity.
We assume $\mathcal{S}$ and $\mathcal{A}$ are finite, but potentially very large.
We assume the reward function $r$ is deterministic and known to the learner.
The probability transition kernel $P$ is unknown to the learner. 

The interaction protocol between the learner and the MDP is as follows.
The learner interacts with the MDP starting from the initial state $s_0 \in \mathcal{S}$.
At each step $t = 0, 1, \dots$, the learner chooses an action $a_t \in \mathcal{A}$ and observes the reward $r(s_t, a_t)$ and the next state $s_{t + 1}$.
The next state $s_{t + 1}$ is drawn by the environment from $P(\cdot | s_t, a_t)$.

We define the normalized expected cumulative rewards
$$
J(\pi) \coloneqq (1 - \gamma) \mathbb{E}^\pi \left[\sum_{t = 0}^\infty \gamma^t r(s_t, a_t) \right]
$$
where $\mathbb{E}^\pi$ is the expectation with respect to the distribution of the trajectory $(s_0, a_0, s_1, a_1, \dots)$ induced by the interaction of the probability transition $P$ and the policy $\pi$.
The normalizing factor $1 - \gamma$ makes $J(\pi) \in [0, 1]$ for all $i = 0, \dots, I$.

The goal of RL is to find a policy $\pi : \mathcal{S} \rightarrow \Delta(\mathcal{A})$ that maximizes the reward.

\subsection{Linear MDP}

For sample-efficient learning for arbitrarily large state space, we assume access to a feature mapping $\bm\varphi : \mathcal{S} \times \mathcal{A} \rightarrow \mathbb{R}^d$ that reduces the dimension of the problem as follows.

\begin{assumption}[Linear MDP] \label{assumption:linear-unconstrained}
We assume that the transition and the reward functions can be expressed as a linear function of a \textit{known} feature map $\bm\varphi : \mathcal{S} \times \mathcal{A} \rightarrow \mathbb{R}^d$ such that
$$
r(s, a) = \langle \bm\varphi(s, a), \bm\theta \rangle,\quad
P(s' | s, a) = \langle \bm\varphi(s, a), \bm\psi(s') \rangle
$$
for all $(s, a, s') \in \mathcal{S} \times \mathcal{A} \times \mathcal{S}$ where $\bm\theta \in \mathbb{R}^d$ is a \textit{known} parameter for the reward function and $\bm\psi = (\psi_1, \dots, \psi_d)$ is a vector of $d$ \textit{unknown} (signed) measures on $\mathcal{S}$.
\end{assumption}
The linear MDP assumption is widely studied in the RL literature for studying theoretical properties of RL with function approximation \parencite{jin2020provably}.
As is commonly done in works on linear MDPs, we further make the following boundedness assumptions.
Without loss of generality (see Appendix A in \textcite{wei2021learning} for justification), we assume
$$
\Vert \bm\varphi(s, a) \Vert_2 \leq 1,\quad \Vert \bm\theta \Vert_2 \leq \sqrt{d}
$$ for all $(s, a) \in \mathcal{S} \times \mathcal{A}$.
We further make a technical assumption, also made by \textcite{wagenmaker2022first}, that for some constant $D_\psi$,
$$
\Vert \vert \bm\psi \vert( \mathcal{S} ) \Vert_2 \leq D_\psi \sqrt{d}
$$
where $\vert \bm\psi \vert(\mathcal{S}) = \sum_{s \in \mathcal{S}} (\vert \psi_1(s) \vert, \dots, \vert \psi_d(s)  \vert)$.
This assumption holds, for example, when $\psi_i$ are probability measures on $\mathcal{S}$, in which case the assumption holds with $D_\psi = 1$.

The linear structure implies a low-dimensional factorization of key quantities as we discuss below.
Let $\bm{P} \in \mathbb{R}^{\vert \mathcal{S} \times \mathcal{A} \vert \times \vert \mathcal{S} \vert}$ be the matrix representation of the probability transition kernel $P$ with $(\bm{P})_{(s, a), s'} = P(s' | s, a)$ for $(s, a, s') \in \mathcal{S} \times \mathcal{A} \times \mathcal{S}$.
Then, the linear structure gives
$$
\bm{P} = \bm\Phi \bm\Psi
$$
where $\bm\Psi \in \mathbb{R}^{d \times \vert \mathcal{S} \vert}$ is the \textit{unknown} matrix of all measures with rows $(\psi_i(s'))_{s' \in \mathcal{S}}$ and $\bm\Phi \in \mathbb{R}^{\vert \mathcal{S} \times \mathcal{A} \vert \times d}$ is the \textit{known} matrix of all feature vectors with rows $(\bm\varphi(s, a))_{(s, a) \in \mathcal{S} \times \mathcal{A}}$.

Let $Q^\pi$ be the action value function of a policy $\pi$ with respect to a reward function $r: \mathcal{S} \times \mathcal{A} \rightarrow \mathbb{R}$ is defined as follows.
$$
Q^\pi(s, a) = \mathbb{E}^\pi \left[ \sum_{t = 0}^\infty \gamma^t r(s_t, a_t) | s_0 = s, a_0 = a \right].
$$
It is the expected discounted cumulative reward starting from the state-action pair $(s, a)$ and then executing the policy $\pi$ every time step.
Similarly, the state value function of a policy $\pi$ with respect to a reward function $r$ is defined as
$$
V^\pi(s) = \mathbb{E}^\pi \left[ \sum_{t = 0}^\infty \gamma^t r(s_t, a_t) | s_0 = s\right].
$$
It is the expected discounted cumulative reward starting from the state $s$ and then executing the policy $\pi$ every time step.

We use $\bm{Q}^\pi \in \mathbb{R}^{\vert \mathcal{S} \times \mathcal{A} \vert}$ to denote the matrix representation of the function $Q^\pi$ such that $(\bm{Q}^\pi)_{s, a} = Q^\pi(s, a)$ and $\bm{V}^\pi \in \mathbb{R}^{\vert \mathcal{S} \vert}$ the vector representation of the function $V^\pi$ such that $\bm{V}^\pi_s = V^\pi(s)$.
With these notations, the well known Bellman equation $Q^\pi(s, a) = r(s, a) + \gamma PV^\pi(s, a)$
(see e.g. \textcite{puterman2014markov}) can be written as
\begin{equation} \label{eqn:bellman}
\bm{Q}^\pi = \bm{r} + \gamma \bm{P} \bm{V}^\pi = \bm\Phi(\bm\theta + \gamma \bm\Psi \bm{V}^\pi) = \bm\Phi \bm\zeta^\pi
\end{equation}
where $\bm{r} \in \mathbb{R}^{\vert \mathcal{S} \times \mathcal{A} \vert}$ is the matrix representation of the reward function $r$ and we define $\bm\zeta^\pi \coloneqq \bm\theta + \gamma \bm\Psi \bm{V}^\pi \in \mathbb{R}^d$.
This shows that the action value function is linear in the feature vector:
$$
Q^\pi(s, a) = \langle \bm\varphi(s, a), \bm\zeta^\pi \rangle.
$$
Due to the boundedness assumptions $\Vert \bm\theta \Vert_2 \leq \sqrt{d}$ and $\Vert \vert \bm\psi \vert ( \mathcal{S} ) \Vert_2 \leq D_\psi \sqrt{d}$, and the fact that $V^\pi(s) \in [0, \frac{1}{1 - \gamma}]$, the norm of the parameter $\bm\zeta^\pi$ is bounded by
$$
\Vert \bm\zeta^\pi \Vert_2 \leq \sqrt{d} + \frac{\gamma D_\psi \sqrt{d}}{1 - \gamma} = \mathcal{O}\left(\frac{D_\psi \sqrt{d}}{1 - \gamma}\right).
$$
We define $D_\zeta \coloneqq \sqrt{d} + \frac{\gamma D_\psi \sqrt{d}}{1 - \gamma}$.

\subsection{Offline Learning and Data Coverage}

We consider the offline learning setting where the agent has access to a dataset $\mathcal{D} = (s_j, a_j, s_j')_{j = 1}^n$.
The pairs $(s_j, a_j)$, $j = 1, \dots, n$ are assumed to be i.i.d. samples from a distribution $\mu_B \in \Delta(\mathcal{S} \times \mathcal{A})$ and each $s_j'$ is sampled from $P(\cdot~|~s_j, a_j)$.
Such an i.i.d. assumption on the offline dataset is commonly made in the offline RL literature \cite{xie2021bellman,zhan2022offline,chen2022offline,zhu2023importance} to facilitate the analysis of concentration bounds.

A major challenge in sample efficient offline RL is \textit{distribution shift}, which refers to the mismatch of state-action distribution of the offline dataset and the target (optimal) policy.
For sample efficient learning, we require an assumption on data coverage that guarantees the distribution of target policy is covered by the offline dataset.
A common data coverage assumption in offline RL is concentrability assumption that limits the ratio of occupancy measure of target policy to that of behavior policy.
The normalized occupancy measure of a policy $\pi$ is defined as
$$
\mu^\pi(s, a) = (1 - \gamma) \mathbb{E}^\pi \left[ \sum_{t = 0}^\infty \gamma^t \mathbb{I}\{ s_t = s, a_t = a \} \right].
$$
Roughly, it is the normalized count of the visitation of state-action pair $(s, a)$ when executing the policy $\pi$.
It is normalized to ensure $\mu^\pi(s, a)$ is a probability measure on $\mathcal{S} \times \mathcal{A}$ so that $\sum_{s, a} \mu(s, a) = 1$.
We use the notation $\bm\mu^\pi \in \mathbb{R}^{\vert \mathcal{S} \times \mathcal{A} \vert}$ to denote the matrix representation of the function $\mu^\pi(s, a)$.
A commonly used concentrability assumption is as follows.
\begin{assumption}[Concentrability] \label{assumption:concentrability}
For an optimal policy $\pi^\ast$, we have
$$
\frac{\mu^\ast(s, a)}{\mu_B(s, a)} \leq C^\ast
$$
for all $s \in \mathcal{S}$ and $a \in \mathcal{A}$ with $\mu_B(s, a) > 0$ where we write $\mu^\ast = \mu^{\pi^\ast}$.
The bound $C^\ast$ is known to the learner.
\end{assumption}
Concentrability assumption is widely used in offline RL with general function approximation \parencite{munos2003error,munos2005error}.
The assumption requires the ratio $\mu^\ast(s, a) / \mu_B(s, a)$ to be bounded for all $(s, a) \in \mathcal{S} \times \mathcal{A}$ for which $\mu_B(s, a)$ is positive.
As discussed by \textcite{gabbianelli2023offline}, in the linear function approximation setting with access to a feature map, we can define data coverage in the feature space rather than in the state-action space.
We defer discussion on our result that uses data coverage assumption in feature space to Section~\ref{section:feature-coverage}.

\section{Algorithm Design}

Our algorithm is motivated by the linear programming formulation of the reinforcement learning problem.
$$
\begin{aligned}
\max_{\bm\mu \geq \bm{0}} \quad\quad &\langle \bm{r}, \bm{\mu} \rangle \\
\text{subject to} \quad\quad
&\bm{E}^T \bm{\mu} = (1 - \gamma) \bm{\nu}_0 + \gamma \bm{P}^T \bm\mu.
\end{aligned}
$$
Here, $\bm{E} \in \mathbb{R}^{\vert \mathcal{S} \times \mathcal{A} \vert \times \vert \mathcal{S} \vert}$ denotes the matrix with $\bm{E}_{(s, a), s'} = \mathbb{I} \{s = s' \}$ and $\bm\nu_0$ denotes the initial state distribution, which is assumed to be $\bm{e}_{s_0}$.
Note that the $s$th entry of $\bm{E}^T \bm\mu$ is $\sum_a \mu(s, a)$, which is the sum of $\mu(s, \cdot)$ over all possible values of $a$.
The optimization variable $\bm\mu \in \mathbb{R}^{\vert \mathcal{S} \times \mathcal{A} \vert}$ has the interpretation of the normalized occupancy measure.
The objective function has the interpretation of the value of a policy, which can be seen by
$$
J(\pi) = (1 - \gamma) \mathbb{E}^\pi \left[ \sum_{t = 0}^\infty \gamma^t r(s_t, a_t) \right] = \langle \bm{r}, \bm\mu^\pi \rangle.
$$
The constraint $\bm{E}^T \bm\mu = (1 - \gamma) \bm\nu_0 + \gamma \bm{P}^T \bm\mu$, called Bellman flow constraint, makes sure that $\bm\mu$ is a permissible occupancy measure in the sense that there exists a policy $\pi$ that induces the measure, i.e., $\mu = \mu^\pi$ for some policy $\pi$.

We use $\bm{r} = \bm\Phi \bm\theta$ and $\bm{P} = \bm\Phi \bm\Psi$, which hold by the linear MDP assumption (Assumption~\ref{assumption:linear-unconstrained}), to rewrite the linear program as
$$
\begin{aligned}
\max_{\bm\mu \geq 0} \quad\quad &\langle \bm\theta, \bm\Phi^T \bm{\mu} \rangle \\
\text{subject to} \quad\quad
&\bm{E}^T \bm{\mu} = (1 - \gamma) \bm{\nu}_0 + \gamma \bm\Psi^T \bm\Phi^T \bm\mu
\end{aligned}
$$
Note that the optimization variable $\bm\mu \in \mathbb{R}^{\vert \mathcal{S} \times \mathcal{A} \vert}$ is high-dimensional that depends on the size of $\mathcal{S}$.
Following \textcite{gabbianelli2023offline}, with the goal of computational and statistical efficiency, we introduce a low-dimensional optimization variable $\bm\lambda = \bm\Phi^T \bm\mu \in \mathbb{R}^d$, which has the interpretation of the average occupancy in the feature space.
With the reparametrization, the optimization problem becomes
$$
\begin{aligned}
\max_{\bm\mu \geq \bm{0}, \bm\lambda} \quad\quad &\langle \bm{\theta}, \bm{\lambda} \rangle \\
\text{subject to} \quad\quad
&\bm{E}^T \bm{\mu} = (1 - \gamma) \bm{\nu}_0 + \gamma \bm\Psi^T \bm\lambda \\
&\bm\lambda = \bm\Phi^T \bm\mu.
\end{aligned}
$$
The dual of the linear program above is
$$
\begin{aligned}
\min_{\bm{v}, \bm\zeta} \quad\quad
& (1 - \gamma) \langle \bm\nu_0, \bm{v} \rangle \\
\text{subject to} \quad\quad
& \bm\zeta = \bm\theta + \gamma \bm\Psi \bm{v} \\
& \bm{E} \bm{v} \geq \bm\Phi \bm\zeta.
\end{aligned}
$$
The dual variable $\bm{v} \in \mathbb{R}^{\vert \mathcal{S} \vert}$ has the interpretation of the vector representation of the state value function and $\bm\zeta \in \mathbb{R}^d$ the parameter such that $\bm\Phi \bm\zeta \in \mathbb{R}^{\vert \mathcal{S} \times \mathcal{A} \vert}$ is the vector representation of the state-action value function.
The Lagrangian associated to this pair of linear programs is
\begin{align*}
&L(\bm\lambda, \bm\mu; \bm{v}, \bm\zeta) \\
&= (1 - \gamma)\langle \bm\nu_0, \bm{v} \rangle + \langle \bm\lambda, \bm\theta + \gamma \bm\Psi \bm{v} - \bm\zeta \rangle
+ \langle \bm\mu, \bm\Phi \bm\zeta - \bm{E} \bm{v} \rangle \\
&= \langle \bm\lambda, \bm\theta \rangle + \langle \bm{v}, (1 - \gamma) \bm\nu_0 + \gamma \bm\Psi^T \bm\lambda - \bm{E}^T \bm\mu \rangle \\
&\quad\quad+ \langle \bm\zeta, \bm\Phi^T \bm\mu - \bm\lambda \rangle.
\end{align*}

Note that the optimization variables $\bm\lambda, \bm\zeta \in \mathbb{R}^d$ are low-dimensional, but $\bm\mu \in \mathbb{R}^{\vert \mathcal{S} \times \mathcal{A} \vert}$ and $\bm{v} \in \mathbb{R}^{\vert \mathcal{S} \vert}$ are not.
With the goal of running a primal-dual algorithm on the Lagrangian using only low-dimensional variables, we introduce policy variable $\pi$ and parameterize $\bm\mu$ and $\bm{v}$, following \textcite{gabbianelli2023offline}, by
\begin{align}
\mu_{\bm\lambda, \pi}(s, a) &= \pi(a | s) \left[
  (1 - \gamma) \nu_0(s) + \gamma \langle \psi(s), \bm\lambda \rangle
\right] \label{eqn:mu} \\
v_{\bm\zeta, \pi}(s) &= \sum_a \pi(a | s) \langle \bm\zeta, \bm\varphi(s, a) \rangle. \label{eqn:v}
\end{align}
The choice of $\bm\mu_{\bm\lambda, \pi}$ makes the Bellman flow constraint $\bm{E}^T \bm\mu_{\bm\lambda, \pi} = (1 - \gamma) \bm\nu_0 + \gamma \bm\Psi^T \bm\lambda$ of the primal problem satisfied.
Also, the choice of $\bm{v}_{\bm\zeta, \pi}$ makes $\langle \bm\mu, \bm\Phi \bm\zeta - \bm{E} \bm{v}_{\bm\zeta, \pi} \rangle = 0$.
Using the above parameterization, the Lagrangian can be rewritten in terms of $\bm\zeta, \bm\lambda, \pi$ as follows:
\begin{align}
f(&\bm\lambda, \bm\zeta, \pi) \notag \\
&= \langle \bm\lambda, \bm\theta_0 \rangle + \langle \bm\zeta, \bm\Phi^T \bm\mu_{\bm\lambda, \pi} - \bm\lambda \rangle \label{eqn:lagrangian1-unconstrained} \\
&= (1 - \gamma) \langle \bm\nu_0, \bm{v}_{\bm\zeta, \pi} \rangle + \langle \bm\lambda, \bm\theta_0 + \gamma \bm\Psi \bm{v}_{\bm\zeta, \pi} - \bm\zeta \rangle. \label{eqn:lagrangian2-unconstrained}
\end{align}
At the cost of having to keep track of $\pi$, we can now run a primal-dual algorithm on the low-dimensional variables $\bm\zeta$, $\bm\lambda$.
The introduction of $\pi$ in the equation does not make the algorithm inefficient because we can only keep track of the distribution $\pi(s | a)$ for state-action pairs that appear in the dataset. See Appendix~\ref{appendix:computation} for detail.

Previous work on offline linear MDP \parencite{gabbianelli2023offline} runs primal-dual algorithm on the variables $\bm\zeta$ and $\bm\beta = \bm\Lambda^\dagger \bm\lambda$ by estimating the gradient of the Lagrangian with respect to the variables.
Their algorithm requires running gradient descent algorithm on $\bm\zeta$ for every gradient ascent step of $\bm\beta$, leading to a double-loop algorithm structure.
Since each gradient descent/ascent step requires fresh copy of independent data, the double-loop algorithm leads to sample complexity of $\mathcal{O}(\epsilon^{-4})$.
We sidestep the need of the double-loop structure and obtain $\mathcal{O}(\epsilon^{-2})$ sample complexity by restricting the values of $\bm\lambda$ to a carefully designed confidence set that allows estimating the gradient uniformly over the choices of $\bm\lambda$, $\bm\zeta$ and $\pi$.
We outline the argument in the following section.

\subsection{Analysis}

For a given policy $\pi$, recall that $\bm\zeta^\pi \in \mathbb{R}^d$ is the parameter satisfying $\bm{Q}^\pi = \bm\Phi \bm\zeta^\pi$.
It can be shown that for any $\bm\lambda \in \mathbb{R}^d$,
$$
f(\bm\zeta^\pi, \bm\lambda, \pi) = J(\pi).
$$
Also, defining $\bm\lambda^\pi = \bm\Phi^T \bm\mu^\pi$, which has the interpretation of the average occupancy in the feature space when executing the policy $\pi$, it can be shown that for any $\bm\zeta \in \mathbb{R}^d$,
$$
f(\bm\zeta, \bm\lambda^\pi, \pi) = J(\pi).
$$
See Appendix~\ref{appendix:lagrangian} for proofs.
Hence, for any sequences $\{ \pi_t \}$, $\{ \bm\theta_t \} \subset \mathbb{R}^d$ and $\{ \bm\lambda_t \} \subset \mathbb{R}^d$, we have
\begin{align*}
J&(\pi^\ast) - J(\pi_t) = f(\bm\zeta_t, \bm\lambda^\ast, \pi^\ast) - f(\bm\zeta^{\pi_t}, \bm\lambda_t, \pi_t) \\
&=
(\underbrace{f(\bm\zeta_t, \bm\lambda^\ast, \pi^\ast) - f(\bm\zeta_t, \bm\lambda^\ast, \pi_t)}_{\textsc{Reg}_t^\pi}) \\
&\hspace{10mm}+
(\underbrace{f(\bm\zeta_t, \bm\lambda^\ast, \pi_t) - f(\bm\zeta_t, \bm\lambda_t, \pi_t)}_{\textsc{Reg}_t^\lambda}) \\
&\hspace{10mm}+
(\underbrace{f(\bm\zeta_t, \bm\lambda_t, \pi_t) - f(\bm\zeta^{\pi_t}, \bm\lambda_t, \pi_t)}_{\textsc{Reg}_t^\zeta})
\end{align*}
where we use the notation $\bm\lambda^\ast = \bm\lambda^{\pi^\ast}$.

Note that the suboptimality $J(\pi^\ast) - J(\pi_t)$ is decomposed into regrets of the three players.
As long as we show that the sums of the three regrets over $t = 1, \dots, T$ are sublinear in $T$, we obtain $\frac{1}{T} \sum_{t = 1}^T J(\pi^\ast) - J(\pi_t) = J(\pi^\ast) - J(\bar{\pi}) = o(1)$ where $\bar{\pi} = \text{Unif}(\pi_1, \dots, \pi_T)$ is the mixture policy that chooses a policy among $\pi_1, \dots, \pi_T$ uniformly at random and runs the chosen policy for the entire trajectory.

In the rest of the section, we sketch analyses of bounding the regrets of the three players.
These analyses will motivate our algorithm presented in Section~\ref{section:algorithm}.

\subsection{Bounding Regret of $\pi$-player}

Using Equation \eqref{eqn:lagrangian2-unconstrained}, the regret of $\pi$-player simplifies to
\begin{align*}
\text{Reg}^\pi_t &=
f(\bm\zeta_t, \bm\lambda^\ast, \pi^\ast) - f(\bm\zeta_t, \bm\lambda^\ast, \pi_t) \\
&= \langle \bm\nu^\ast, \bm{v}_{\bm\zeta_t, \pi^\ast} - \bm{v}_{\bm\zeta_t, \pi_t} \rangle \\
&= \langle \bm\nu^\ast, \textstyle \sum_a (\pi^\ast(a | \cdot ) - \pi_t(a | \cdot)) \langle \bm\zeta_t, \bm\varphi(\cdot, a) \rangle \rangle.
\end{align*}
where we define $\bm\nu^\pi = (1 - \gamma) \bm\nu_0 + \gamma \bm\Psi^T \bm\lambda^\pi$ as the state occupancy measure induced by $\pi$ and write $\bm\nu^\ast = \bm\nu^{\pi^\ast}$.
The regret can be bounded if $\pi$-player updates its policy using an exponentiation algorithm \parencite{zanette2021provable}
$$
\pi_{t + 1} = \sigma\left(\alpha \sum_{i = 1}^t \bm\Phi \bm\zeta_i\right)
$$
where $\sigma(\bm{q})$ for $\bm{q} \in \mathbb{R}^{\vert \mathcal{S} \times \mathcal{A} \vert}$ is a softmax policy with
$$
\sigma(\bm{q})(a | s) \coloneqq \frac{\exp(q(s, a))}{\sum_{a'} \exp(q(s, a'))}.
$$
Based on the standard mirror descent analysis by \textcite{gabbianelli2023offline} (Appendix~\ref{appendix:pi-player}) we can show that, choosing $\alpha = \mathcal{O}((1 - \gamma)\sqrt{\log \vert \mathcal{A} \vert / (dT)})$ gives
$$
\frac{1}{T} \sum_{t = 1}^T \textsc{Reg}_t^\pi \leq \mathcal{O}\left(\frac{1}{1 - \gamma}\sqrt{(d \log \vert \mathcal{A} \vert) / T }\right)
$$
which vanishes as $T$ increases.
Consequently, choosing $T$ to be at least $\Omega(\frac{d \log \vert \mathcal{A} \vert}{(1 - \gamma)^2 \epsilon^2})$ gives $\frac{1}{T} \sum_{t = 1}^T \text{Reg}_t^\pi \leq \epsilon$.
Note that when the exponentiation algorithm is employed, the $\pi$-player does not need to know the value of $\bm\zeta_t$ when choosing $\pi_t$, allowing the $\pi$-player to play before the $\zeta$-player.
Another benefit of the exponentiation algorithm is that the policy chosen by the $\pi$-player is restricted to the softmax function class $\Pi(D_\pi)$ where $\Pi(\cdot)$ is defined as
\begin{equation} \label{eqn:softmax-class}
\Pi(B) \coloneqq \{ \sigma(\bm\Phi\bm{z}) : \bm{z} \in \mathbb{B}_d(B) \}.
\end{equation}
and $D_\pi \coloneqq \alpha T D_\zeta$.
The restriction allows statistically efficient estimation of quantities that depend on policies in $\Pi(B)$ via covering argument on $\Pi(B)$, as we will see in later sections.

\subsection{Bounding Regret of $\zeta$-player}
Using Equation \eqref{eqn:lagrangian1-unconstrained}, the regret of $\zeta$-player simplifies to
\begin{align*}
\textsc{Reg}_t^\zeta &= f(\bm\zeta_t, \bm\lambda_t, \pi_t) - f(\bm\zeta^{\pi_t}, \bm\lambda_t, \pi_t) \\
&=
\langle \bm\zeta_t - \bm\zeta^{\pi_t}, \bm\Phi^T \bm\mu_{\bm\lambda_t, \pi_t} - \bm\lambda_t \rangle.
\end{align*}

Recall that $\bm\mu_{\bm\lambda, \pi} = \pi \circ \bm{E}[(1 - \gamma) \bm\nu_0 + \gamma \bm\Psi^T \bm\lambda]$.
The only unknown quantity in the regret is $\bm\Psi^T \bm\lambda \in \mathbb{R}^{\vert \mathcal{S} \vert}$.
Note that $\bm\Psi^T \bm\varphi(s, a) = (P(s' | s, a))_{s' \in \mathcal{S}}$ is a next-state distribution given current state-action pair $(s, a)$, and $\bm{e}_{s'_k}$ is an unbiased estimator for $\bm\Psi^T \bm\varphi(s_k, a_k)$.
Hence, if $\bm\lambda$ is a linear combination $\sum_{k = 1}^n c_k \bm\varphi(s_k, a_k)$, we can construct an unbiased estimator $\sum_{k = 1}^n c_k \bm{e}_{s_k'}$ for $\bm\Psi^T \bm\lambda$.
Motivated by this observation, to facilitate the algorithm design for the $\zeta$-player, we will restrict the $\lambda$-player to choose a linear combination of feature vectors that appear in the dataset to allow estimating $\bm\Psi^T \bm\lambda$.
Specifically, we strict the $\lambda$-player to choose $\bm\lambda_t$ from the following set where the bound $B$ will be chosen later.
\begin{equation} \label{eqn:confidence-set}
\mathcal{C}_n(B) \coloneqq \left\{ \frac{1}{n} \sum_{k = 1}^n c_k \bm\varphi(s_k, a_k) : c_1, \dots, c_n \in [-B, B] \right\}.
\end{equation}
Given the restriction, we can parameterize the value of $\bm\lambda_t$ by the coefficients $\bm{c}_t \in [-B, B]^n$ for some bound $B$, and write $\bm\lambda_t = \bm\lambda(\bm{c}_t)$ where we define
$$
\bm\lambda(\bm{c}) \coloneqq \frac{1}{n} \sum_{k = 1}^n c_k \bm\varphi(s_k, a_k)
$$
Following the previous discussion, we define the estimates for $\bm\Psi^T \bm\lambda(c)$ and $\bm\mu_{\bm\lambda(\bm{c}), \pi}$ parameterized by $\bm{c} \in [-B, B]^n$:
\begin{align*}
\widehat{\bm\Psi^T \bm\lambda}(\bm{c}) &\coloneqq \frac{1}{n} \sum_{k = 1}^n c_k \bm{e}_{s_k'} \\
\widehat{\bm\mu}_{\bm\lambda(\bm{c}), \pi} &\coloneqq \pi \circ \bm{E}[(1 - \gamma) \bm\nu_0 + \gamma \widehat{\bm\Psi^T \bm\lambda}(\bm{c})].
\end{align*}
These estimates enjoy the following concentration bound, which can be shown using matrix Bernstein inequality.
See Appendix~\ref{appendix:zeta-regret} for a proof.
\begin{lemma} \label{lemma:phiTmu}
For a fixed $\bm\lambda(\bm{c}) = \frac{1}{n} \sum_{k = 1}^n c_k \bm\varphi(s_k, a_k)$ with $\vert c_k \vert \leq B$ for $k = 1, \dots, n$, and a policy $\pi$, we have
$$
\Vert \bm\Phi^T \bm\mu_{\bm\lambda(\bm{c}), \pi} - \bm\Phi^T \widehat{\bm\mu}_{\bm\lambda(\bm{c}), \pi} \Vert_2 \leq \mathcal{O}\left(B \sqrt{\frac{\log(d / \delta)}{n}}\right)
$$
with probability at least $1 - \delta$ conditional on the data of state-action pairs $\{ (s_k, a_k) \}_{k = 1}^n$.
\end{lemma}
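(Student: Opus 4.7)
The plan is to expand the difference $\bm\Phi^T \bm\mu_{\bm\lambda(\bm{c}),\pi} - \bm\Phi^T \widehat{\bm\mu}_{\bm\lambda(\bm{c}),\pi}$ into a sum of i.i.d.\ mean-zero vectors in $\mathbb{R}^d$ (conditional on $\{(s_k,a_k)\}$) and apply a vector/matrix Bernstein inequality. Using the definitions in \eqref{eqn:mu}, the $(1-\gamma)\bm\nu_0$ terms cancel and
$$
\bm\Phi^T\bm\mu_{\bm\lambda(\bm{c}),\pi} - \bm\Phi^T\widehat{\bm\mu}_{\bm\lambda(\bm{c}),\pi}
= \gamma\, \bm\Phi^T\bigl(\pi\circ\bm{E}\bigr)\bigl(\bm\Psi^T \bm\lambda(\bm{c}) - \widehat{\bm\Psi^T\bm\lambda}(\bm{c})\bigr).
$$
Introduce the linear map $T_\pi:\mathbb{R}^{|\mathcal{S}|}\to\mathbb{R}^d$ defined by $T_\pi(\bm{v}) = \bm\Phi^T(\pi\circ\bm{E}\bm{v}) = \sum_s v(s)\,\mathbb{E}_{a\sim\pi(\cdot|s)}[\bm\varphi(s,a)]$. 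Substituting the definitions of $\bm\lambda(\bm{c})$ and $\widehat{\bm\Psi^T\bm\lambda}(\bm{c})$ rewrites the right-hand side as $\frac{\gamma}{n}\sum_{k=1}^n \bm{Y}_k$ where
$$
\bm{Y}_k \;=\; c_k\Bigl[T_\pi\bigl(\bm\Psi^T\bm\varphi(s_k,a_k)\bigr) - T_\pi(\bm{e}_{s_k'})\Bigr].
$$

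Next I would verify the properties needed for Bernstein. Since $\bm\Psi^T\bm\varphi(s_k,a_k) = (P(s'|s_k,a_k))_{s'\in\mathcal{S}} = \mathbb{E}_{s'\sim P(\cdot|s_k,a_k)}[\bm{e}_{s'}]$ and $T_\pi$ is linear, we have $T_\pi(\bm\Psi^T\bm\varphi(s_k,a_k)) = \mathbb{E}[T_\pi(\bm{e}_{s_k'})\mid s_k,a_k]$, so $\mathbb{E}[\bm{Y}_k\mid s_k,a_k]=0$ and the $\bm{Y}_k$'s are conditionally independent. For the almost-sure bound, note $\|T_\pi(\bm{e}_{s})\|_2 = \|\mathbb{E}_{a\sim\pi(\cdot|s)}[\bm\varphi(s,a)]\|_2\le 1$ by Jensen and $\|\bm\varphi\|_2\le 1$, and similarly $\|T_\pi(\bm\Psi^T\bm\varphi(s_k,a_k))\|_2\le 1$ by convexity; hence $\|\bm{Y}_k\|_2\le 2B$ and $\mathbb{E}[\|\bm{Y}_k\|_2^2\mid s_k,a_k]\le 4B^2$.

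Finally I would apply a (matrix or vector) Bernstein inequality to $\frac{1}{n}\sum_{k=1}^n \bm{Y}_k$ in $\mathbb{R}^d$, conditional on $\{(s_k,a_k)\}_{k=1}^n$. The resulting tail bound gives, with probability at least $1-\delta$,
$$
\Bigl\|\tfrac{1}{n}\textstyle\sum_{k=1}^n \bm{Y}_k\Bigr\|_2 \;\le\; \mathcal{O}\!\left(\sqrt{\tfrac{B^2\log(d/\delta)}{n}} + \tfrac{B\log(d/\delta)}{n}\right) = \mathcal{O}\!\left(B\sqrt{\tfrac{\log(d/\delta)}{n}}\right),
$$
and absorbing the $\gamma\le 1$ factor yields the claim.

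The conceptually delicate step is recognizing the zero-mean structure: it is tempting to simply bound $\bm\Psi^T\bm\lambda(\bm{c}) - \widehat{\bm\Psi^T\bm\lambda}(\bm{c})$ in $\mathbb{R}^{|\mathcal{S}|}$, but that object is high-dimensional and cannot concentrate in $\ell_2$ at a dimension-free rate. The trick is that the projection by $T_\pi$ onto $\mathbb{R}^d$ happens \emph{before} taking norms, so the relevant random vectors $\bm{Y}_k$ live in $\mathbb{R}^d$ and have bounded norm uniformly in the size of $\mathcal{S}$. Everything else is a straightforward application of Bernstein.
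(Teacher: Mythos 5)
Your proposal is correct and matches the paper's proof essentially verbatim: the paper also cancels the $(1-\gamma)\bm\nu_0$ terms, writes the difference as $\frac{\gamma}{n}\sum_k c_k(\bm\varphi(s_k',\pi) - \mathbb{E}[\bm\varphi(s_k',\pi)\,|\,s_k,a_k])$ (your $T_\pi(\bm{e}_{s_k'})$ is exactly $\bm\varphi(s_k',\pi)$), and applies matrix Bernstein to these conditionally mean-zero, norm-$\mathcal{O}(B)$ vectors in $\mathbb{R}^d$. The key observation you flag — projecting to $\mathbb{R}^d$ via $T_\pi$ before taking norms — is precisely the structure the paper exploits.
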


For estimating the regret $\langle \bm\zeta_t - \bm\zeta^{\pi_t}, \bm\Phi^T \bm\mu_{\bm\lambda(\bm{c}_t), \pi_t} - \bm\lambda(\bm{c}_t) \rangle$, we need a uniform concentration bound on the estimates $\bm\Phi^T \widehat{\bm\mu}_{\bm\lambda(\bm{c}_t), \pi}$ over $\bm\lambda(\bm{c})$ and $\pi$.
The restriction on the $\pi$-player to choose a policy in the softmax function class defined in \eqref{eqn:softmax-class} allows converting the concentration bound for a fixed policy $\pi$ in Lemma~\ref{lemma:phiTmu} to a uniform concentration bound over all policies in the softmax function class via a covering argument.
The conversion is possible due to the fact that the log covering number for the softmax function class is bounded by $\widetilde{\mathcal{O}}(d)$ (see Lemma~\ref{lemma:cover-pi} in Appendix~\ref{appendix:covering}).
However, such a conversion to a uniform concentration bound over all $\bm\lambda(\bm{c})$ for $\bm{c} \in [-B, B]^n$ is elusive since a naive covering argument on the space of parameters $[-B, B]^n$ will give a log covering number bound of $\mathcal{O}(n)$.
To sidestep this issue, we exploit the fact that $\mathcal{C}_n(B)$ can be spanned by a set of spanners $\{ \bm\varphi(s_j, a_j) \}_{j \in \mathcal{I}}$ for some index set $\mathcal{I} \subseteq \{ 1, \dots, n \}$ of size at most $d$.
This can be seen by the following lemma by \textcite{awerbuch2008online}.
\begin{lemma}[Barycentric spanner]
Let $\mathcal{K} \subseteq \mathbb{R}^d$ be compact set. 
Then, there exists a spanner $\{ \bm\phi_1, \dots \bm\phi_d \} \subset \mathcal{K}$ such that any vector $\bm{x} \in \mathcal{K}$ can be represented as $\bm{x} = \sum_{i = 1}^d c_i \bm\phi_i$ where $c_i \in [-C, C]$ for all $i = 1, \dots, d$.
Such a spanner is called a $C$-approximate barycentric spanner for $\mathcal{K}$.
If $\mathcal{K}$ is finite, we can find a $C$-approximate barycentric spanner in time complexity $\mathcal{O}(nd^2 \log_C d)$.
\end{lemma}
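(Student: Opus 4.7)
The plan is in two parts: establish existence via a compactness-plus-determinant-maximization argument, then turn that argument into a greedy local-search algorithm for the finite case.

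For existence, assume without loss of generality that $\mathcal{K}$ spans $\mathbb{R}^d$ (otherwise the argument is carried out inside $\mathrm{span}(\mathcal{K})$, reducing the dimension accordingly). The function $(\bm\phi_1, \ldots, \bm\phi_d) \mapsto \lvert \det[\bm\phi_1 \mid \cdots \mid \bm\phi_d] \rvert$ is continuous on the compact set $\mathcal{K}^d$, so it attains a maximum $D^\star > 0$ at some tuple $(\bm\phi_1^\star, \ldots, \bm\phi_d^\star)$, whose members must be linearly independent since $D^\star > 0$. For any $\bm{x} \in \mathcal{K}$ write $\bm{x} = \sum_i c_i \bm\phi_i^\star$; substituting $\bm{x}$ into the $i$th column gives a matrix whose absolute determinant equals $\lvert c_i \rvert D^\star$ by multilinearity. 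Maximality of $D^\star$ then forces $\lvert c_i \rvert \leq 1$ for every $i$, yielding a $1$-approximate (and hence $C$-approximate for any $C \geq 1$) barycentric spanner.

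For finite $\mathcal{K}$ of size $n$, I would run a local-search version of this argument. Initialize greedily: for $i = 1, \ldots, d$ set $\bm\phi_i \gets \argmax_{\bm{x} \in \mathcal{K}} \lvert \det[\bm\phi_1 \mid \cdots \mid \bm\phi_{i-1} \mid \bm{x} \mid \bm{e}_{i+1} \mid \cdots \mid \bm{e}_d] \rvert$; a Cramer-type calculation then yields $D_0 := \lvert \det M_0 \rvert \geq D^\star / d^{\mathcal{O}(d)}$, so $\log_C(D^\star/D_0) = \mathcal{O}(d \log_C d)$. Then iterate: maintain $M^{-1}$; for each $\bm{x} \in \mathcal{K}$ compute $\bm{c} := M^{-1} \bm{x}$; if some coordinate satisfies $\lvert c_i \rvert > C$, perform the swap $\bm\phi_i \gets \bm{x}$ and refresh $M^{-1}$ via Sherman-Morrison in $\mathcal{O}(d^2)$. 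Terminate when no violating pair $(\bm{x}, i)$ remains. Each swap multiplies $\lvert \det M \rvert$ by $\lvert c_i \rvert > C$ and $\lvert \det M \rvert$ is bounded above by $D^\star$ throughout, so the number of swaps is at most $\log_C(D^\star/D_0) = \mathcal{O}(d \log_C d)$. A full scan over $\mathcal{K}$ to test the swap condition costs $\mathcal{O}(nd^2)$, giving the advertised $\mathcal{O}(nd^2 \log_C d)$ running time. At termination, every $\bm{x} \in \mathcal{K}$ expands as $\bm{x} = \sum_i c_i \bm\phi_i$ with $\lvert c_i \rvert \leq C$, which is exactly the $C$-approximate barycentric spanner property.

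The clean part is the existence proof: once one notices multilinearity of the determinant together with compactness of $\mathcal{K}^d$, it is a one-paragraph argument. The main obstacle is controlling the initial determinant in the algorithmic part: a naive initial basis could satisfy $D_0 \ll D^\star$ by an arbitrary factor, which would inflate $\log_C(D^\star/D_0)$ and destroy the $\log_C d$ factor in the runtime. The greedy coordinate-by-coordinate initialization above is the standard fix and is precisely what makes the swap count polynomial in $d$, yielding the claimed time complexity.
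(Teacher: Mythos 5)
The paper never proves this lemma: it is imported verbatim from \textcite{awerbuch2008online}, so there is no in-paper argument to compare against, and what you wrote is essentially the standard Awerbuch--Kleinberg proof. Your existence argument is correct and complete: maximizing $\lvert\det\rvert$ over the compact set $\mathcal{K}^d$, expanding an arbitrary $\bm{x}\in\mathcal{K}$ in the maximizing basis, and using multilinearity plus maximality to force $\lvert c_i\rvert\leq 1$ is exactly the cited proof of existence of an (exact) barycentric spanner, and the reduction to $\mathrm{span}(\mathcal{K})$ when $\mathcal{K}$ does not span $\mathbb{R}^d$ is the right way to handle degeneracy (the paper only ever needs an index set of size at most $d$). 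The local-search algorithm with determinant-improving swaps is likewise the standard route, and your termination argument (each swap multiplies $\lvert\det M\rvert$ by more than $C$, and $\lvert\det M\rvert\leq D^\star$) is sound.

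The one place your write-up does not hold together is the runtime accounting. You bound the number of swaps by $\log_C(D^\star/D_0)=\mathcal{O}(d\log_C d)$ and charge $\mathcal{O}(nd^2)$ for each full scan that either finds a violating pair or certifies termination; multiplying these gives $\mathcal{O}(nd^3\log_C d)$, not the $\mathcal{O}(nd^2\log_C d)$ you (and the lemma statement) claim. To actually reach $\mathcal{O}(nd^2\log_C d)$ you need an amortized implementation: e.g., compute and store the coefficient vectors $\bm{c}(\bm{x})=M^{-1}\bm{x}$ for all $n$ points once in $\mathcal{O}(nd^2)$, and after each swap update every stored $\bm{c}(\bm{x})$ via the rank-one (Sherman--Morrison) formula in $\mathcal{O}(d)$ per point, i.e.\ $\mathcal{O}(nd)$ per swap, which combined with $\mathcal{O}(d\log_C d)$ swaps gives the stated bound. (For reference, Awerbuch and Kleinberg state their result as $\mathcal{O}(d^2\log_C d)$ linear-optimization oracle calls, so the per-call cost over a finite set has to be handled with some such care to match the complexity quoted in the paper.) Either add the amortization or weaken the claimed runtime; as written, the final complexity does not follow from the two bounds you establish.
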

Applying this lemma, we can compute a $2$-approximate barycentric spanner $\{ \bm\varphi(s_j, a_j) \}_{j \in \mathcal{I}}$ for $\{ \bm\varphi(s_k, a_k) \}_{k = 1}^n$ where $\mathcal{I} \subseteq \{ 1, \dots, n\}$ is an index set of size $d$.
Given any $\bm{c} \in [-B, B]^n$, we can convert it to $\bm{c}' \in [-2B, 2B]^n$ with $c_k'$ nonzero only if $k \in \mathcal{I}$ such that $\bm\lambda(\bm{c}) = \bm\lambda(\bm{c}')$.
This can be seen by
\begin{align*}
\bm\lambda(\bm{c}) &= \frac{1}{n} \sum_{k = 1}^n c_k \bm\varphi(s_k, a_k) \notag \\
&= \sum_{j \in \mathcal{I}} \left(\frac{1}{n} \sum_{k = 1}^n b_{kj} c_k \right) \bm\varphi(s_j, a_j) = \bm\lambda(\bm{c}')
\end{align*}
where the coefficients $b_{kj} \in [-2, 2]$ are such that $\bm\varphi(s_k, a_k) = \sum_{j \in \mathcal{I}} b_{kj} \bm\varphi(s_j, a_j)$, which exist by the fact that $\{ \bm\varphi(s_j, a_j) \}_{j \in \mathcal{I}}$ is a 2-approximate barycentric spanner of $\{ \bm\varphi(s_k, a_k) \}_{k = 1}^n$.
We summarize the definition of the conversion from $\bm{c}$ to $\bm{c}'$ that satisfies $\bm\lambda(\bm{c}) = \bm\lambda(\bm{c}')$.
\begin{definition} \label{definition:conversion}
Given a dataset $\{ (s_k, a_k, s_k') \}_{k = 1}^n$, let $\{ \bm\varphi(s_j, a_j) \}_{j \in \mathcal{I}}$ be a 2-approximate barycentric spanner for $\{ \bm\varphi(s_k, a_k) \}_{k = 1}^n$ with $\vert \mathcal{I} \vert \leq d$.
We define the conversion of $\bm{c} \in \mathbb{R}^n$ to $\bm{c}' \in \mathbb{R}^n$ as
\begin{equation}
c_j' = \begin{cases}
\frac{1}{n} \sum_{k = 1}^n b_{kj} c_k & \text{if}~ j \in \mathcal{I} \\
0 & \text{otherwise}.
\end{cases}
\end{equation}
where $b_{kj}$ are the coefficients such that $\bm\varphi(s_k, a_k) = \sum_{j \in \mathcal{I}} b_{kj} \bm\varphi(s_j, a_j)$ with $b_{kj} = 0$ for $j \notin \mathcal{I}$.
\end{definition}

Given $\bm{c}_t \in [-B, B]^n$ such that $\bm\lambda(\bm{c}_t) \in \mathcal{C}_n(B)$, let $\bm{c}_t' \in [-2B, 2B]^n$ be the conversion such that $\bm\lambda(\bm{c}_t') = \bm\lambda(\bm{c}_t)$ with only the coefficients with indices in $\mathcal{I}$ nonzero.
The converted coefficients $\bm{c}_t' \in \mathbb{R}^n$ live in a low dimensional space $\{ \bm{c}' \in [-2B, 2B]^n : c_j' = 0 ~\text{if} ~j \notin \mathcal{I} \}$ with the log covering number of $\mathcal{O}(d)$.
To use a covering argument, let $\bm{c}_t''$ be the covering center closest to $\bm{c}_t'$.
Then we can decompose the regret of the $\zeta$-player as
\begin{align*}
\textsc{Reg}_t^\zeta
&= \langle \bm\zeta_t - \bm\zeta^{\pi_t}, \bm\Phi^T \bm\mu_{\bm\lambda(\bm{c}_t), \pi_t} - \bm\lambda(\bm{c}_t) \rangle \\
&= \langle \bm\zeta_t - \bm\zeta^{\pi_t}, \bm\Phi^T \bm\mu_{\bm\lambda(\bm{c}_t'), \pi_t} - \bm\Phi^T \bm\mu_{\bm\lambda_t(\bm{c}''_t), \pi_t} \rangle \\
&\quad\quad+ \langle \bm\zeta_t - \bm\zeta^{\pi_t}, \bm\Phi^T \bm\mu_{\bm\lambda(\bm{c}''_t), \pi_t} - \bm\Phi^T \widehat{\bm\mu}_{\bm\lambda(\bm{c}''_t), \pi_t} \rangle \\
&\quad\quad+ \langle \bm\zeta_t - \bm\zeta^{\pi_t}, \bm\Phi^T \widehat{\bm\mu}_{\bm\lambda(\bm{c}_t''), \pi_t} - \bm\Phi^T \widehat{\bm\mu}_{\bm\lambda(\bm{c}_t'), \pi_t} \rangle \\
&\quad\quad+ \langle \bm\zeta_t - \bm\zeta^{\pi_t}, \bm\Phi^T \widehat{\bm\mu}_{\bm\lambda(\bm{c}_t'), \pi_t} - \bm\lambda(\bm{c}_t') \rangle.
\end{align*}
The first term can be bounded since $\bm\lambda(c_t') \approx \bm\lambda_t(\bm{c}_t'')$.
The second term can be bounded using a union bound of the concentration inequalities on $\bm\Phi^T \widehat{\bm\mu}_{\bm\lambda(\bm{c}''), \pi}$ over $\bm{c}''$ in the cover of $[-2B, 2B]^d$.
The third term can be bounded since $c_t' \approx c_t''$.
The last term, interpreted as a regret of the $\zeta$-player against a dynamic action $\bm{\zeta}^{\pi_t}$, can be bounded by a greedy $\zeta$-player that minimizes $\langle \cdot, \bm\Phi^T \widehat{\bm\mu}_{\bm\lambda(\bm{c}_t'), \pi_t} - \bm\lambda(\bm{c}_t') \rangle$.
The greedy strategy requires $\zeta$-player to play after $\lambda$-player and $\pi$-player.
The bounds lead to
$$
\frac{1}{T}\sum_{t = 1}^T \textsc{Reg}_t^\zeta \leq
\mathcal{O}\left(\frac{B d}{1 - \gamma} \sqrt{\frac{\log (B dn / \delta)}{n}}\right).
$$
In summary, we can bound the regret of $\zeta$-player if $\zeta$-player plays $\bm\zeta_t \in \mathbb{B}_d(D_\zeta)$ that minimizes $\langle \cdot, \bm\Phi^T \widehat{\bm\mu}_{\bm\lambda(\bm{c}_t'), \pi_t} - \bm\lambda(\bm{c}_t') \rangle$.
The greedy strategy requires $\zeta$-player to play after $\lambda$-player and $\pi$-player.
See Appendix~\ref{appendix:zeta-regret} for detailed analysis.

\subsection{Bounding Regret of $\lambda$-player}
Using Equation \eqref{eqn:lagrangian2-unconstrained}, the regret of $\lambda$-player simplifies to
\begin{align*}
\textsc{Reg}_t^\lambda &= f(\bm\zeta_t, \bm\lambda^\ast, \pi_t) - f(\bm\zeta_t, \bm\lambda_t, \pi_t) \\
&=
\langle \bm\lambda^\ast - \bm\lambda_t, \underbrace{\bm\theta + \gamma \bm\Psi \bm{v}_{\bm\zeta_t, \pi_t} - \bm\zeta_t}_{= \bm\xi_t} \rangle
\end{align*}
The sum of $\textsc{Reg}_t^\lambda$ over $t = 1, \dots, T$ is the regret of the $\lambda$-player against a fixed action $\lambda^\ast$ where the reward function at time $t$ is $\langle \cdot, \bm\xi_t \rangle$.
From the previous section, we require $\lambda$ player to play before $\zeta$-player, whose play affects $\bm\xi_t$.
Hence, the decision of $\lambda$-player at time $t$ must be made before the knowledge of $\bm\xi_t$.
Assuming for now that $\bm\xi_t$ is known ($\bm\xi_t$ is in fact unknown and needs to be estimated since $\bm\Psi$ is unknown), the regret of $\lambda$-player can be made sublinear in $T$ by employing a no-regret online convex optimization oracle (defined below) on $\mathcal{C}_n(B)$ as long as $\bm\lambda^\ast \in \mathcal{C}_n(B)$.
\begin{definition}\label{def:oco}
An algorithm is called a \textit{no-regret online convex optimization oracle} with respect to a convex set $\mathcal{C}$ if, for
any sequence of convex functions $h_1, \dots, h_T : \mathbb{R}^d \rightarrow [-1, 1]$ and for any $\bm\lambda \in \mathcal{C}$, the sequence of vectors $\bm\lambda_1, \dots, \bm\lambda_T \in \mathcal{C}$ produced by the algorithm satisfies
$$
\frac{1}{T} \sum_{t = 1}^T h_t(\bm\lambda_t) - h_t(\bm\lambda) \leq \epsilon_{\text{opt}}^\lambda(T)
$$
for some $\epsilon_{\text{opt}}^\lambda(T) > 0$ that converges to 0 as $T \rightarrow \infty$.
\end{definition}
The online gradient descent algorithm \parencite{hazan2016introduction} is an example of a computationally efficient online convex optimization oracle.
Employing a no-regret online convex optimization oracle with convex set $\mathcal{C}_n(B)$ on the sequence of functions $\langle \cdot, \bm\xi_t \rangle$, the $\lambda$-player can enjoy a sublinear regret against any fixed $\bm\lambda \in \mathcal{C}_n(B)$.
However, $\bm\lambda^\ast$ may not lie in $\mathcal{C}_n(B)$ for any $B$.
In fact, we can construct an example where $\bm\lambda^\ast$ is not in the span of $\{ \bm\varphi(s_k, a_k) \}_{k = 1}^n$ with probability at least $1/2$ (Lemma~\ref{lemma:impossible}).
To sidestep this problem, we show $\bm\lambda^\ast$ can be approximated by a vector $\widehat{\bm\lambda}^\ast$ in $\mathcal{C}_n(C^\ast)$:
\begin{lemma} \label{lemma:lambda-hat}
Under the concentrability assumption~\ref{assumption:concentrability}, there exists $\widehat{\bm\lambda}^\ast \in \mathbb{R}^d$ of the form $\widehat{\bm\lambda}^\ast = \frac{1}{n} \sum_{k = 1}^n c_k \bm\varphi(s_k, a_k)$ with $c_k \in [0, C^\ast]$, $k = 1, \dots, n$ such that
$$
\Vert \widehat{\bm\lambda}^\ast - \bm\lambda^\ast \Vert_2 \leq \mathcal{O}\left(C^\ast \sqrt{\textstyle \frac{\log(d / \delta)}{n}} \right)
$$
with probability at least $1 - \delta$.
\end{lemma}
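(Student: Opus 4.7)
}
The natural construction is importance sampling against the behavior distribution. Define the coefficients
\[
c_k \;\coloneqq\; \frac{\mu^\ast(s_k, a_k)}{\mu_B(s_k, a_k)} \quad \text{if } \mu_B(s_k,a_k) > 0, \qquad c_k \coloneqq 0 \text{ otherwise},
\]
so that Assumption~\ref{assumption:concentrability} immediately yields $c_k \in [0, C^\ast]$. Set $\widehat{\bm\lambda}^\ast \coloneqq \frac{1}{n}\sum_{k=1}^n c_k \bm\varphi(s_k, a_k)$, which has the required form.

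The first step is to verify that $\widehat{\bm\lambda}^\ast$ is an unbiased estimator of $\bm\lambda^\ast$. Using $(s_k, a_k) \sim \mu_B$ i.i.d. together with $\mu^\ast(s,a) = 0$ whenever $\mu_B(s,a) = 0$ (implied by concentrability), a short calculation gives
\[
\mathbb{E}[c_k \bm\varphi(s_k,a_k)] \;=\; \sum_{s,a : \mu_B(s,a) > 0} \mu_B(s,a)\,\frac{\mu^\ast(s,a)}{\mu_B(s,a)}\,\bm\varphi(s,a) \;=\; \bm\Phi^T \bm\mu^\ast \;=\; \bm\lambda^\ast.
\]

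The second step is to control the deviation $\|\widehat{\bm\lambda}^\ast - \bm\lambda^\ast\|_2$ via a vector concentration inequality. Each summand $X_k \coloneqq c_k \bm\varphi(s_k,a_k)$ is almost surely bounded: $\|X_k\|_2 \leq C^\ast \cdot 1 = C^\ast$, using $\Vert \bm\varphi \Vert_2 \leq 1$. For the variance proxy, the second-moment matrix satisfies
\[
\mathbb{E}[X_k X_k^T] \;=\; \mathbb{E}_{\mu_B}\!\left[c_k^2 \bm\varphi \bm\varphi^T\right] \;\preceq\; C^\ast \,\mathbb{E}_{\mu^\ast}\!\left[\bm\varphi(s,a)\bm\varphi(s,a)^T\right],
\]
whose operator norm is at most $C^\ast \cdot \mathbb{E}_{\mu^\ast}[\Vert \bm\varphi \Vert_2^2] \leq C^\ast$. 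Feeding these bounds into the matrix Bernstein inequality (applied to the Hermitian dilation of $X_k - \mathbb{E}[X_k]$) yields, with probability at least $1-\delta$,
\[
\Vert \widehat{\bm\lambda}^\ast - \bm\lambda^\ast \Vert_2 \;\leq\; \mathcal{O}\!\left(\sqrt{\frac{C^\ast \log(d/\delta)}{n}} + \frac{C^\ast \log(d/\delta)}{n}\right) \;\leq\; \mathcal{O}\!\left(C^\ast \sqrt{\frac{\log(d/\delta)}{n}}\right),
\]
where the last inequality uses $C^\ast \geq 1$ (since $\mu^\ast$ and $\mu_B$ are probability measures) and simplifies the Bernstein variance term by crudely bounding $\sqrt{C^\ast} \leq C^\ast$.

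The only non-routine part of the argument is choosing a concentration tool that produces the desired dimension dependence. A naive coordinate-wise Hoeffding plus union bound would yield an additional $\sqrt{d}$ factor; matrix Bernstein (or an equivalent vector Bernstein bound via Hermitian dilation) is needed to obtain the advertised $\log d$ in the place of $d$. The construction itself and the unbiasedness check are essentially immediate from the concentrability assumption.
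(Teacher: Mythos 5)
Your proposal is correct and follows essentially the same route as the paper: importance-weighted coefficients $c_k = \mu^\ast(s_k,a_k)/\mu_B(s_k,a_k) \in [0, C^\ast]$, an unbiasedness check, and matrix Bernstein applied to the centered summands. The only difference is cosmetic: you use the slightly sharper variance bound $\Vert \mathbb{E}[X_k X_k^T] \Vert \leq C^\ast$ (versus the paper's $(C^\ast)^2$) and then relax it via $C^\ast \geq 1$, landing at the same stated rate.
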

Note that this lemma is the only place the data coverage assumption is needed for our analysis and this is where we require choosing $B = C^\ast$.
Also, in our algorithm, computing $\widehat{\bm\lambda}^\ast$ is not needed.
Only the existence of such a vector $\widehat{\bm\lambda}^\ast$ is needed in the analysis.
With the lemma above, we can approximate the regret as
\begin{align*}
\textsc{Reg}_t^\lambda &=
\langle \bm\lambda^\ast - \bm\lambda_t, \bm\theta + \gamma \bm\Psi \bm{v}_{\bm\zeta_t, \pi_t} - \bm\zeta_t \rangle \\
&\approx
\langle \widehat{\bm\lambda}^\ast - \bm\lambda_t, \bm\theta + \gamma \bm\Psi \bm{v}_{\bm\zeta_t, \pi_t} - \bm\zeta_t \rangle
\end{align*}
and argue that the sum of the above quantity over $t = 1, \dots, T$ is sublinear since the regret against $\widehat{\bm\lambda}^\ast \in \mathcal{C}_n(C^\ast)$ is sublinear when employing a no-regret online convex optimization oracle.
Now, we deal with the fact that the term $\bm\Psi \bm{v}_{\bm\zeta_t, \pi_t}$ is unknown and needs to be estimated.
Observing that $\langle \bm\varphi(s, a), \bm\Psi \bm{v} \rangle = \mathbb{E}_{s' \sim P(\cdot | s, a)}[\bm{v}(s')]$, we can estimate $\bm\Psi \bm{v} \in \mathbb{R}^d$ for any $\bm{v} \in \mathbb{R}^{\vert \mathcal{S} \vert}$ by regressing $v(s')$ on $\bm\varphi(s, a)$ using the triplets $(s, a, s')$ in the dataset $\mathcal{D}$.
Following the literature on linear bandits \parencite{abbasi2011improved}, we use the regularized least squares estimate
\begin{equation} \label{eqn:least-squares}
\widehat{\bm\Psi \bm{v}} \coloneqq (n\widehat{\bm\Lambda}_n + \bm{I})^{-1} \textstyle \sum_{k = 1}^n v(s_k') \bm\varphi(s_k, a_k)
\end{equation}
where $\widehat{\bm\Lambda}_n \coloneqq \frac{1}{n} \sum_{k = 1}^n \bm\varphi(s_k, a_k) \bm\varphi(s_k, a_k)^T$ is the empirical Gram matrix.
By the well-known result for linear bandits (e.g. Theorem 2 in \textcite{abbasi2011improved}), we have the following high-probability concentration bound (Lemma~\ref{lemma:ls}) for the estimate $\widehat{\bm\Psi \bm{v}}$ where $v : \mathcal{S} \rightarrow [0, D_v]$:
$$
\Vert \bm\Psi \bm{v} - \widehat{\bm\Psi \bm{v}} \Vert_{n \widehat{\bm\Lambda}_n + \bm{I}}
\leq \mathcal{O}\left(
D_v \sqrt{d \log (n / \delta)}
\right).
$$
Since we need concentration bound of $\widehat{\bm\Psi \bm{v}}_{\bm\zeta_t, \pi_t}$ where $\bm{v}_{\bm\zeta_t, \pi_t}$ are random, we need a uniform bound over all possible functions $v_{\bm\zeta_t, \pi_t}$.
Since the domain of $v$ has cardinality $\vert \mathcal{S} \vert$, a naive covering argument on the function space of $v$ will make the bound scale with $\text{poly}(\vert \mathcal{S} \vert)$.
To avoid this, we use a careful covering argument exploiting the fact that $\pi_t$ is a softmax function parameterized by a $d$-dimensional vector and $\bm\zeta_t$ are $d$-dimensional vectors.
With covering, we can show the following uniform concentration bound.
\begin{lemma} \label{lemma:ls-uniform}
Consider a function class
$$
\mathcal{V} = \left\{ v_{\bm\zeta, \pi} \in (\mathcal{S} \rightarrow [0, \textstyle D_v]) : \bm\zeta \in \mathbb{B}(D_\zeta), \pi \in \Pi(D_\pi) \right\}.
$$
With probability at least $1 - \delta$, we have
$$
\Vert \bm\Psi \bm{v} - \widehat{\bm\Psi \bm{v}} \Vert_{n \widehat{\bm\Lambda}_n + \bm{I}}
\leq \mathcal{O}\left(
D_v \sqrt{d\log(D_\zeta D_\pi n / \delta)}
\right)
$$
uniformly over $\bm{v} \in \mathcal{V}$ where $\widehat{\bm\Psi \bm{v}}$ is the least squares estimate defined in \eqref{eqn:least-squares}.
\end{lemma}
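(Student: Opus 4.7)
The plan is to combine the pointwise concentration bound of Lemma~\ref{lemma:ls} with an $\epsilon$-net covering argument, exploiting the fact that every element of $\mathcal{V}$ is parameterized by two low-dimensional vectors: the value-function parameter $\bm\zeta\in\mathbb{B}_d(D_\zeta)$ and the softmax parameter $\bm z\in\mathbb{B}_d(D_\pi)$ inducing $\pi=\sigma(\bm\Phi\bm z)\in\Pi(D_\pi)$. I would construct an $\epsilon$-net $\mathcal{N}_\zeta\subset\mathbb{B}_d(D_\zeta)$ and an $\epsilon$-net $\mathcal{N}_z\subset\mathbb{B}_d(D_\pi)$ in Euclidean norm; by a standard volumetric bound their cardinalities satisfy $\log|\mathcal{N}_\zeta|=\mathcal{O}(d\log(D_\zeta/\epsilon))$ and $\log|\mathcal{N}_z|=\mathcal{O}(d\log(D_\pi/\epsilon))$ (this is exactly the structure of Lemma~\ref{lemma:cover-pi}).

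The first workhorse is a sup-norm Lipschitz estimate on $v_{\bm\zeta,\pi}$ in its parameters. Writing $\pi=\sigma(\bm\Phi\bm z)$, for any two parameter pairs I would bound
\begin{align*}
\vert v_{\bm\zeta,\pi}(s)-v_{\bm\zeta',\pi'}(s)\vert
&\le \Bigl\vert\sum_a\pi(a|s)\langle\bm\zeta-\bm\zeta',\bm\varphi(s,a)\rangle\Bigr\vert
+\Bigl\vert\sum_a(\pi(a|s)-\pi'(a|s))\langle\bm\zeta',\bm\varphi(s,a)\rangle\Bigr\vert\\
&\le \Vert\bm\zeta-\bm\zeta'\Vert_2+D_\zeta\Vert\pi(\cdot|s)-\pi'(\cdot|s)\Vert_1,
\end{align*}
using $\Vert\bm\varphi\Vert_2\le 1$. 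For the softmax, the logits $\langle\bm z,\bm\varphi(s,a)\rangle$ differ by at most $\Vert\bm z-\bm z'\Vert_2$ in $\ell_\infty$, and the standard fact $\Vert\sigma(\bm q)-\sigma(\bm q')\Vert_1\le 2\Vert\bm q-\bm q'\Vert_\infty$ gives $\Vert\pi(\cdot|s)-\pi'(\cdot|s)\Vert_1\le 2\Vert\bm z-\bm z'\Vert_2$. Therefore $\Vert v_{\bm\zeta,\pi}-v_{\bm\zeta',\pi'}\Vert_\infty \le \Vert\bm\zeta-\bm\zeta'\Vert_2+2D_\zeta\Vert\bm z-\bm z'\Vert_2$, which for $(\bm\zeta',\bm z')$ the net representatives of $(\bm\zeta,\bm z)$ is at most $(1+2D_\zeta)\epsilon$.

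The second workhorse converts sup-norm closeness of $v$ into closeness of $\widehat{\bm\Psi\bm v}$ and $\bm\Psi\bm v$ in the norm $\Vert\cdot\Vert_{n\widehat{\bm\Lambda}_n+\bm I}$. From the definition in \eqref{eqn:least-squares},
$\widehat{\bm\Psi\bm v}-\widehat{\bm\Psi\bm v'}=(n\widehat{\bm\Lambda}_n+\bm I)^{-1}\sum_{k=1}^n (v(s_k')-v'(s_k'))\bm\varphi(s_k,a_k)$, so using $\Vert\bm u\Vert_{(n\widehat{\bm\Lambda}_n+\bm I)^{-1}}\le\Vert\bm u\Vert_2$ and $\Vert\bm\varphi\Vert_2\le 1$ one obtains $\Vert\widehat{\bm\Psi\bm v}-\widehat{\bm\Psi\bm v'}\Vert_{n\widehat{\bm\Lambda}_n+\bm I}\le n\Vert v-v'\Vert_\infty$. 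For the true $\bm\Psi\bm v$, Assumption on $\vert\bm\psi\vert(\mathcal{S})$ gives $\Vert\bm\Psi(\bm v-\bm v')\Vert_2\le D_\psi\sqrt{d}\,\Vert v-v'\Vert_\infty$, which yields the matching bound $\Vert\bm\Psi(\bm v-\bm v')\Vert_{n\widehat{\bm\Lambda}_n+\bm I}\le \sqrt{n+1}\,D_\psi\sqrt{d}\,\Vert v-v'\Vert_\infty$. Choosing the net resolution $\epsilon=1/(n D_\zeta)$ (up to constants) makes both discretization errors $\mathcal{O}(1)$, hence negligible next to the target bound $D_v\sqrt{d\log(\cdot)}$.

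Finally I would apply Lemma~\ref{lemma:ls} at each of the $|\mathcal{N}_\zeta|\cdot|\mathcal{N}_z|$ grid pairs with confidence parameter $\delta/(|\mathcal{N}_\zeta||\mathcal{N}_z|)$, union-bound, and combine with the discretization step from the previous paragraph to transfer the bound to an arbitrary $\bm v\in\mathcal{V}$. The log cardinality contributes $\mathcal{O}(d\log(D_\zeta D_\pi n))$ inside the logarithm, producing the claimed rate. The main obstacle, as sketched above, is the softmax Lipschitz step: a naive covering in function space would scale with $\vert\mathcal{S}\vert$, and it is essential to cover in the $d$-dimensional parameter $\bm z$ and then transfer via the softmax $\ell_\infty$-to-$\ell_1$ Lipschitz inequality, making the strong structural assumption that $\pi\in\Pi(D_\pi)$ indispensable.
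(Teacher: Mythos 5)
Your proposal is correct and follows essentially the same route as the paper: the paper also covers $\mathcal{V}$ through the $d$-dimensional parameters (Lemma~\ref{lemma:covering}, which combines a Euclidean net on $\mathbb{B}(D_\zeta)$ with the softmax-parameter cover and Lipschitz bound of Lemma~\ref{lemma:cover-pi}), union-bounds Lemma~\ref{lemma:ls} over the net, and transfers to arbitrary $\bm{v}\in\mathcal{V}$ by bounding $\Vert\bm\Psi(\bm v-\bm v')\Vert_{n\widehat{\bm\Lambda}_n+\bm I}$ and $\Vert\widehat{\bm\Psi\bm v}-\widehat{\bm\Psi\bm v'}\Vert_{n\widehat{\bm\Lambda}_n+\bm I}$ by $\mathcal{O}(1)$ exactly as you do, with only cosmetic differences in the net resolution ($1/n$ versus $1/(nD_\zeta)$) and Lipschitz constants.
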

See Lemma~\ref{lemma:ls-uniform} in the Appendix for detail.
With the uniform concentration bound, we can continue bounding the regret of $\lambda$-player as follows.
\begin{align*}
&\langle \widehat{\bm\lambda}^\ast - \bm\lambda_t, \bm\theta + \gamma \bm\Psi \bm{v}_{\bm\zeta_t, \pi_t} - \bm\zeta_t \rangle \\
&= \langle \widehat{\bm\lambda}^\ast - \bm\lambda_t, \bm\theta + \gamma \widehat{\bm\Psi \bm{v}}_{\bm\zeta_t, \pi_t} - \bm\zeta_t \rangle \\
&\quad\quad+ \gamma \langle \widehat{\bm\lambda}^\ast - \bm\lambda_t, \bm\Psi \bm{v}_{\bm\zeta_t, \pi_t} - \widehat{\bm\Psi \bm{v}}_{\bm\zeta_t, \pi_t} \rangle.
\end{align*}
The sum of the first term across $t = 1, \dots, T$ can be bounded by employing online convex optimization algorithm.
We can bound the second term as follows.
\begin{align*}
\langle &\widehat{\bm\lambda}^\ast - \bm\lambda_t, \bm\Psi \bm{v}_{\bm\zeta_t, \pi_t} - \widehat{\bm\Psi \bm{v}}_{\bm\zeta_t, \pi_t} \rangle \\
&\leq
\Vert \widehat{\bm\lambda}^\ast - \bm\lambda_t \Vert_{(\widehat{\bm\Lambda}_n + \bm{I} / n)^{-1}} \Vert \bm\Psi \bm{v}_{\bm\zeta_t, \pi_t} - \widehat{\bm\Psi \bm{v}}_{\bm\zeta_t, \pi_t} \Vert_{\widehat{\bm\Lambda}_n + \bm{I} / n} \\
&\leq
\underbrace{\Vert \widehat{\bm\lambda}^\ast - \bm\lambda_t \Vert_{\widehat{\bm\Lambda}_n^\dagger}}_{(i)} \underbrace{\Vert \bm\Psi \bm{v}_{\bm\zeta_t, \pi_t} - \widehat{\bm\Psi \bm{v}}_{\bm\zeta_t, \pi_t} \Vert_{\widehat{\bm\Lambda}_n + \bm{I} / n}}_{(ii)}
\end{align*}
where the second inequality follows since $\widehat{\bm\lambda}^\ast - \bm\lambda_t$ is in the column space of $\widehat{\bm\Lambda}_n$.
$(ii)$ can be bounded using Lemma~\ref{lemma:ls-uniform}.
$(i)$ can be bounded by the following technical lemma.
See Appendix~\ref{appendix:confidence1} for a proof.

\begin{lemma} \label{lemma:confidence1}
For any $\bm\lambda(\bm{c}) = \frac{1}{n} \sum_{k = 1}^n c_k \bm\varphi(s_k, a_k)$ with $c_k \in [-B, B]$, $k = 1, \dots, n$, we have
$$
\Vert \bm\lambda(\bm{c}) \Vert_{\widehat{\bm\Lambda}_n^\dagger}^2 \leq d B^2.
$$
\end{lemma}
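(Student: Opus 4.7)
The plan is to prove the lemma by directly manipulating the Gram matrix and its pseudoinverse using the design matrix. Writing $\bm\Phi_n \in \mathbb{R}^{n \times d}$ for the matrix whose $k$-th row is $\bm\varphi(s_k,a_k)^T$, we have the identities $\widehat{\bm\Lambda}_n = \tfrac{1}{n} \bm\Phi_n^T \bm\Phi_n$ and $\bm\lambda(\bm c) = \tfrac{1}{n}\bm\Phi_n^T \bm c$. Since the pseudoinverse satisfies $\widehat{\bm\Lambda}_n^\dagger = n (\bm\Phi_n^T \bm\Phi_n)^\dagger$, substituting gives
\[
\Vert \bm\lambda(\bm c) \Vert_{\widehat{\bm\Lambda}_n^\dagger}^2
= \frac{1}{n}\, \bm c^T \bm\Phi_n (\bm\Phi_n^T \bm\Phi_n)^\dagger \bm\Phi_n^T \bm c.
\]

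The key structural observation is that $\bm\Phi_n (\bm\Phi_n^T \bm\Phi_n)^\dagger \bm\Phi_n^T$ is the orthogonal projection matrix onto the column space of $\bm\Phi_n$ in $\mathbb{R}^n$. This is a standard SVD computation: writing $\bm\Phi_n = U \Sigma V^T$ in compact form, one checks that $\bm\Phi_n (\bm\Phi_n^T \bm\Phi_n)^\dagger \bm\Phi_n^T = U U^T$, which is a projection and therefore has operator norm at most $1$.

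From here the conclusion is immediate: the quadratic form satisfies
\[
\bm c^T \bm\Phi_n (\bm\Phi_n^T \bm\Phi_n)^\dagger \bm\Phi_n^T \bm c \leq \Vert \bm c \Vert_2^2 \leq n B^2,
\]
where the last step uses the coordinate-wise bound $|c_k| \leq B$. Dividing by $n$ yields $\Vert \bm\lambda(\bm c)\Vert_{\widehat{\bm\Lambda}_n^\dagger}^2 \leq B^2 \leq d B^2$, which is even slightly stronger than the stated bound.

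The only subtle point — and the place where one has to be careful rather than handwave — is justifying the pseudoinverse identity $\widehat{\bm\Lambda}_n^\dagger = n(\bm\Phi_n^T\bm\Phi_n)^\dagger$ and the projection identity when $\bm\Phi_n^T\bm\Phi_n$ is rank-deficient (which is genuinely possible when $n$ is small or the features are degenerate). Both follow cleanly from the compact SVD, so I would state the SVD explicitly and verify the two identities by direct substitution rather than appeal to the formula for invertible matrices. No randomness or coverage assumptions are needed; the lemma is purely an algebraic fact about vectors living in the span of the observed features.
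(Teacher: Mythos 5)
Your proof is correct, and it takes a different route from the paper's, with a tighter conclusion. The paper eigendecomposes $\widehat{\bm\Lambda}_n = \bm{U}\bm{D}\bm{U}^T$ in $\mathbb{R}^d$, expands $\bm\lambda(\bm{c})$ in the eigenbasis, and applies Cauchy--Schwarz separately in each eigendirection, bounding each of the (at most $d'$) terms by $\frac{1}{n}\sum_k c_k^2 \leq B^2$ and then summing, which is where the factor $d$ enters. You instead pass to the $n \times d$ design matrix $\bm\Phi_n$, use $(\alpha M)^\dagger = \alpha^{-1} M^\dagger$ to write the quadratic form as $\frac{1}{n}\, \bm{c}^T \bm\Phi_n (\bm\Phi_n^T \bm\Phi_n)^\dagger \bm\Phi_n^T \bm{c}$, and recognize $\bm\Phi_n (\bm\Phi_n^T \bm\Phi_n)^\dagger \bm\Phi_n^T = \bm{U}\bm{U}^T$ as the orthogonal projection onto $\mathrm{Col}(\bm\Phi_n) \subseteq \mathbb{R}^n$ — an identity that indeed survives rank deficiency via the compact SVD, as you note. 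This exploits the orthogonality of the left singular vectors globally rather than direction by direction, so the sum $\sum_i (\bm{c}^T\bm{u}_i)^2$ is bounded by $\Vert\bm{c}\Vert_2^2$ once rather than $d$ times, yielding $\Vert \bm\lambda(\bm{c}) \Vert_{\widehat{\bm\Lambda}_n^\dagger}^2 \leq B^2$, strictly stronger than the stated $dB^2$. Your handling of the pseudoinverse scaling and the projection identity is the only delicate point and you treat it correctly. As a side remark, if your sharper bound were propagated into the $\lambda$-player analysis (where the paper uses $\Vert \widehat{\bm\lambda}^\ast - \bm\lambda(\bm{c}_t) \Vert_{\widehat{\bm\Lambda}_n^\dagger} \leq 2C^\ast\sqrt{d}$), it would shave a factor $\sqrt{d}$ off that term, though this is not needed to establish the lemma as stated.
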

Combining all the bounds, we get the following.
$$
\frac{1}{T} \sum_{t = 1}^T \textsc{Reg}^\lambda_t \leq \widetilde{\mathcal{O}}\left(\frac{C^\ast d^{3/ 2}}{1 - \gamma} \sqrt{\frac{\log(dnT / \delta)}{n}} \right) + \epsilon_{\text{opt}}^\lambda(T)
$$
where $\widetilde{\mathcal{O}}$ hides $\log\log \vert \mathcal{A} \vert$.
See Appendix~\ref{appendix:regret-lambda} for details.

\section{Algorithm and Main Results} \label{section:algorithm}

\begin{algorithm}
\KwInput{Dataset $\mathcal{D} = \{(s_j, a_j, r_j, s_j') \}_{j = 1}^n$.}
\KwInit{$\pi_1$ uniform, $\bm{c}_1' \leftarrow \bm{0}$, $\alpha \leftarrow \sqrt{\log \vert \mathcal{A} \vert / T}$.}
\For{$t = 1, \dots, T$}{
  $\bm\zeta_t \leftarrow \argmin_{\bm\zeta \in \mathbb{B}_d(D_\zeta)} \langle \bm\zeta, \bm\Phi^T \widehat{\bm\mu}_{\bm\lambda(\bm{c}_t'), \pi_t} - \bm\lambda(\bm{c}_t') \rangle$. \label{alg-line:zeta-unconstrained} \\
  $\bm\lambda(\bm{c}_{t + 1}) \leftarrow \textsc{OCO}(\bm\theta - \bm\zeta_t + \gamma \widehat{\bm\Psi \bm{v}}_{\bm\zeta_t, \pi_t}; \mathcal{C}_n(C^\ast))$. \label{alg-line:lambda-unconstrained} \\
  Convert $\bm{c}_{t + 1}$ to $\bm{c}'_{t + 1}$ using Definition~\ref{definition:conversion}. \\
  $\pi_{t + 1} \leftarrow \sigma(\alpha \sum_{i = 1}^t \bm\Phi \bm\zeta_i)$.
}
\KwReturn{$\bar\pi = \text{Unif}(\pi_1, \dots, \pi_T)$}
\caption{Primal-Dual Algorithm for Offline Linear MDPs}
\label{alg:pdapc}
\end{algorithm}

Motivated by the analysis in the previous section for bounding regrets of the four players, we present a primal-dual algorithm that proceeds in $T$ steps.
At each step, the four players $\lambda$-player, $\zeta$-player, $w$-player, $\pi$-player choose actions $\bm\lambda_t$, $\bm\zeta_t$, $\bm{w}_t$, $\pi_t$, respectively.
Since the analysis in the previous section requires $\zeta$-player and $w$-player to act greedily, we choose $\lambda$-player and $\pi$-player to play $\bm\lambda_t$, $\pi_t$, respectively, before $\zeta$-player and $w$-player play.
The regret analysis of the three players in the previous section leads to our main result in the following theorem.

\begin{theorem} \label{thm:main-unconstrained}
Under Assumptions \ref{assumption:linear-unconstrained} and \ref{assumption:concentrability}, as long as $T$ is at least $\Omega(\frac{d \log \vert \mathcal{A} \vert}{(1 - \gamma)^2 \epsilon^2})$, the policy $\bar\pi$ produced by Algorithm~\ref{alg:pdapc} satisfies $J(\bar\pi) \geq J(\pi^\ast) - \epsilon$ with probability at least $1 - \delta$ for sample size 
$$
n = \mathcal{O}\left(
\frac{(C^\ast)^2 d^3 \log(dn (\log \vert \mathcal{A} \vert) / (\delta \epsilon(1 - \gamma)))}{(1 - \gamma)^2 \epsilon^2}
\right).
$$
\end{theorem}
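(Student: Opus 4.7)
The plan is to combine the three regret bounds already established in the excerpt and convert them into a sample-complexity statement for $\bar\pi$. Since $\bar\pi$ draws $\pi_t$ uniformly at random and commits to that policy for the entire trajectory, $J$ is linear in this mixture, giving $J(\bar\pi)=\tfrac{1}{T}\sum_{t=1}^T J(\pi_t)$. Together with the telescoping decomposition from the analysis section,
\begin{align*}
J(\pi^\ast) - J(\bar\pi)
= \frac{1}{T}\sum_{t=1}^T\bigl[\textsc{Reg}_t^\pi + \textsc{Reg}_t^\lambda + \textsc{Reg}_t^\zeta\bigr],
\end{align*}
it suffices to make each average regret at most $\epsilon/3$.

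Next, I would instantiate the players exactly as Algorithm~\ref{alg:pdapc} prescribes: exponential weights for the $\pi$-player, an online gradient descent oracle over $\mathcal{C}_n(C^\ast)$ for the $\lambda$-player (whose $B=C^\ast$ choice is enabled by Lemma~\ref{lemma:lambda-hat}), and greedy minimization over $\mathbb{B}_d(D_\zeta)$ for the $\zeta$-player. Plugging in the bounds from the preceding subsections, the three average regrets are
\begin{align*}
\tfrac{1}{T}\sum_{t=1}^T\textsc{Reg}_t^\pi &\leq \mathcal{O}\!\left(\tfrac{1}{1-\gamma}\sqrt{\tfrac{d \log |\mathcal{A}|}{T}}\right), \\
\tfrac{1}{T}\sum_{t=1}^T\textsc{Reg}_t^\zeta &\leq \mathcal{O}\!\left(\tfrac{C^\ast d}{1-\gamma}\sqrt{\tfrac{\log(C^\ast d n /\delta)}{n}}\right), \\
\tfrac{1}{T}\sum_{t=1}^T\textsc{Reg}_t^\lambda &\leq \widetilde{\mathcal{O}}\!\left(\tfrac{C^\ast d^{3/2}}{1-\gamma}\sqrt{\tfrac{\log(dnT/\delta)}{n}}\right) + \epsilon_{\text{opt}}^\lambda(T).
\end{align*}
For an online gradient descent oracle on $\mathcal{C}_n(C^\ast)$ (which is bounded because the feature vectors and coefficients are bounded) with bounded gradients (uniform in $t$ since $\bm\zeta_t \in \mathbb{B}_d(D_\zeta)$, $\bm\theta$ is bounded, and $\widehat{\bm\Psi\bm{v}}_{\bm\zeta_t,\pi_t}$ admits a similar bound), $\epsilon_{\text{opt}}^\lambda(T)=\mathcal{O}(1/\sqrt{T})$ up to problem-dependent constants.

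Finally, I would tune $T$ and $n$. Forcing the $\pi$-regret (and the OCO term) to be $\mathcal{O}(\epsilon)$ requires $T = \Omega\bigl(\tfrac{d\log|\mathcal{A}|}{(1-\gamma)^2 \epsilon^2}\bigr)$, which matches the hypothesis of the theorem. The dominant statistical contribution is the $\lambda$-regret, whose $d^{3/2}$ prefactor produces the $(C^\ast)^2 d^3$ numerator in the claimed $n$; the $\zeta$-regret, whose prefactor is only $d$, contributes at most a $(C^\ast)^2 d^2$ requirement that is absorbed. A single union bound across the high-probability events invoked in Lemmas~\ref{lemma:phiTmu}, \ref{lemma:ls-uniform}, and \ref{lemma:lambda-hat} (with their respective covering arguments over $\Pi(D_\pi)$ and over the $d$-dimensional barycentric-spanned parametrization of $\mathcal{C}_n(C^\ast)$) yields the overall $1-\delta$ confidence after absorbing $\log(1/\delta)$ into the logarithmic factor inside $n$. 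The main obstacle is really careful bookkeeping: the $\log$ factor on the right-hand side depends on $n$ itself, so one must close a standard self-bounding inequality, and one must verify that all three regret bounds hold on the same draw of the dataset so that a single union bound suffices without inflating the failure budget.
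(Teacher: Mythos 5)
Your proposal follows essentially the same route as the paper: the paper proves Theorem~\ref{thm:main-unconstrained} by exactly this three-way regret decomposition, the mirror-descent bound for the $\pi$-player, Lemma~\ref{lemma:regret-zeta} for the $\zeta$-player, and Lemma~\ref{lemma:lambda-regret} (which already incorporates Lemmas~\ref{lemma:lambda-hat}, \ref{lemma:confidence1}, and \ref{lemma:ls-uniform}) for the $\lambda$-player, then tunes $T = \Omega\bigl(\tfrac{d\log|\mathcal{A}|}{(1-\gamma)^2\epsilon^2}\bigr)$ and lets the dominant $C^\ast d^{3/2}/\bigl((1-\gamma)\sqrt{n}\bigr)$ term dictate $n$, with the union bound and the implicit $\log n$ dependence handled just as you describe. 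Your accounting of the oracle term $\epsilon^{\lambda}_{\text{opt}}(T)=\mathcal{O}(1/\sqrt{T})$ and of the mixture identity $J(\bar\pi)=\tfrac{1}{T}\sum_t J(\pi_t)$ matches the paper's (largely implicit) treatment, so the argument is correct as given.
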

Our work is an improvement over the work by \textcite{gabbianelli2023offline} who give $\widetilde{\mathcal{O}}(\frac{(C^\ast)^2 d^2 \log \vert \mathcal{A} \vert}{(1 - \gamma)^4 \epsilon^2})$ sample complexity.

\subsection{Result on Feature Coverage Assumptions} \label{section:feature-coverage}
The discussion so far uses the concentrability assumption (Assumption~\ref{assumption:concentrability}) that requires $\mu^\ast(s, a) / \mu(s, a) \leq C^\ast$ for all $s \in \mathcal{S}$ and $a \in \mathcal{A}$.
We present a result that requires a feature coverage assumption instead.
We use the definition of feature coverage used in \textcite{gabbianelli2023offline}:
\begin{assumption}[Feature coverage] \label{assumption:feature-coverage}
For an optimal policy $\pi^\ast$, we have 
$$
(\bm\lambda^\ast)^T (\bm\Lambda^\dagger)^2 \bm\lambda^\ast \leq C^\ast ~~\text{and}~~ \bm\lambda^\ast \in \text{Col}(\bm\Lambda)
$$
where $\bm\lambda^\ast \coloneqq \mathbb{E}_{\mu^\ast}[\bm\varphi(s, a)]$ and $\text{Col}(\cdot)$ is the column space.
\end{assumption}

The feature coverage assumption requires $\bm\lambda^\ast$, the expected occupancy of the target policy in the feature space, to be covered by covariance matrix induced by the data distribution $\mu_B$.
Under the feature coverage assumption, $\bm\lambda^\ast$ can be approximated by a linear combination of $\bm\varphi(s_k, a_k)$, $k = 1, \dots, n$ (Lemma~\ref{lemma:lambda-hat2}).
This result is analogous to Lemma~\ref{lemma:lambda-hat} that uses concentrability assumption instead.
It follows that the result in Theorem~\ref{thm:main-unconstrained} with the concentrability assumption (Assumption~\ref{assumption:concentrability}) replaced by the feature coverage assumption (Assumption~\ref{assumption:feature-coverage}).
A limitation of our work is that we use a stronger notion of feature coverage compared to the one used by \textcite{gabbianelli2023offline}, who assume $(\bm\lambda^\ast)^T (\bm\Lambda^\dagger) \bm\lambda^\ast$ is bounded. However, they require the knowledge of $\bm\Lambda$ and their sample complexity is $\widetilde{\mathcal{O}}(\epsilon^{-4})$.
We leave design and analysis of algorithm using the weaker notion of feature coverage to future work.

\section{Extension to Offline Constrained RL}

We now consider a \textit{constrained} Markov decision process (CMDP) $\mathcal{M} = (\mathcal{S}, \mathcal{A}, P, \{ r_i \}_{i = 0}^I, \gamma, \nu_0)$.
It is the same setting as the MDP setting except that now we have multiple reward functions $r_i$, $i = 0, \dots, I$.
We define the normalized expected cumulative rewards for $r_0, \dots, r_I$:
$$
J_i(\pi) \coloneqq (1 - \gamma) \mathbb{E}^\pi \left[\sum_{t = 0}^\infty \gamma^t r_i(s_t, a_t) \right].
$$

Constrained RL aims to find a policy $\pi : \mathcal{S} \rightarrow \Delta(\mathcal{A})$ that maximizes the reward signal $r_0$ subject to the constraints on other reward signals $r_i$, $i = 1, \dots, I$.
Specifically, given thresholds $\bm\tau = (\tau_1, \dots, \tau_I)$, the goal is to find $\pi$ that solves the following optimization problem denoted by $\mathcal{P}(\bm\tau)$.
\begin{equation} \label{eqn:opt} \tag{OPT}
\begin{aligned}
\max_{\pi} \quad &J_0(\pi) \\
\text{subject to} \quad &J_i(\pi) \geq \tau_i,~~ i = 1, \dots, I.
\end{aligned}
\end{equation}

We assume the following Slater's condition, a commonly made assumption in constrained RL \parencite{le2019batch,chen2021primal,bai2023achieving,ding2020natural} for ensuring strong duality of the optimization problem.
\begin{assumption}[Slater's condition] \label{assumption:slater}
There exist a constant $\phi > 0$ and a policy $\pi$ such that $J_i(\pi) \geq \tau_i + \phi$ for all $i = 1, \dots, I$.
Assume $\phi$ is known.
\end{assumption}
As discussed in \textcite{hong2023primal},
Slater's condition is a mild assumption since given the knowledge of the feasibility of the problem, we can guarantee that Slater's condition is met by slightly loosening the cost threshold.
For sample efficient learning for arbitrarily large state space, we assume the following linear structure on the CMDP.

\begin{assumption}[Linear CMDP] \label{assumption:linear-cmdp}
We assume that the transition and the reward functions can be expressed as a linear function of a \textit{known} feature map $\bm\varphi : \mathcal{S} \times \mathcal{A} \rightarrow \mathbb{R}^d$ such that
$$
r_i(s, a) = \langle \bm\varphi(s, a), \bm\theta_i \rangle,\quad
P(s' | s, a) = \langle \bm\varphi(s, a), \bm\psi(s') \rangle
$$
for all $(s, a, s') \in \mathcal{S} \times \mathcal{A} \times \mathcal{S}$ and $i = 1, \dots, I$, where $\bm\theta_i \in \mathbb{R}^d$ are \textit{known} parameters and $\bm\psi = (\psi_1, \dots, \psi_d)$ is a vector of $d$ \textit{unknown} (signed) measures on $\mathcal{S}$.
\end{assumption}
Similarly to the linear MDP setting, we require the data coverage assumption (Assumption~\ref{assumption:concentrability}) where the optimal policy $\pi^\ast$ is optimal for the optimization problem $\mathcal{P}(\bm\tau)$.

Our algorithm for the linear CMDP setting is motivated by the linear programming formulation of the constrained reinforcement learning problem \eqref{eqn:opt}:
$$
\begin{aligned}
\max_{\bm\mu \geq \bm{0}} \quad\quad &\langle \bm{r}_0, \bm{\mu} \rangle \\
\text{subject to} \quad\quad
& \langle \bm{r}_i, \bm\mu \rangle \geq \tau_i, \quad i = 1, \dots, I, \\
&\bm{E}^T \bm{\mu} = (1 - \gamma) \bm{\nu}_0 + \gamma \bm{P}^T \bm\mu.
\end{aligned}
$$
and its dual
$$
\begin{aligned}
\min_{\bm{w} \geq \bm{0}, \bm{v}, \bm\zeta} \quad\quad
& (1 - \gamma) \langle \bm\nu_0, \bm{v} \rangle - \langle \bm{w}, \bm\tau \rangle \\
\text{subject to} \quad\quad
& \bm\zeta = \bm\theta_0 + \bm\Theta \bm{w} +  \gamma \bm\Psi \bm{v} \\
& \bm{E} \bm{v} \geq \bm\Phi \bm\zeta.
\end{aligned}
$$
where we write $\bm\Theta = \begin{bmatrix} \bm\theta_1 & \cdots & \bm\theta_I \end{bmatrix} \in \mathbb{R}^{d \times I}$.

\begin{algorithm}
\KwInput{Dataset $\mathcal{D} = \{(s_j, a_j, r_j, s_j') \}_{j = 1}^n$, $D_w$, $\bm\tau$}
\KwInit{$\pi_1$ uniform, $\bm{c}'_1 \leftarrow \bm{0}$, $\alpha \leftarrow \sqrt{\log \vert \mathcal{A} \vert / T}$.}
\For{$t = 1, \dots, T$}{
  $\bm\zeta_t \leftarrow \argmin_{\bm\zeta \in \mathbb{B}_d(D_\zeta)} \langle \bm\zeta, \bm\Phi^T \widehat{\bm\mu}_{\bm\lambda(\bm{c}_t'), \pi_t} - \bm\lambda(\bm{c}_t') \rangle$. \label{alg-line:zeta-constrained} \\
  $\bm{w}_t \leftarrow \argmin_{\bm{w} \in D_w \bm\Delta^I} \langle \bm{w}, \bm\tau - \bm\Theta^T \bm\lambda_t \rangle$. \\
  $\bm\lambda(\bm{c}_{t + 1}) \leftarrow \textsc{OCO}(\bm\theta_0 - \bm\zeta_t + \bm\Theta \bm{w}_t + \gamma \widehat{\bm\Psi \bm{v}}_{\bm\zeta_t, \pi_t}; \mathcal{C}_n(C^\ast))$. \label{alg-line:lambda-constrained} \\
  Convert $\bm{c}_{t + 1}$ to $\bm{c}'_{t + 1}$ using Definition~\ref{definition:conversion}. \\
  $\pi_{t + 1} \leftarrow \sigma(\alpha \sum_{i = 1}^t \bm\Phi \bm\zeta_i)$.
}
\KwReturn{$\bar\pi = \text{Unif}(\pi_1, \dots, \pi_T)$}
\caption{Primal-Dual Algorithm for Offline Linear CMDPs}
\label{alg:pdapc-constrained}
\end{algorithm}

The structure of our algorithm for the constrained RL setting is similar to that for the unconstrained RL setting.
The difference is that we add the $w$-player that adjusts the weights on the rewards $r_1, \dots, r_I$.
Closely following the analysis for the unconstrained setting, we can show the following sample complexity for the constrained setting.

\begin{theorem} \label{theorem:main}
Under Assumptions~\ref{assumption:concentrability},\ref{assumption:slater} and \ref{assumption:linear-cmdp}, the policy $\bar\pi$ produced by Algorithm~\ref{alg:pdapc} with threshold $\bm\tau$ and $D_w = 1 + \frac{1}{\phi}$ and $T$ at least $\Omega(\frac{d \log \vert \mathcal{A} \vert}{(1 - \gamma)^2 \epsilon^2})$ and large enough such that $\epsilon^\lambda_\text{opt}(T) \leq \epsilon$ satisfies $J_0(\bar\pi) \geq J_0(\pi^\ast) - \epsilon$ and $J_i(\bar\pi) \geq \tau_i - \epsilon$ with probability at least $1 - \delta$ with sample size 
$$
n = \mathcal{O}\left(
\frac{(C^\ast)^2 d^3 \log(dn (\log \vert \mathcal{A} \vert) / (\delta \phi \epsilon (1 - \gamma)))}{(1 - \gamma)^2 \phi^2 \epsilon^2}
\right).
$$
\end{theorem}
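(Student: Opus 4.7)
The plan is to mirror the four-player regret decomposition from the unconstrained analysis, augmenting it with a fourth player, the $w$-player, whose role is to enforce the constraints through dual multipliers. I would begin by writing down the constrained Lagrangian
\begin{align*}
f_c(\bm\lambda, \bm\zeta, \pi, \bm{w})
&= \langle \bm\lambda, \bm\theta_0 + \bm\Theta\bm{w}\rangle - \langle \bm{w}, \bm\tau\rangle + \langle \bm\zeta, \bm\Phi^T\bm\mu_{\bm\lambda,\pi} - \bm\lambda\rangle \\
&= (1-\gamma)\langle \bm\nu_0, \bm{v}_{\bm\zeta,\pi}\rangle + \langle \bm\lambda, \bm\theta_0 + \bm\Theta\bm{w} + \gamma\bm\Psi\bm{v}_{\bm\zeta,\pi} - \bm\zeta\rangle - \langle \bm{w},\bm\tau\rangle,
\end{align*}
verify the two identities $f_c(\bm\zeta^{\pi,\bm{w}}, \bm\lambda, \pi, \bm{w}) = J_0(\pi) + \langle \bm{w}, \bm{J}(\pi) - \bm\tau\rangle$ and $f_c(\bm\zeta, \bm\lambda^\pi, \pi, \bm{w}) = J_0(\pi) + \langle \bm{w}, \bm{J}(\pi) - \bm\tau\rangle$ (where $\bm{J}(\pi) = (J_1,\dots,J_I)(\pi)$), and then decompose, for each $t$,
\begin{align*}
J_0(\pi^\ast) - J_0(\pi_t) - \langle \bm{w}_t, \bm{J}(\pi_t) - \bm\tau\rangle + \langle \bm{w}, \bm{J}(\pi_t) - \bm\tau\rangle
= \textsc{Reg}_t^\pi + \textsc{Reg}_t^\lambda + \textsc{Reg}_t^\zeta + \textsc{Reg}_t^w
\end{align*}
for any comparator $\bm{w} \in D_w\bm\Delta^I$, where $\textsc{Reg}_t^w = \langle \bm{w}_t - \bm{w}, \bm\tau - \bm\Theta^T\bm\lambda_t\rangle$ is the regret of the greedy $w$-player on the simplex $D_w\bm\Delta^I$.

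Next I would check that the three existing regret bounds carry over essentially verbatim. The $\pi$-player regret analysis is unchanged because $\pi_t$ still plays softmax against $\bm\Phi\bm\zeta_t$. The $\zeta$-player regret also reduces to $\langle \bm\zeta_t - \bm\zeta^{\pi_t,\bm{w}_t}, \bm\Phi^T\bm\mu_{\bm\lambda_t,\pi_t} - \bm\lambda_t\rangle$, and since the greedy step on Line~\ref{alg-line:zeta-constrained} is identical to Line~\ref{alg-line:zeta-unconstrained} and $\Vert\bm\zeta^{\pi,\bm{w}}\Vert_2 \le D_\zeta + D_w \Vert\bm\Theta\Vert_{\mathrm{op}}\cdot O(1/(1-\gamma))$ (inflated but still $O(D_w\sqrt{d}/(1-\gamma))$), the same covering / barycentric-spanner bound applies with $D_\zeta$ replaced by the inflated radius. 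The $\lambda$-player regret changes only in that the reward vector is now $\bm\xi_t = \bm\theta_0 + \bm\Theta\bm{w}_t + \gamma\bm\Psi\bm{v}_{\bm\zeta_t,\pi_t} - \bm\zeta_t$; it is still bounded, so the online convex optimization oracle against the comparator $\widehat{\bm\lambda}^\ast$ from Lemma~\ref{lemma:lambda-hat} yields the same $\widetilde{\mathcal{O}}(C^\ast d^{3/2} n^{-1/2}/(1-\gamma))$ bound, plus $\epsilon_{\text{opt}}^\lambda(T)$. Finally, the $w$-player is greedy on a bounded simplex against a bounded linear loss, so $\sum_t \textsc{Reg}_t^w \le 0$ for the argmin choice against any fixed comparator $\bm{w}\in D_w\bm\Delta^I$.

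The final step is the standard Slater-based conversion from primal-dual gap to optimality and feasibility. Averaging over $t$ and setting $\bar\pi = \mathrm{Unif}(\pi_1,\dots,\pi_T)$, the decomposition gives, for every $\bm{w} \in D_w\bm\Delta^I$,
\begin{align*}
J_0(\pi^\ast) - J_0(\bar\pi) + \langle \bm{w} - \bar{\bm{w}}, \bm{J}(\bar\pi) - \bm\tau\rangle \le \epsilon_0
\end{align*}
where $\epsilon_0$ is the sum of the four averaged regret bounds. Plugging $\bm{w} = \bm{0}$ gives $J_0(\pi^\ast) - J_0(\bar\pi) \le \epsilon_0 + D_w\Vert (\bm\tau-\bm{J}(\bar\pi))_+\Vert_\infty$. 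For feasibility, take $\bm{w} = (D_w-1)\bm{e}_{i^\ast}$ where $i^\ast$ indexes the most-violated constraint; using the Slater policy $\tilde\pi$ with $J_i(\tilde\pi) \ge \tau_i + \phi$ as a lower bound on the primal value lets one convert this into $\max_i (\tau_i - J_i(\bar\pi))_+ \le \epsilon_0/(D_w - 1 - 1/\phi \cdot \text{something})$; the choice $D_w = 1 + 1/\phi$ makes this a clean $O(\epsilon_0/\phi)$ bound (this is the textbook Slater-to-feasibility argument, e.g. \textcite{ding2020natural,hong2023primal}). Solving $\epsilon_0 = \widetilde{\mathcal{O}}(C^\ast d^{3/2}/((1-\gamma)\phi\sqrt{n}))$ for $n$ and absorbing $1/\phi$ into the final bound yields the stated sample complexity.

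The main obstacle is the bookkeeping around the $w$-player: verifying that (i) the inflated $\bm\zeta$-radius (now depending on $D_w$) does not blow up any covering argument beyond the stated $d^3$ dependence, and (ii) the Slater-based feasibility conversion produces exactly the $\phi^{-2}$ factor in the sample complexity with the choice $D_w = 1+1/\phi$. Everything else is a direct transcription of the unconstrained proof with $\bm\theta \mapsto \bm\theta_0 + \bm\Theta\bm{w}_t$ in the $\lambda$-player update.
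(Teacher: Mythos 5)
Your overall architecture is the paper's: the same constrained Lagrangian $g$, the same four-player decomposition with a greedy $w$-player, and the same reuse of the $\pi$/$\lambda$/$\zeta$ regret analyses with the radius $D_\zeta$ inflated to $\mathcal{O}(D_w\sqrt{d}/(1-\gamma))$ and the $\lambda$-player's reward shifted by $\bm\Theta\bm{w}_t$. The genuine gap is in the last step, the conversion from the averaged duality gap to optimality and feasibility, which is exactly the point you flag as unverified. Your sketch gives feasibility violation of order $\epsilon_0/\phi$ and an optimality bound polluted by $D_w\Vert(\bm\tau-\bm{J}(\bar\pi))_+\Vert_\infty$; chaining these with $\epsilon_0 = \widetilde{\mathcal{O}}(C^\ast d^{3/2}/((1-\gamma)\phi\sqrt{n}))$ forces extra powers of $1/\phi$ into $n$ and does not yield the stated $\phi^{-2}$ sample complexity. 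The paper closes this with two ingredients you are missing: (i) Lemma~\ref{lemma:dual-variable-bound}, which uses Slater's condition to bound the optimal dual multiplier of $\mathcal{P}(\bm\tau)$ by $\Vert\bm{w}^\ast\Vert_1 \le 1/\phi$, and (ii) Lemma~\ref{lemma:key2} (with $\eta = 0$), which uses strong duality and complementary slackness of $(\pi^\ast,\bm{w}^\ast)$ to show that any $(\bar\pi,\bar{\bm{w}})$ with $L(\pi^\ast,\bar{\bm{w}}) \le L(\bar\pi,\bm{w}) + \xi$ for all $\bm{w}\in D_w\bm\Delta^I$ satisfies $J_0(\bar\pi)\ge J_0(\pi^\ast)-\xi$ and $J_i(\bar\pi)\ge \tau_i - \xi/(D_w - \Vert\bm{w}^\ast\Vert_1)$. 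With $D_w = 1+1/\phi$ the denominator is at least $1$, so the violation is at most $\xi$ itself — no $1/\phi$ inflation — and the only $1/\phi$ in the sample complexity comes from $D_\zeta$ entering the regret bounds, giving precisely $\phi^{-2}$.

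Two smaller corrections in the same step: your averaged inequality should read
\begin{align*}
J_0(\pi^\ast) + \bar{\bm{w}}\cdot(\bm{J}(\pi^\ast)-\bm\tau) - J_0(\bar\pi) - \bm{w}\cdot(\bm{J}(\bar\pi)-\bm\tau) \le \epsilon_0 ,
\end{align*}
i.e., the comparator $\bm{w}$ multiplies $-(\bm{J}(\bar\pi)-\bm\tau)$ (as you wrote it, loading $\bm{w}$ on a violated constraint makes the left side smaller and gives no feasibility information), and the $\bar{\bm{w}}$ term is evaluated at $\pi^\ast$, not $\bar\pi$. Consequently optimality is immediate: setting $\bm{w}=\bm{0}$ and using feasibility of $\pi^\ast$ together with $\bar{\bm{w}}\ge\bm{0}$ gives $J_0(\bar\pi)\ge J_0(\pi^\ast)-\epsilon_0$ with no extra $D_w\Vert(\bm\tau-\bm{J}(\bar\pi))_+\Vert_\infty$ term; the lower bound on $J_0(\pi^\ast)-J_0(\bar\pi)$ needed for feasibility comes not from the learned near-saddle point but from the exact saddle-point property of $(\pi^\ast,\bm{w}^\ast)$, which is where the dual-norm bound enters. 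Your per-step displayed decomposition also mixes up which argument carries $\bm{w}_t$ versus $\bm{w}$, but that is bookkeeping; the substantive missing piece is the dual-variable bound and the Lemma~\ref{lemma:key2}-style conversion.
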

See Appendix~\ref{appendix:constrained} for details.
By tightening the input thresholds for Algorithm~\ref{alg:pdapc} to $\bm\tau + \phi \epsilon \bm{1}$ and assuming a two-policy feature coverage assumption, we can show that the output policy $\bar\pi$ is $\epsilon$-optimal and satisfies the constraints exactly, i.e., $J_i(\bar\pi) \geq \tau_i$, $i = 1, \dots, I$.
See Appendix~\ref{appendix:exact-feasibility} for details.

\section{Conclusion}

In this paper, we propose a computationally efficient primal dual algorithm for offline constrained RL with linear function approximation under partial data coverage.
Our algorithm is the first computationally efficient algorithm to achieve $\mathcal{O}(\epsilon^{-2})$ sample complexity under partial data coverage.
For the partial data coverage assumption, we use the notion of feature coverage.
An interesting future work would be to design an algorithm that allows using a weaker notion of feature coverage in the sample complexity bound.

\section*{Acknowledgement}

We acknowledge the support of NSF via grant IIS-2007055.

\section*{Impact Statement}
This paper presents work whose goal is to advance the field of Reinforcement Learning Theory. There are potential societal consequences of our work, none which we feel must be specifically highlighted here.

\section*{References}
\printbibliography[heading=none]

\newpage
\appendix
\onecolumn

\section{Covering} \label{appendix:covering}

\begin{lemma}[Covering balls. e.g. \textcite{wainwright2019high}] \label{lemma:cover-ball}
For any $\epsilon \in (0, 1)$, we have
$$
\log \mathcal{N}(\mathbb{B}_d(r), \Vert \cdot \Vert_\infty, \epsilon) \leq d \log \left( 1 + \frac{2r}{\epsilon} \right).
$$
\end{lemma}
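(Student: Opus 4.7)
The plan is to use the classical volume-comparison (packing-ball) argument, which is the standard route for this kind of estimate and which is exactly how Wainwright (cited in the statement) handles it. The mild twist here is that the set being covered is the Euclidean ball $\mathbb{B}_d(r)$ while the metric used for the cover is $\|\cdot\|_\infty$; this only helps us, since $\|x\|_\infty \le \|x\|_2 \le r$ implies $\mathbb{B}_d(r) \subseteq [-r,r]^d = r \cdot B_\infty$, where $B_\infty = \{x \in \mathbb{R}^d : \|x\|_\infty \le 1\}$ is the unit $\ell_\infty$-ball.

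First I would pass from covering to packing by the usual reduction: let $\{x_1, \dots, x_N\} \subseteq \mathbb{B}_d(r)$ be a maximal $\epsilon$-separated set in the $\|\cdot\|_\infty$ metric, so that $\|x_i - x_j\|_\infty > \epsilon$ for $i \ne j$. By maximality of the packing, every point of $\mathbb{B}_d(r)$ lies within $\|\cdot\|_\infty$-distance $\epsilon$ of some $x_i$, so $\{x_1, \dots, x_N\}$ is an $\epsilon$-cover. Hence it suffices to bound $N$.

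Next I would run the volume comparison. The open $\|\cdot\|_\infty$-balls of radius $\epsilon/2$ centered at the $x_i$ are pairwise disjoint (since any two centers are $\|\cdot\|_\infty$-separated by more than $\epsilon$), and each is contained in $\mathbb{B}_d(r) + (\epsilon/2) B_\infty \subseteq (r + \epsilon/2) B_\infty$, using the inclusion $\mathbb{B}_d(r) \subseteq r \cdot B_\infty$. Taking Lebesgue volumes and using $\mathrm{vol}(\rho B_\infty) = (2\rho)^d$, disjointness gives
\[
N \cdot \epsilon^d \;=\; N \cdot \mathrm{vol}\!\left(\tfrac{\epsilon}{2} B_\infty\right) \;\le\; \mathrm{vol}\!\left((r + \tfrac{\epsilon}{2}) B_\infty\right) \;=\; (2r + \epsilon)^d,
\]
so $N \le \left(1 + \frac{2r}{\epsilon}\right)^d$, and taking logarithms yields the claim.

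There is no real obstacle here: the only care needed is the initial observation that the Euclidean ball is contained in the $\ell_\infty$-ball of the same radius, so that the volume-comparison argument (which is native to a single norm) applies cleanly once the ambient norm is taken to be $\|\cdot\|_\infty$. The result is then immediate, and the statement's hypothesis $\epsilon \in (0,1)$ is in fact not used in this derivation; the bound holds for all $\epsilon > 0$.
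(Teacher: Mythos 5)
Your proof is correct: the reduction from covering to maximal packing, the disjointness of the $\ell_\infty$-balls of radius $\epsilon/2$, and the volume comparison inside $(r+\epsilon/2)B_\infty$ (using $\mathbb{B}_d(r)\subseteq r B_\infty$) all go through and yield exactly the stated bound $d\log(1+2r/\epsilon)$. The paper gives no proof of this lemma, citing Wainwright instead, and your volume-comparison argument is precisely the standard one behind that citation, with the norm-mismatch handled correctly.
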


\begin{lemma}[Lemma 7 in \textcite{zanette2021provable}]
Consider a feature mapping $\bm\varphi : \mathcal{S} \times \mathcal{A} \rightarrow \mathbb{R}^d$ such that $\Vert \bm\varphi(s, a) \Vert_2 \leq 1$ for all $(s, a) \in \mathcal{S} \times \mathcal{A}$.
Then for all $s \in \mathcal{S}$, we have
$$
\sum_{a \in \mathcal{A}} \vert \pi_{\bm\theta'} (a | s) - \pi_{\bm\theta}(a | s) \vert \leq 8 \Vert \bm\theta - \bm\theta' \Vert_2
$$
for any pair $\bm\theta, \bm\theta' \in \mathbb{R}^d$ such that $\Vert \bm\theta - \bm\theta' \Vert_2 \leq \frac{1}{2}$.
\end{lemma}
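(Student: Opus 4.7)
The plan is to prove the bound by viewing each component $\pi_{\bm\theta}(a \mid s)$ as a smooth function of $\bm\theta$ and integrating its directional derivative along the segment from $\bm\theta$ to $\bm\theta'$. Concretely, I would parameterize $\bm\theta_t = \bm\theta + t(\bm\theta' - \bm\theta)$ for $t \in [0,1]$ and set $g_a(t) = \pi_{\bm\theta_t}(a \mid s)$, so that $\pi_{\bm\theta'}(a \mid s) - \pi_{\bm\theta}(a \mid s) = \int_0^1 g_a'(t)\,dt$ by the fundamental theorem of calculus.

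Next I would compute the well-known softmax gradient
$$
\nabla_{\bm\theta}\,\pi_{\bm\theta}(a \mid s) = \pi_{\bm\theta}(a \mid s)\bigl[\bm\varphi(s,a) - \bar{\bm\varphi}_{\bm\theta}(s)\bigr],\quad \bar{\bm\varphi}_{\bm\theta}(s) \coloneqq \sum_{a'} \pi_{\bm\theta}(a' \mid s)\,\bm\varphi(s,a'),
$$
which gives $g_a'(t) = \pi_{\bm\theta_t}(a \mid s)\,\langle \bm\varphi(s,a) - \bar{\bm\varphi}_{\bm\theta_t}(s),\,\bm\theta' - \bm\theta \rangle$. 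Applying Cauchy--Schwarz together with $\Vert \bm\varphi(s,a) \Vert_2 \leq 1$ and (by Jensen) $\Vert \bar{\bm\varphi}_{\bm\theta_t}(s) \Vert_2 \leq 1$ yields the pointwise estimate $|g_a'(t)| \leq 2\,\pi_{\bm\theta_t}(a \mid s)\,\Vert \bm\theta - \bm\theta' \Vert_2$.

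Finally, I would swap sum and integral and use $\sum_a \pi_{\bm\theta_t}(a \mid s) = 1$ for every $t$:
$$
\sum_a \bigl|\pi_{\bm\theta'}(a \mid s) - \pi_{\bm\theta}(a \mid s)\bigr| \leq \int_0^1 \sum_a |g_a'(t)|\,dt \leq 2\,\Vert \bm\theta - \bm\theta' \Vert_2 \int_0^1 \sum_a \pi_{\bm\theta_t}(a \mid s)\,dt = 2\,\Vert \bm\theta - \bm\theta' \Vert_2,
$$
which is in fact stronger than the stated bound of $8\,\Vert \bm\theta - \bm\theta' \Vert_2$ and holds without the radius restriction $\Vert \bm\theta - \bm\theta' \Vert_2 \leq \tfrac{1}{2}$; the looser constants in the cited statement presumably come from an alternative route (for instance, a Pinsker-type bound passing through the KL divergence with a quadratic Taylor remainder, where the radius assumption is needed to control the Hessian).

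There is no real obstacle here; the only point requiring a line of care is justifying that $\bar{\bm\varphi}_{\bm\theta_t}(s)$ inherits the unit-norm bound, which follows from $\Vert \sum_{a'} \pi_{\bm\theta_t}(a' \mid s)\,\bm\varphi(s,a') \Vert_2 \leq \sum_{a'} \pi_{\bm\theta_t}(a' \mid s)\,\Vert \bm\varphi(s,a') \Vert_2 \leq 1$. Given how transparent the argument is, I would simply present the integration-of-gradient proof and note that the constant in the cited form is not tight.
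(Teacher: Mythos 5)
Your argument is correct, and it in fact establishes a stronger statement than the one quoted: the integration-of-the-gradient route gives constant $2$ and needs no restriction $\Vert \bm\theta - \bm\theta' \Vert_2 \leq \tfrac{1}{2}$. Each step checks out: the softmax gradient identity is the standard one, $\Vert \bm\varphi(s,a) - \bar{\bm\varphi}_{\bm\theta_t}(s) \Vert_2 \leq 2$ follows from the triangle inequality plus the convexity bound you note, and exchanging the (finite) sum over $a$ with the integral is harmless, so $\sum_a \vert \pi_{\bm\theta'}(a|s) - \pi_{\bm\theta}(a|s)\vert \leq 2 \Vert \bm\theta - \bm\theta' \Vert_2$ indeed follows. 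Note that the paper gives no proof of this lemma at all -- it is imported verbatim, constant and radius condition included, from Lemma 7 of \textcite{zanette2021provable}; the original argument there works directly with ratios of exponentials and elementary estimates of the form $\vert e^x - 1\vert \lesssim \vert x \vert$ valid only for small $\vert x \vert$, which is exactly where the factor $8$ and the $\tfrac{1}{2}$-radius hypothesis come from. Your smooth-path argument is cleaner and uniform in $\bm\theta, \bm\theta'$, and since the lemma is only used downstream to bound covering numbers of the softmax class (where any absolute constant suffices), presenting your version with constant $2$ loses nothing; just state explicitly that you are proving a strengthening of the cited bound rather than the bound as quoted.
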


\begin{lemma}[Covering softmax function class. Lemma 6 in \textcite{zanette2021provable}] \label{lemma:cover-pi}
  For any $\epsilon \in (0, 1)$, we have
  $$
  \log \mathcal{N}(\Pi(B), \Vert \cdot \Vert_{\infty, 1}, \epsilon) \leq d \log \left(1 + \frac{16 B}{\epsilon} \right)
  $$
  where the norm $\Vert \cdot \Vert_{\infty, 1}$ is defined by
  $$
  \Vert \pi - \pi' \Vert_{\infty, 1} \coloneqq \sup_{s \in \mathcal{S}} \sum_{a \in \mathcal{A}} \vert \pi(a | s) - \pi'(a | s) \vert.
  $$
\end{lemma}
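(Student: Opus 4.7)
The plan is to reduce the covering problem for $\Pi(B)$ to a covering problem for the parameter ball $\mathbb{B}_d(B)$ in the $\ell_2$ norm, then transport the cover through the softmax map using the Lipschitz bound stated in the immediately preceding lemma (Lemma 7 of Zanette et al.). No new probabilistic or analytic machinery is needed beyond those two ingredients.

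First I fix $\epsilon \in (0,1)$ and set the parameter-space resolution $\epsilon' \coloneqq \epsilon/8$. By the standard volume bound (Lemma~\ref{lemma:cover-ball}; even though it is stated for $\|\cdot\|_\infty$, the same volumetric argument gives the bound $(1 + 2B/\epsilon')^d$ for a $\|\cdot\|_2$-cover of $\mathbb{B}_d(B)$), there exists a finite set $\mathcal{Z} \subset \mathbb{B}_d(B)$ of cardinality at most $(1 + 16 B/\epsilon)^d$ such that every $\bm z \in \mathbb{B}_d(B)$ admits some $\bm z' \in \mathcal{Z}$ with $\|\bm z - \bm z'\|_2 \leq \epsilon'$.

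Next I push this cover forward through $\bm z \mapsto \sigma(\bm\Phi \bm z)$. Because $\epsilon < 1$ we have $\epsilon' \leq 1/8 < 1/2$, so the hypothesis of the preceding softmax Lipschitz lemma is satisfied for every pair $\bm z, \bm z' \in \mathbb{B}_d(B)$ with $\|\bm z - \bm z'\|_2 \leq \epsilon'$ (the lemma uses only the feature bound $\|\bm\varphi(s,a)\|_2 \leq 1$ already imposed in Assumption~\ref{assumption:linear-unconstrained}). Applying it with $\bm\theta = \bm z$ and $\bm\theta' = \bm z'$ yields, for every $s \in \mathcal{S}$,
$$
\sum_{a \in \mathcal{A}} \bigl| \sigma(\bm\Phi \bm z)(a \mid s) - \sigma(\bm\Phi \bm z')(a \mid s) \bigr| \;\leq\; 8 \|\bm z - \bm z'\|_2 \;\leq\; 8 \epsilon' \;=\; \epsilon.
$$
Taking the supremum over $s$ gives $\|\sigma(\bm\Phi \bm z) - \sigma(\bm\Phi \bm z')\|_{\infty,1} \leq \epsilon$, so $\{\sigma(\bm\Phi \bm z) : \bm z \in \mathcal{Z}\}$ is an $\epsilon$-cover of $\Pi(B)$ in the $\|\cdot\|_{\infty,1}$ norm. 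Taking logs of $|\mathcal{Z}| \leq (1 + 16B/\epsilon)^d$ yields the claimed bound.

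The argument is essentially routine given the two preceding lemmas; the only point that needs verification is that the chosen resolution $\epsilon' = \epsilon/8$ is small enough, namely $\epsilon' \leq 1/2$, for the softmax Lipschitz bound to apply, which follows immediately from the hypothesis $\epsilon \in (0,1)$. I therefore do not anticipate any genuine technical obstacle.
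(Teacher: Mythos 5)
Your proof is correct and is exactly the intended argument: the paper states this lemma as a citation to Zanette et al.\ and places the two needed ingredients (the volumetric ball-covering bound and the softmax Lipschitz lemma with constant $8$) immediately before it, and your choice $\epsilon' = \epsilon/8$ reproduces the stated constant $16B/\epsilon$ precisely while satisfying the $\Vert \bm z - \bm z' \Vert_2 \leq 1/2$ hypothesis since $\epsilon < 1$. No gaps.
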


\begin{lemma}[Covering number bound for the space of $v$] \label{lemma:covering}
Consider the function class
$$
\mathcal{V} = \left\{ v_{\bm\zeta, \pi} : \bm\zeta \in \mathbb{B}(D_\zeta), \pi \in \Pi(D_\pi) \right\}
$$
where $v_{\bm\zeta, \pi} : \mathcal{S} \rightarrow \mathbb{R}$ is defined by $v_{\bm\zeta, \pi}(s) = \sum_a \pi(a | s) \langle \bm\zeta, \varphi(s, a) \rangle$.
Then,
$$
\mathcal{N}(\mathcal{V}, \Vert \cdot \Vert_\infty, \epsilon) \leq
\mathcal{N}(\mathbb{B}(D_\zeta), \Vert \cdot \Vert_2, \epsilon / 2) \times
\mathcal{N}(\Pi(D_\pi), \Vert \cdot \Vert_{\infty, 1}, \epsilon / (2 D_\zeta)).
$$ and it follows that
$$
\log \mathcal{N}(\mathcal{V}, \Vert \cdot \Vert_\infty, \epsilon) \leq
\mathcal{O}(d \log (D_\zeta D_\pi / \epsilon)).
$$
\end{lemma}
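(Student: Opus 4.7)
The plan is a standard product-cover argument: cover the $\bm\zeta$ and $\pi$ components separately, take the Cartesian product of the two covers, and control the resulting error on $v_{\bm\zeta,\pi}$ via a two-term triangle inequality that uses $\Vert\bm\varphi(s,a)\Vert_2\le 1$ and $\sum_a \pi(a\mid s)=1$.

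More concretely, I fix an $(\epsilon/2)$-cover $\mathcal{N}_\zeta$ of $\mathbb{B}_d(D_\zeta)$ in $\Vert\cdot\Vert_2$ and an $(\epsilon/(2D_\zeta))$-cover $\mathcal{N}_\pi$ of $\Pi(D_\pi)$ in $\Vert\cdot\Vert_{\infty,1}$. The candidate cover of $\mathcal{V}$ is $\{v_{\bm\zeta',\pi'}:\bm\zeta'\in\mathcal{N}_\zeta,\pi'\in\mathcal{N}_\pi\}$, which has cardinality equal to the product $\vert\mathcal{N}_\zeta\vert\cdot\vert\mathcal{N}_\pi\vert$, giving the claimed bound on $\mathcal{N}(\mathcal{V},\Vert\cdot\Vert_\infty,\epsilon)$. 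For any $v_{\bm\zeta,\pi}\in\mathcal{V}$, choose $\bm\zeta'\in\mathcal{N}_\zeta$ with $\Vert\bm\zeta-\bm\zeta'\Vert_2\le\epsilon/2$ and $\pi'\in\mathcal{N}_\pi$ with $\Vert\pi-\pi'\Vert_{\infty,1}\le\epsilon/(2D_\zeta)$, and split the error at an arbitrary state $s$ as
\begin{align*}
\vert v_{\bm\zeta,\pi}(s)-v_{\bm\zeta',\pi'}(s)\vert
&\le \Bigl\vert\textstyle\sum_a \pi(a\mid s)\langle \bm\zeta-\bm\zeta',\bm\varphi(s,a)\rangle\Bigr\vert
+ \Bigl\vert\textstyle\sum_a (\pi(a\mid s)-\pi'(a\mid s))\langle \bm\zeta',\bm\varphi(s,a)\rangle\Bigr\vert.
\end{align*}
The first term is at most $\Vert\bm\zeta-\bm\zeta'\Vert_2\cdot\sum_a \pi(a\mid s)\Vert\bm\varphi(s,a)\Vert_2\le\epsilon/2$, using $\Vert\bm\varphi\Vert_2\le 1$ and that $\pi(\cdot\mid s)$ is a probability distribution. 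The second term is at most $D_\zeta\cdot\sum_a\vert\pi(a\mid s)-\pi'(a\mid s)\vert\le D_\zeta\cdot\epsilon/(2D_\zeta)=\epsilon/2$. Taking the supremum over $s$ gives $\Vert v_{\bm\zeta,\pi}-v_{\bm\zeta',\pi'}\Vert_\infty\le\epsilon$, proving the product-cover inequality.

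For the logarithmic bound, I invoke Lemma~\ref{lemma:cover-ball} (upgraded to $\Vert\cdot\Vert_2$, which gives the same form $d\log(1+4D_\zeta/\epsilon)$ by a standard volumetric argument) for $\log\mathcal{N}(\mathbb{B}(D_\zeta),\Vert\cdot\Vert_2,\epsilon/2)=\mathcal{O}(d\log(D_\zeta/\epsilon))$, and Lemma~\ref{lemma:cover-pi} for $\log\mathcal{N}(\Pi(D_\pi),\Vert\cdot\Vert_{\infty,1},\epsilon/(2D_\zeta))\le d\log(1+32D_\pi D_\zeta/\epsilon)=\mathcal{O}(d\log(D_\zeta D_\pi/\epsilon))$. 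Summing the two, the $\log(D_\zeta D_\pi/\epsilon)$ term dominates, yielding the claimed $\mathcal{O}(d\log(D_\zeta D_\pi/\epsilon))$ bound.

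There is no real obstacle here; the only point requiring a bit of care is that the softmax cover is in the $\Vert\cdot\Vert_{\infty,1}$ norm while the feature-bound side naturally pairs $\Vert\bm\zeta'\Vert_2$ with this same $\ell_1$ deviation of $\pi$ via $\vert\langle\bm\zeta',\bm\varphi(s,a)\rangle\vert\le D_\zeta$, which is exactly why the resolution $\epsilon/(2D_\zeta)$ was chosen.
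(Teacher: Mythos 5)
Your proposal is correct and follows essentially the same argument as the paper: form the product of an $(\epsilon/2)$-cover of $\mathbb{B}(D_\zeta)$ in $\Vert\cdot\Vert_2$ and an $(\epsilon/(2D_\zeta))$-cover of $\Pi(D_\pi)$ in $\Vert\cdot\Vert_{\infty,1}$, then bound $\vert v_{\bm\zeta,\pi}(s)-v_{\bm\zeta',\pi'}(s)\vert$ by a two-term triangle inequality using $\Vert\bm\varphi(s,a)\Vert_2\le 1$ and $\sum_a\pi(a\mid s)=1$. The only cosmetic difference is which factor you perturb in each term of the split, and your explicit remark about upgrading the ball-covering lemma to the $\Vert\cdot\Vert_2$ norm is a harmless clarification of a step the paper leaves implicit.
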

\begin{proof}
Consider $\mathcal{C}_v = \{ v_{\bm\zeta, \pi} \in (\mathcal{S} \rightarrow [0, \frac{1}{1 - \gamma}]) : \bm\zeta \in \mathcal{C}_\zeta, \pi \in \mathcal{C}_\pi \}$ where $\mathcal{C}_\zeta$ is an $\epsilon / 2$-cover of $\mathbb{B}(D_\zeta)$ with respect to $\Vert \cdot \Vert_2$ and $\mathcal{C}_\pi$ is an $\epsilon / (2 D_\zeta)$-cover of $\Pi(D_\pi)$ with respect to $\Vert \cdot \Vert_{\infty, 1}$.
Such covers with $\vert \mathcal{C}_\zeta \vert \leq (1 + 4 D_\zeta / \epsilon)^d$ and $\vert \mathcal{C}_\pi \vert \leq (1 + 32 D_\zeta D_\pi / \epsilon)^d$ exist by previous lemmas.
Consider any $v_{\bm\zeta, \pi} \in \mathcal{V}$. Then, there exists $\bm\zeta' \in \mathcal{C}_\zeta$ and $\pi' \in \mathcal{C}_\pi$ with $\Vert \bm\zeta - \bm\zeta' \Vert_2 \leq \epsilon / 2$ and $\Vert \pi - \pi' \Vert_{\infty, 1} = \sup_{s \in \mathcal{S}} \sum_{a \in \mathcal{A}} \vert \pi(a | s) - \pi'(a | s) \vert \leq \epsilon / (2 D_\zeta)$.
Then for any $s \in \mathcal{S}$, $v_{\bm\zeta', \pi'} \in \mathcal{C}_v$ satisfies
\begin{align*}
\vert v_{\bm\zeta, \pi}(s) - v_{\bm\zeta', \pi'}(s) \vert
&=
\vert \sum_a \pi(a | s) \langle \bm\zeta, \varphi(s, a) \rangle - \sum_a \pi'(a | s) \langle \bm\zeta', \varphi(s, a) \rangle \vert \\
&=
\vert \sum_a (\pi(a | s) - \pi'(a | s)) \langle \bm\zeta, \varphi(s, a) \rangle + \pi'(a | s) \langle \bm\zeta - \bm\zeta', \varphi(s, a) \rangle \vert \\
&\leq
D_\zeta \sum_a \vert \pi(a | s) - \pi'(a | s) \vert + \sum_a \pi'(a | s) \epsilon / 2 \\
&\leq \epsilon.
\end{align*}
It follows that $\mathcal{C}_v$ is an $\epsilon$-cover of $\mathcal{V}$ with respect to $\Vert \cdot \Vert_\infty$ with $\vert \mathcal{C}_v \vert = \vert \mathcal{C}_\zeta \vert \vert \mathcal{C}_\pi \vert$ and we are done.
\end{proof}

\section{Concentration Inequalities}

\begin{lemma}[Matrix Bernstein] \label{lemma:berstein}
Consider a finite sequence $\{S_k\}$ of independent, random matrices with common dimension $d_1 \times d_2$.
Assume that $\mathbb{E} S_k = 0$ and $\Vert S_k \Vert \leq L$ for each index $k$.
Let $Z = \sum_k S_k$ and define
$v(Z) \coloneqq \max\{ \Vert \mathbb{E}[Z Z^T] \Vert, \Vert \mathbb{E}[Z^T Z] \Vert \} = \max\{ \Vert \sum_k \mathbb{E}[S_k S_k^T] \Vert, \Vert \sum_k \mathbb{E}[S_k^T S_k ] \Vert \}$.
Then,
$$
P(\Vert Z \Vert \geq t) \leq (d_1 + d_2) \exp \left(
  \frac{-t^2/2}{v(Z) + Lt/3}
\right)
$$
and it follows that with probability at least $1 - \delta$, we have
$$
\Vert Z \Vert \leq \frac{2 L \log((d_1 + d_2) / \delta)}{3} + \sqrt{2 v(Z) \log ((d_1 + d_2) / \delta)}.
$$
\end{lemma}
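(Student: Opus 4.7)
The plan is to follow the standard Laplace transform approach for matrix concentration due to Ahlswede--Winter and Tropp, reducing the rectangular case to the Hermitian case by dilation. The structure mirrors the scalar Bernstein argument, but with two non-commutative replacements: Lieb's concavity theorem in place of the factorization of MGFs, and the matrix Chernoff bound in place of Markov's inequality.

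First I would handle the Hermitian case. For a sum $Z = \sum_k S_k$ of independent, mean-zero Hermitian matrices with $\Vert S_k \Vert \leq L$, the Laplace transform method gives, for any $\theta > 0$,
\begin{equation*}
P(\lambda_{\max}(Z) \geq t) \leq e^{-\theta t}\, \mathbb{E}\, \mathrm{tr}\exp(\theta Z).
\end{equation*}
The key non-commutative step is Lieb's concavity theorem, which yields the subadditivity bound
\begin{equation*}
\mathbb{E}\,\mathrm{tr}\exp(\theta Z) \leq \mathrm{tr}\exp\Bigl( \textstyle \sum_k \log \mathbb{E} \exp(\theta S_k) \Bigr).
\end{equation*}
Next, I would bound each matrix cumulant generating function in semidefinite order using $\Vert S_k \Vert \leq L$ and $\mathbb{E} S_k = 0$. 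A Taylor expansion combined with the scalar inequality $e^x - 1 - x \leq \frac{x^2/2}{1 - Lx/3}$, valid for $0 < \theta L < 3$, gives
\begin{equation*}
\log \mathbb{E}\exp(\theta S_k) \preceq \frac{\theta^2/2}{1 - \theta L/3}\, \mathbb{E}[S_k^2].
\end{equation*}
Summing over $k$, taking traces, and using $\mathrm{tr}\exp(M) \leq d\, e^{\lambda_{\max}(M)}$ produces a Hermitian tail bound scaling with dimension $d$ and variance $\Vert \sum_k \mathbb{E}[S_k^2] \Vert$.

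Second, I would handle the general rectangular case via the Hermitian dilation $\mathcal{H}(S) = \bigl(\begin{smallmatrix} 0 & S \\ S^T & 0 \end{smallmatrix}\bigr)$, which satisfies $\Vert \mathcal{H}(S) \Vert = \Vert S \Vert$ and $\mathcal{H}(S)^2 = \mathrm{diag}(SS^T, S^TS)$. Applying the Hermitian bound to $\sum_k \mathcal{H}(S_k)$ of dimension $(d_1 + d_2) \times (d_1 + d_2)$ picks up the prefactor $(d_1 + d_2)$ and the variance parameter becomes exactly $v(Z) = \max\{ \Vert \mathbb{E}[ZZ^T] \Vert, \Vert \mathbb{E}[Z^TZ] \Vert \}$, since the block structure collects the two second-moment matrices on the diagonal.

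Finally, I would combine and optimize. Substituting the cumulant bound into the Chernoff inequality yields
\begin{equation*}
P(\Vert Z \Vert \geq t) \leq (d_1 + d_2) \inf_{0 < \theta < 3/L} \exp\!\left( -\theta t + \frac{\theta^2 v(Z)/2}{1 - \theta L / 3} \right),
\end{equation*}
and the choice $\theta = t / (v(Z) + Lt/3)$ gives the stated Bernstein tail bound. The high-probability form follows by inverting: setting the right-hand side equal to $\delta$ leads to a quadratic in $t$ whose solution is dominated by $\tfrac{2L}{3} \log((d_1+d_2)/\delta) + \sqrt{2 v(Z) \log((d_1+d_2)/\delta)}$ via the elementary inequality $\sqrt{a+b} \leq \sqrt{a} + \sqrt{b}$. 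The main obstacle is the non-commutative step: unlike the scalar case, one cannot multiply MGFs of independent summands directly, and circumventing this requires Lieb's concavity theorem. Once that tool is in hand, the remainder is a quantitative bookkeeping exercise analogous to the scalar Bernstein proof.
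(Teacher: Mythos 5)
Your proposal is correct: this lemma is quoted in the paper as a standard result (matrix Bernstein) with no proof given, and your argument is precisely the canonical one behind that citation --- the Laplace transform method with Lieb's concavity theorem for subadditivity of matrix cumulant generating functions, the bound $\log \mathbb{E}\exp(\theta S_k) \preceq \frac{\theta^2/2}{1 - \theta L/3}\,\mathbb{E}[S_k^2]$, the Hermitian dilation to handle rectangular matrices (which produces the $(d_1+d_2)$ prefactor and the variance parameter $v(Z)$), the choice $\theta = t/(v(Z) + Lt/3)$, and the quadratic inversion giving the $\frac{2L}{3}\log((d_1+d_2)/\delta) + \sqrt{2v(Z)\log((d_1+d_2)/\delta)}$ form. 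The only blemish is the garbled statement of the scalar inequality (it should read $e^{x} - x - 1 \leq \frac{x^2/2}{1 - x/3}$ for $0 < x < 3$, applied with $x = \theta L$ after comparing $e^{\theta s}-\theta s-1$ to $\frac{s^2}{L^2}(e^{\theta L}-\theta L-1)$); this is cosmetic and does not affect the validity of the argument.
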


\begin{lemma} \label{lemma:impossible}
There exists an example where $\bm\lambda^\ast = \mathbb{E}_{\mu^\ast} [ \bm\varphi(s, a) ]$ is not in the span of $\bm\varphi(s_1, a_1), \dots, \bm\varphi(s_n, a_n)$ with probability at least $1 / 2$.
\end{lemma}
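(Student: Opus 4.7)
The plan is to exhibit a minimal linear MDP in which, with constant probability, the random dataset fails to sample a direction essential for representing $\bm\lambda^\ast$. I would take a single state $s$, two actions $a_1, a_2$, feature dimension $d = 2$, and orthogonal features $\bm\varphi(s, a_1) = \bm{e}_1$ and $\bm\varphi(s, a_2) = \bm{e}_2$. Choosing the reward parameter $\bm\theta = \bm{e}_2$ makes $a_2$ strictly better, so the optimal policy deterministically plays $a_2$; since the state is fixed, the normalized occupancy satisfies $\mu^\ast(s, a_2) = 1$, and therefore $\bm\lambda^\ast = \mathbb{E}_{\mu^\ast}[\bm\varphi(s,a)] = \bm{e}_2$. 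The trivial transition $P(s \mid s, a) = 1$ factors linearly via $\bm\psi(s) = (1, 1)^T$, and the boundedness conventions of Assumption~\ref{assumption:linear-unconstrained} are met.

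Next I would define the behavior distribution by $\mu_B(s, a_2) = p$ and $\mu_B(s, a_1) = 1 - p$, where $p$ is chosen so that $(1 - p)^n \geq 1/2$; concretely $p = 1 - 2^{-1/n}$ gives equality. On the event that every sample equals $(s, a_1)$, which has probability exactly $(1 - p)^n \geq 1/2$, all feature vectors in the dataset coincide with $\bm{e}_1$, so $\text{span}\{\bm\varphi(s_k, a_k)\}_{k=1}^n = \text{span}\{\bm{e}_1\}$, which does not contain $\bm\lambda^\ast = \bm{e}_2$. This event happens with probability at least $1/2$, which is what the lemma claims.

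Notably, the construction still satisfies the concentrability assumption (Assumption~\ref{assumption:concentrability}) with $C^\ast = 1/p = \Theta(n)$, so the lemma shows that even under a quantitatively valid concentrability assumption one cannot hope to express $\bm\lambda^\ast$ exactly as an element of $\text{span}\{\bm\varphi(s_k, a_k)\}_{k=1}^n$ with high probability; this motivates the approximation provided by Lemma~\ref{lemma:lambda-hat}. There is no serious obstacle here: the entire argument reduces to the elementary binomial identity $(1 - p)^n \geq 1/2$ for $p \leq 1 - 2^{-1/n}$, together with the one-line verification that $\bm{e}_2 \notin \text{span}\{\bm{e}_1\}$.
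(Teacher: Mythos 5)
Your proof is correct and uses essentially the same construction as the paper: a single state, two actions with orthogonal features $\bm{e}_1,\bm{e}_2$ in $d=2$, and the elementary computation $(1-p)^n \geq 1/2$ for $p \leq 1 - 2^{-1/n}$. The only difference is that the paper sets the behavior distribution equal to $\mu^\ast$ itself (putting mass $p$ on $a_1$, so $\bm\lambda^\ast$ has a nonzero $\bm{e}_1$-component that is never sampled), which makes the example satisfy concentrability with $C^\ast = 1$ and thus shows the failure even under ideal coverage, whereas your variant separates $\mu_B$ from $\mu^\ast$ and only satisfies Assumption~\ref{assumption:concentrability} with $C^\ast = \Theta(n)$; both versions establish the lemma as stated.
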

\begin{proof}
Consider the case where $\mathcal{S} = \{ s \}$, $\mathcal{A} = \{ a_1, a_2 \}$, $d = 2$, and $\bm\varphi(s, a_1) = \bm{e}_1$ and $\bm\varphi(s, a_2) = \bm{e}_2$.
Let $\mu = \mu^\ast$ and $\mu(s, a_1) = p$ and $\mu(s, a_2) = 1 - p$.
Let $F$ be the event where $\bm\lambda^\ast$ is not in the span of $\bm\varphi(s_1, a_1), \dots, \bm\varphi(s_n, a_n)$.
Then,
$$
P(F) = (1 - p)^n \geq \frac{1}{2}
$$
as long as we choose $p \leq 1 - 2^{-1 / n}$.
\end{proof}

\begin{proof}[Proof of Lemma~\ref{lemma:lambda-hat}]
Let $c_k = w^\ast(s_k, a_k) = \mu^\ast(s_k, a_k) / \mu(s_k, a_k)$, $k = 1, \dots, n$.
Note that $\mu(s_k, a_k) > 0$, $k = 1, \dots, n$ must hold, otherwise such $s_k, a_k$ cannot be sampled.
By the concentrability assumption, we have $c_k \in [0, C^\ast]$, $k = 1, \dots, n$.
Let $\bm{z}_k = c_k \bm\varphi(s_k, a_k)$ for $k = 1, \dots, n$.
Then, $\Vert \bm{z}_k \Vert \leq C^\ast$ and
$$
\mathbb{E} [ \bm{z}_k ] = \mathbb{E}_{(s, a) \sim \mu} [ w^\ast(s, a) \bm\varphi(s, a) ] = \mathbb{E}_{(s, a) \sim \mu} \left[\frac{\mu^\ast(s, a)}{\mu(s, a)} \bm\varphi(s, a)\right] = \mathbb{E}_{(s, a) \sim \mu^\ast} [ \bm\varphi(s, a) ] = \bm\lambda^\ast.
$$
Define $\bm{S}_k = \bm{z}_k - \bm\lambda^\ast$, $k = 1, \dots, n$
Then, $\mathbb{E} [ \bm{S}_k ] = 0$ and $\Vert \bm{S}_k \Vert_2 \leq \Vert \bm{z}_k \Vert_2 + \mathbb{E}[\Vert \bm{z}_k \Vert_2] \leq 2 C^\ast$ and $\Vert \mathbb{E}[ \bm{S}_k^T \bm{S}_k ] \Vert_2 \leq \mathbb{E}[\bm{z}_k^T \bm{z}_k] \leq (C^\ast)^2$ and $\Vert \mathbb{E}[\bm{S}_k \bm{S}_k^T ] \Vert_2 \leq (C^\ast)^2$.
Applying matrix Bernstein inequality (Lemma~\ref{lemma:berstein}) on $\{ \bm{S}_k \}_{k = 1}^n$, we have
\begin{align*}
\Vert \frac{1}{n} \sum_{k = 1}^n \bm{S}_k \Vert_2 = \Vert \frac{1}{n} w^\ast(s_k, a_k) \bm\varphi(s_k, a_k) - \bm\lambda^\ast \Vert_2
\leq \frac{4 C^\ast \log((d + 1) / \delta)}{3n} + \sqrt{\frac{8 (C^\ast)^2 \log((d + 1) / \delta)}{n}}
\end{align*}
with probability at least $1 - \delta$ and the result follows.
\end{proof}

\begin{lemma} \label{lemma:lambda-hat2}
Under the feature coverage assumption~\ref{assumption:feature-coverage}, there exists $\widehat{\bm\lambda}^\ast \in \mathbb{R}^d$ of the form $\widehat{\bm\lambda}^\ast = \frac{1}{n} \sum_{k = 1}^n c_k \bm\varphi(s_k, a_k)$ with $c_k \in [0, C^\ast]$, $k = 1, \dots, n$ such that
$$
\Vert \widehat{\bm\lambda}^\ast - \bm\lambda^\ast \Vert_2 \leq \mathcal{O}\left(C^\ast \sqrt{\frac{\log(d / \delta)}{n}} \right)
$$
with probability at least $1 - \delta$.
\end{lemma}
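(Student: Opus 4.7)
The plan is to mimic the structure of the proof of Lemma~\ref{lemma:lambda-hat}, but replace the density ratio $w^\ast(s,a) = \mu^\ast(s,a)/\mu_B(s,a)$, which is unavailable under feature coverage, by a linear functional obtained from the pseudoinverse $\bm\Lambda^\dagger$. Concretely, define $\bm{u} \coloneqq \bm\Lambda^\dagger \bm\lambda^\ast \in \mathbb{R}^d$ and set $c_k \coloneqq \langle \bm\varphi(s_k, a_k), \bm{u} \rangle$. Then set $\widehat{\bm\lambda}^\ast \coloneqq \frac{1}{n}\sum_{k=1}^n c_k \bm\varphi(s_k, a_k)$.

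The two parts of Assumption~\ref{assumption:feature-coverage} are used as follows. The containment $\bm\lambda^\ast \in \text{Col}(\bm\Lambda)$ ensures that $\bm\Lambda \bm{u} = \bm\lambda^\ast$, so that
\[
\mathbb{E}_{(s,a) \sim \mu_B}\!\left[c(s,a)\, \bm\varphi(s,a)\right]
= \mathbb{E}_{\mu_B}\!\left[\bm\varphi(s,a)\bm\varphi(s,a)^T\right]\bm{u}
= \bm\Lambda \bm{u} = \bm\lambda^\ast,
\]
where $c(s,a) \coloneqq \langle \bm\varphi(s,a), \bm{u} \rangle$. The quadratic bound $(\bm\lambda^\ast)^T(\bm\Lambda^\dagger)^2 \bm\lambda^\ast \leq C^\ast$ is exactly $\Vert \bm{u} \Vert_2^2 \leq C^\ast$, so combined with $\Vert \bm\varphi(s,a) \Vert_2 \leq 1$ we get $\vert c_k \vert \leq \sqrt{C^\ast}$ almost surely. (Note: the statement writes $c_k \in [0,C^\ast]$; under feature coverage the natural range is the symmetric interval $[-\sqrt{C^\ast}, \sqrt{C^\ast}]$, and this is what the rest of the analysis in Section~\ref{section:feature-coverage} actually needs, with $B = \sqrt{C^\ast}$ playing the role of $B = C^\ast$ in the concentrability case.)

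With these ingredients in hand, I would apply matrix Bernstein (Lemma~\ref{lemma:berstein}) to the i.i.d.\ mean-zero random vectors $\bm{S}_k \coloneqq c_k \bm\varphi(s_k, a_k) - \bm\lambda^\ast$ for $k = 1, \dots, n$. The almost-sure bound is $\Vert \bm{S}_k \Vert_2 \leq \vert c_k \vert + \Vert \bm\lambda^\ast \Vert_2 = \mathcal{O}(\sqrt{C^\ast})$, and the matrix variance satisfies
\[
\Vert \mathbb{E}[\bm{S}_k \bm{S}_k^T] \Vert_2 \leq \mathbb{E}[c_k^2 \Vert \bm\varphi(s_k,a_k) \Vert_2^2] \leq \mathbb{E}[c_k^2] = \bm{u}^T \bm\Lambda \bm{u} = \langle \bm{u}, \bm\lambda^\ast \rangle \leq \sqrt{C^\ast}\Vert\bm\lambda^\ast\Vert_2 \leq C^\ast,
\]
using $\Vert \bm\lambda^\ast \Vert_2 \leq 1$ (since $\bm\lambda^\ast$ is an expectation of unit-norm features), and the same bound holds for $\Vert \mathbb{E}[\bm{S}_k^T \bm{S}_k]\Vert$. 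Matrix Bernstein then yields $\Vert \widehat{\bm\lambda}^\ast - \bm\lambda^\ast \Vert_2 = \mathcal{O}(\sqrt{C^\ast \log(d/\delta)/n})$ with probability at least $1-\delta$, which (up to the $\sqrt{C^\ast}$ vs.\ $C^\ast$ discrepancy alluded to above) matches the displayed bound.

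The only genuinely delicate point is the one flagged in parentheses: reconciling the stated coefficient range $c_k \in [0, C^\ast]$ with what the construction actually delivers. The construction produces coefficients in $[-\sqrt{C^\ast}, \sqrt{C^\ast}]$, which is harmless for downstream use, since (i) the confidence set $\mathcal{C}_n(B)$ in \eqref{eqn:confidence-set} is defined with symmetric interval $[-B, B]$ anyway, and (ii) the regret bound for the $\lambda$-player scales with $B$, so replacing $B = C^\ast$ by $B = \sqrt{C^\ast}$ merely changes constants and polynomial dependence in $C^\ast$ in Theorem~\ref{thm:main-unconstrained} when swapping the concentrability assumption for the feature coverage assumption. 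Everything else is a routine application of the matrix Bernstein inequality, exactly paralleling the proof of Lemma~\ref{lemma:lambda-hat}.
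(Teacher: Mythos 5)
Your proposal matches the paper's own proof essentially line for line: the same coefficients $c_k = \langle \bm\varphi(s_k, a_k), \bm\Lambda^\dagger \bm\lambda^\ast \rangle$, the same use of $\bm\lambda^\ast \in \text{Col}(\bm\Lambda)$ (via $\bm\lambda^\ast = \bm\Lambda \bm\Lambda^\dagger \bm\lambda^\ast$) to get unbiasedness of $c_k \bm\varphi(s_k, a_k)$, the same Cauchy--Schwarz bound on $\vert c_k \vert$, and the same matrix Bernstein step carried over from Lemma~\ref{lemma:lambda-hat}. The range discrepancy you flag ($c_k \in [-\sqrt{C^\ast}, \sqrt{C^\ast}]$ rather than $[0, C^\ast]$) is shared by the paper's proof, which likewise only establishes $\vert c_k \vert \leq C^\ast$ and not nonnegativity, so your observation is accurate and, as you note, harmless for the downstream use of the lemma.
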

\begin{proof}
Since $\bm\lambda^\ast \in \text{Col}(\bm\Lambda)$, we have $\bm\lambda^\ast = \bm\Lambda \bm\Lambda^\dagger \bm\lambda^\ast$ and it follows that
\begin{align*}
\bm\lambda^\ast
&= \bm\Lambda \bm\Lambda^\dagger \bm\lambda^\ast \\
&= \mathbb{E} [ \bm\varphi(s, a) \bm\varphi(s, a)^T \bm\Lambda^\dagger \bm\lambda^\ast ] \\
&= \mathbb{E} [ \langle \bm\varphi(s, a), \bm\Lambda^\dagger \bm\lambda^\ast \rangle \bm\varphi(s, a) ].
\end{align*}
Let $c_k = \langle \bm\varphi(s_k, a_k), \bm\Lambda^\dagger \bm\lambda^\ast \rangle$.
Then, by Cauchy-Schwartz, we have
$$
\vert c_k \vert \leq \Vert \bm\varphi(s_k, a_k) \Vert_2 \Vert \bm\Lambda^\dagger \bm\lambda^\ast \Vert_2 \leq C^\ast
$$
where the last inequality follows by the feature coverage assumption.
Using matrix Bernstein inequality as is done in the proof of Lemma~\ref{lemma:lambda-hat}, the result follows.
\end{proof}

\subsection{Proof of Lemma~\ref{lemma:confidence1}} \label{appendix:confidence1}

\begin{proof}[Proof of Lemma~\ref{lemma:confidence1}]
Consider $\bm\lambda = \frac{1}{n} \sum_{k = 1}^n c_k \bm\varphi(s_k, a_k)$ with $\vert c_k \vert \leq B$, $k = 1, \dots, n$.
Let $\widehat{\bm\Lambda}_n = \bm{U} \bm{D} \bm{U}^T$ be the eigendecomposition of $\widehat{\bm\Lambda}_n = \frac{1}{n} \sum_{k = 1}^n \bm\varphi(s_k, a_k) \bm\varphi(s_k, a_k)^T$ where $\bm{D} = \text{diag}(d_1, \dots, d_d)$ with $d_1 \geq \dots \geq d_d \geq 0$.
Then, we have $\bm{D} = \bm{U}^T \widehat{\bm\Lambda}_n \bm{U} = \frac{1}{n} \sum_{k = 1}^n \bm{U}^T \bm\varphi(s_k, a_k) \bm\varphi(s_k, a_k)^T \bm{U}$ and it follows that $d_i = \frac{1}{n} \sum_{k = 1}^n \langle \bm{u}_i, \bm\varphi(s_k, a_k) \rangle^2$ where $d_i$ is the $i$th diagonal entry of $\bm{D}$.
Also,
\begin{align*}
\bm\lambda
&= \frac{1}{n} \sum_{k = 1}^n c_k \bm\varphi(s_k, a_k) \\
&= \frac{1}{n} \sum_{k = 1}^n c_k \sum_{i = 1}^d \langle \bm\varphi(s_k, a_k), \bm{u}_i \rangle \bm{u}_i \\
&= \sum_{i = 1}^d \left( \frac{1}{n} \sum_{k = 1}^n c_k \langle \bm\varphi(s_k, a_k), \bm{u}_i \rangle \right) \bm{u}_i.
\end{align*}
where the second equality follows by $\bm{x} = \bm{U} \bm{U}^T \bm{x} = \sum_{i = 1}^d \langle \bm{x}, \bm{u}_i \rangle \bm{u}_i$ for any vector $\bm{x} \in \mathbb{R}^d$.
So,
\begin{align*}
\bm\lambda^T \widehat{\bm\Lambda}_n^\dagger \bm\lambda
&= \sum_{i = 1}^{d'} \left(\frac{1}{n} \sum_{k = 1}^n c_k \langle \bm\varphi(s_k, a_k), \bm{u}_i \rangle \right)^2 / d_i \\
&= \frac{1}{n} \sum_{i = 1}^{d'} \left( \sum_{k = 1}^n c_k \langle \bm\varphi(s_k, a_k), \bm{u}_i \rangle \right)^2 \Bigg/ \left(\sum_{k = 1}^n \langle \bm\varphi(s_k, a_k), \bm{u}_i \rangle^2 \right) \\
&\leq
\frac{1}{n} \sum_{i = 1}^{d'} \left( \sum_{k = 1}^n c_k^2 \right) \\
&\leq d B^2
\end{align*}
where $d'$ is the number of strictly positive diagonal entries in $\bm{D}$ and the first inequality follows by Cauchy-Schwartz.
\end{proof}

\subsection{Proof of Lemma~\ref{lemma:ls-uniform}}

\begin{lemma} \label{lemma:ls}
Let $v : \mathcal{S} \rightarrow [0, D_v]$.
With probability at least $1 - \delta$, we have
$$
\Vert \bm\Psi \bm{v} - \widehat{\bm\Psi \bm{v}} \Vert_{n \widehat{\bm\Lambda}_n + \bm{I}}
\leq \mathcal{O}\left(
D_v \sqrt{d \log (n / \delta)}
\right)
$$
where $\widehat{\bm\Psi \bm{v}}$ is the least squares estimate defined in \eqref{eqn:least-squares}.
\end{lemma}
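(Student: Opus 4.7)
The plan is to reduce this to the standard self-normalized concentration inequality for regularized linear regression (Theorem~1 of \textcite{abbasi2011improved}). First I would set up the regression: for each sample $(s_k, a_k, s_k')$, the linear MDP assumption gives $\langle \bm\varphi(s_k, a_k), \bm\Psi \bm{v} \rangle = \mathbb{E}_{s' \sim P(\cdot \mid s_k, a_k)}[v(s')]$, so defining the noise $\eta_k \coloneqq v(s_k') - \langle \bm\varphi(s_k, a_k), \bm\Psi \bm{v} \rangle$ yields a martingale difference sequence bounded by $D_v$ (hence $\tfrac{D_v}{2}$-sub-Gaussian) with respect to the natural filtration $\mathcal{F}_{k-1} = \sigma(\{(s_j, a_j, s_j')\}_{j<k} \cup \{(s_k, a_k)\})$. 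The regression model is $v(s_k') = \langle \bm\varphi(s_k, a_k), \bm\Psi \bm{v}\rangle + \eta_k$ with unknown parameter $\bm\Psi \bm{v} \in \mathbb{R}^d$.

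Next, substituting this identity into the closed-form estimator \eqref{eqn:least-squares} and using $\sum_k \bm\varphi(s_k, a_k)\bm\varphi(s_k, a_k)^T = n\widehat{\bm\Lambda}_n$, I would obtain the bias/variance decomposition
\begin{equation*}
\widehat{\bm\Psi \bm{v}} - \bm\Psi \bm{v} = (n\widehat{\bm\Lambda}_n + \bm{I})^{-1}\Bigl(\sum_{k=1}^n \eta_k \bm\varphi(s_k, a_k) - \bm\Psi \bm{v}\Bigr).
\end{equation*}
Taking the $(n\widehat{\bm\Lambda}_n + \bm{I})$-weighted norm on both sides collapses the outer inverse into the dual norm, and the triangle inequality separates the error into a stochastic term $\bigl\Vert \sum_k \eta_k \bm\varphi(s_k, a_k) \bigr\Vert_{(n\widehat{\bm\Lambda}_n + \bm{I})^{-1}}$ and a regularization-bias term $\Vert \bm\Psi \bm{v} \Vert_{(n\widehat{\bm\Lambda}_n + \bm{I})^{-1}}$.

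Finally, I would bound the two terms separately. The stochastic term is exactly the object controlled by the self-normalized martingale bound of \textcite{abbasi2011improved}: using $\Vert \bm\varphi(s_k, a_k)\Vert_2 \leq 1$, the sub-Gaussian noise bound, and regularizer $\lambda = 1$, with probability at least $1 - \delta$ this term is at most $\mathcal{O}(D_v \sqrt{d \log(n/\delta)})$. The regularization-bias term is dominated by $\Vert \bm\Psi \bm{v} \Vert_2$ since $(n\widehat{\bm\Lambda}_n + \bm{I})^{-1} \preceq \bm{I}$, and using $\Vert |\bm\psi|(\mathcal{S}) \Vert_2 \leq D_\psi \sqrt{d}$ together with $v(s) \in [0, D_v]$ gives $\Vert \bm\Psi \bm{v} \Vert_2 \leq D_v D_\psi \sqrt{d}$. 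Summing yields the claimed $\mathcal{O}(D_v \sqrt{d \log(n/\delta)})$ rate, treating $D_\psi$ as a problem constant.

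There is no real obstacle here; the only subtlety is verifying that $\eta_k$ is a genuine martingale difference with respect to $\mathcal{F}_{k-1}$ so that the self-normalized inequality applies, which follows immediately from the i.i.d. offline-data assumption and the fact that $s_k'$ is drawn from $P(\cdot \mid s_k, a_k)$ independently of all prior randomness. This is what legitimizes invoking the Abbasi-Yadkori bound even though $\bm{v}$ is fixed in advance.
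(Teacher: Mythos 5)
Your proposal is correct and is essentially the paper's argument: the paper simply notes $\Vert \bm\Psi \bm{v} \Vert_2 \leq D_v \sqrt{d}$ and invokes Theorem~2 of \textcite{abbasi2011improved} as a black box, and what you have done is unpack the proof of that cited theorem (self-normalized martingale term plus regularization-bias term) in this specific regression setup. The martingale-difference verification you flag is exactly the point that legitimizes the citation, so there is no gap.
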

\begin{proof}
Note that $\Vert \bm\Psi \bm{v} \Vert_2 \leq D_v \sqrt{d}$ by the boundedness assumption on $\bm\Psi$.
The result follows directly from Theorem 2 in \textcite{abbasi2011improved}.
\end{proof}

\begin{proof}[Proof of Lemma~\ref{lemma:ls-uniform}]
Let $\mathcal{C}$ be an $(1 / n)$-cover on $\mathcal{V}$.
By Lemma~\ref{lemma:covering}, such a cover with $\log \vert \mathcal{C} \vert \leq \mathcal{O}(d \log(D_\zeta D_\pi n))$ exists.
Applying a union bound over $\mathcal{C}$ and using the concentration bound in Lemma~\ref{lemma:ls}, we get
\begin{equation} \label{eqn:self-normalizing}
\left\Vert \bm\Psi \bm{v} - \widehat{\bm\Psi \bm{v}} \right\Vert_{n \widehat{\bm\Lambda}_n + \bm{I}} \leq \mathcal{O} \left( D_v \sqrt{d \log (D_\zeta D_\pi n / \delta)} \right)
\end{equation}
for all $\bm{v} \in \mathcal{C}$ with probability at least $1 - \delta$.
For any $\bm{v} \in \mathcal{V}$, we can find $v'$ in the cover that satisfies $\Vert \bm{v} - \bm{v}' \Vert_\infty \leq 1 / n$. Hence,
\begin{align*}
\left\Vert \bm\Psi \bm{v} - \widehat{\bm\Psi \bm{v}} \right\Vert_{n \widehat{\bm\Lambda}_n + \bm{I}}
&\leq
\Vert \bm\Psi (\bm{v} - \bm{v}') + \bm\Psi \bm{v}' - \widehat{\bm\Psi \bm{v}'} + \widehat{\bm\Psi \bm{v}'} - \widehat{\bm\Psi \bm{v}} \Vert_{n \widehat{\bm\Lambda}_n + \bm{I}} \\
&\leq
\Vert \bm\Psi(\bm{v} - \bm{v}') \Vert_{n \widehat{\bm\Lambda}_n + \bm{I}}
+ \left\Vert (n \widehat{\bm\Lambda}_n + \bm{I})^{-1} \sum_{k = 1}^n (v'(s'_k) - v(s_k')) \bm\varphi(s_k, a_k) \right\Vert_{n \widehat{\bm\Lambda}_n + \bm{I}} \\
&\hspace{10mm}+ \mathcal{O}\left( D_v \sqrt{d \log(D_\zeta D_\pi n / \delta) }\right).
\end{align*}

The first term can be bounded using the boundedness assumption on $\bm\Psi$ by
$$
\Vert \bm\Psi(\bm{v} - \bm{v}') \Vert_{n \widehat{\bm\Lambda}_n + \bm{I}}^2 \leq \Vert \bm\Psi(\bm{v} - \bm{v}') \Vert_2^2 \Vert n \widehat{\bm\Lambda}_n + \bm{I} \Vert_2
\leq d / n^2 \cdot (1 + n) \leq \mathcal{O}(1)
$$
as long as $n \geq d$.
The second term can be bounded by
\begin{align*}
&\left\Vert (n \widehat{\bm\Lambda} + \bm{I})^{-1} \sum_{k = 1}^n (v'(s'_k) - v(s'_k)) \bm\varphi(s_k, a_k) \right\Vert_{n \widehat{\bm\Lambda} + \bm{I}}^2 \\
&\hspace{10mm}=
\sum_{k = 1}^n (v'(s_k') - v(s_k')) \bm\varphi(s_k, a_k)^T (n \widehat{\bm\Lambda} + \bm{I})^{-1} \sum_{k = 1}^n (v'(s'_k) - v(s'_k)) \bm\varphi(s_k, a_k) \\
&\hspace{10mm}\leq
\sum_{k = 1}^n \Vert (v'(s'_k) - v(s'_k)) \bm\varphi(s_k, a_k) \Vert_2^2 \\
&\hspace{10mm}\leq
\sum_{k = 1}^n (v'(s'_k) - v(s'_k))^2 \\
&\hspace{10mm}\leq 1
\end{align*}
where the first inequality uses $n \widehat{\bm\Lambda} + \bm{I} \succcurlyeq \bm{I}$.
The result follows.
\end{proof}

\section{Computational Efficiency} \label{appendix:computation}

In this section, we explain why our algorithms are computationally efficient by showing that the algorithms only require computing quantities for states that appear in the offline dataset to compute the policy $\pi_t$ at each step.
This is how we avoid computation complexity that scales with the size of the state space.

Recall that $\pi_t = \sigma(\alpha \sum_{i = 1}^{t - 1} \bm\Phi \bm\zeta_i)$ and by definition of $\sigma(\cdot)$,
$$
\pi_t(a | s) = \frac{\exp(\alpha \sum_{i = 1}^{t - 1} \bm\varphi(s, a)^T \bm\zeta_i )}{\sum_{a'} \exp( \alpha \sum_{i = 1}^{t - 1} \bm\varphi(s, a')^T \bm\zeta_i )}.
$$

We argue that the algorithm only needs to compute $\pi_t(a | s)$ for the states $s$ that appear as the next state in the dataset $\mathcal{D}$. There are two parts where the object $\pi_t$ is used in the algorithm:

\paragraph{Line~\ref{alg-line:lambda-unconstrained} in Algorithm~\ref{alg:pdapc} and Line~\ref{alg-line:lambda-constrained} in Algorithm~\ref{alg:pdapc-constrained}}
In these lines, the object $\pi_t$ is used to compute
$$
\widehat{\bm\Psi \bm{v}}_{\bm\zeta_t, \pi_t} = (n \widehat\Lambda + I)^{-1} \sum_{k = 1}^n v_{\bm\zeta_t, \pi_t} (s_k') \bm\varphi(s_k, a_k)
$$
where $\bm{v}_{\bm\zeta_t, \pi_t}(s_k') = (n \widehat\Lambda + I)^{-1} \sum_a \pi_t(a | s_k') \langle \bm\zeta_t, \bm\varphi(s_k', a) \rangle$.
As we claimed, we only need to compute $\pi_t(\cdot | s_k')$ for $s_k'$ that appear in the dataset $\mathcal{D}$.

\paragraph{Line~\ref{alg-line:zeta-unconstrained} in Algorithm \ref{alg:pdapc} and Line~\ref{alg-line:zeta-constrained} in Algorithm \ref{alg:pdapc-constrained}}
In this lines, the object $\pi_t$ is used to compute $\bm\Phi^T \widehat{\bm\mu}_{\bm\lambda_t, \pi_t}$ for $\bm\lambda_t$ of the form $\bm\lambda_t = \frac{1}{n} \sum_{k = 1}^n c_k \bm\varphi(s_k, a_k)$. By definition,
$$
\widehat\mu_{\lambda, \pi} = \pi \circ E[(1 - \gamma) \nu_0 + \gamma \widehat{\Psi \lambda} = \pi \circ E [ (1 - \gamma) e_{s_0} + \gamma \frac{1}{n} \sum_{k = 1}^n c_k e_{s_k'} ]
$$
and it follows that
\begin{align*}
\bm\Phi^T \widehat{\bm\mu}_{\bm\lambda_t, \pi_t}
&= (1 - \gamma) \bm\Phi^T (\pi_t \circ \bm{E} \bm{e}_{s_0}) + \gamma \frac{1}{n} \sum_{k = 1}^n c_k \bm\Phi^T (\pi_t \circ \bm{E} \bm{e}_{s_k'}) \\
&= (1 - \gamma) \sum_a \pi_t(s_0, a) \bm\varphi(s_0, a) + \gamma \frac{1}{n} \sum_{k = 1}^n c_k \sum_a \pi_t(a | s_k') \bm\varphi(s_k', a).
\end{align*}
Again, we only need to compute $\pi_t(\cdot | s_k')$ for $s_k'$ that appears in $\mathcal{D}$.

\section{Details in Offline Unconstrained RL Setting}

\subsection{Bounding the Regret of $\pi$-Player} \label{appendix:pi-player}

\begin{lemma}[Mirror Descent, Lemma D.2 in \textcite{gabbianelli2023offline}]
Let $q_1, \dots, q_T$ be a sequence of functions from $\mathcal{S} \times \mathcal{A}$ to $\mathbb{R}$ with $\Vert q_t \Vert_\infty \leq D_q$ for $t = 1, \dots, T$.
Given an initial policy $\pi_1 : \mathcal{S} \rightarrow \Delta(\mathcal{A})$ and a learning rate $\alpha > 0$, define the sequence of policies $\pi_2, \dots, \pi_{T + 1}$ such that
$$
\pi_{t + 1}(a | s) \propto \pi_t(a | s) \exp(\alpha q_t(s, a)).
$$
Then, for any comparator policy $\pi^\ast$, we have
$$
\sum_{t = 1}^T \sum_{s \in \mathcal{S}} \nu^{\pi^\ast}(s) \langle \pi^\ast( \cdot | s )  - \pi_t( \cdot | s), q_t(s, \cdot) \rangle
\leq
\frac{\mathcal{H}(\pi^\ast \Vert \pi_1 )}{\alpha} + \frac{\alpha T D_q^2}{2}
$$
where $\mathcal{H}(\pi \Vert \pi') \coloneqq \sum_{s \in \mathcal{S}} \nu^\pi(s) \mathcal{D}(\pi(\cdot | s) \Vert \pi'(\cdot | s))$ is the conditional entropy.
\end{lemma}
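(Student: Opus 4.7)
The update rule $\pi_{t+1}(a|s) \propto \pi_t(a|s)\exp(\alpha q_t(s,a))$ is precisely the exponentiated-gradient / Hedge update applied independently at each state $s$, with reward vector $q_t(s,\cdot) \in \mathbb{R}^{\mathcal{A}}$ and the distribution $\pi_t(\cdot|s) \in \Delta(\mathcal{A})$ as the iterate. So the plan is to derive the standard per-state EW regret bound and then average it against the weighting distribution $\nu^{\pi^\ast}$.

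First I would fix an arbitrary state $s$ and abbreviate $p_t = \pi_t(\cdot|s)$, $p^\ast = \pi^\ast(\cdot|s)$, $\bm{q}_t = q_t(s,\cdot)$. I would run the familiar KL-potential telescoping argument: expand $\mathcal{D}(p^\ast \| p_t) - \mathcal{D}(p^\ast \| p_{t+1})$, use $\log p_{t+1}(a) = \log p_t(a) + \alpha q_t(a) - \log Z_t(s)$ with $Z_t(s) = \sum_{a'} p_t(a') e^{\alpha q_t(a')}$, and get the identity
\begin{equation*}
\alpha \langle p^\ast, \bm{q}_t\rangle - \log Z_t(s) = \mathcal{D}(p^\ast \| p_t) - \mathcal{D}(p^\ast \| p_{t+1}).
\end{equation*}
Then I would bound $\log Z_t(s) \le \alpha \langle p_t, \bm{q}_t\rangle + \tfrac{\alpha^2 D_q^2}{2}$ by Hoeffding's lemma (since each $q_t(s,a)$ lies in an interval of length at most $2 D_q$). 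Rearranging and telescoping over $t=1,\dots,T$ gives the per-state regret bound
\begin{equation*}
\sum_{t=1}^T \langle \pi^\ast(\cdot|s)-\pi_t(\cdot|s),\, q_t(s,\cdot)\rangle
\le \frac{\mathcal{D}(\pi^\ast(\cdot|s)\|\pi_1(\cdot|s))}{\alpha} + \frac{\alpha T D_q^2}{2},
\end{equation*}
where I have dropped the nonnegative $\mathcal{D}(p^\ast\|p_{T+1})/\alpha$ term.

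Finally, I would multiply this inequality by $\nu^{\pi^\ast}(s) \ge 0$ and sum over $s \in \mathcal{S}$. The left-hand side becomes the target quantity. On the right, the first term becomes $\mathcal{H}(\pi^\ast \| \pi_1)/\alpha$ by the definition of the conditional entropy given in the lemma, while the second term becomes $\frac{\alpha T D_q^2}{2}\sum_s \nu^{\pi^\ast}(s) = \frac{\alpha T D_q^2}{2}$ because $\nu^{\pi^\ast}$ is a probability distribution on $\mathcal{S}$.

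There is really no obstacle: the only subtle point to get right is the Hoeffding bound on $\log Z_t(s)$, which needs the uniform bound $\|q_t\|_\infty \le D_q$ (furnishing a range of at most $2D_q$, hence the constant $\tfrac{1}{2}$ in $\tfrac{\alpha^2 D_q^2}{2}$), and the observation that the mirror-descent analysis is carried out pointwise in $s$ and then integrated against $\nu^{\pi^\ast}$, which accounts for the appearance of the conditional entropy rather than a per-state KL on the right-hand side.
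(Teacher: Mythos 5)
Your proof is correct: the paper itself does not prove this lemma but simply cites it as Lemma D.2 of \textcite{gabbianelli2023offline}, and your argument is the standard one underlying that result --- a per-state Hedge/KL-potential telescoping identity, a Hoeffding bound on the log-partition function (range $2D_q$ gives exactly the constant $\tfrac{\alpha^2 D_q^2}{2}$), and integration of the per-state bound against $\nu^{\pi^\ast}$, which sums to one and turns the per-state KL into the conditional entropy $\mathcal{H}(\pi^\ast \Vert \pi_1)$. No gaps.
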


\begin{lemma}[Lemma B.3 in \textcite{gabbianelli2023offline}] \label{lemma:mirror}
The sequence of policies $\pi_1, \dots, \pi_T$ produced by an exponentiation algorithm $\pi_{t + 1} = \sigma(\alpha \sum_{i = 1}^t \bm\Phi \bm\zeta_i)$ satisfies
$$
\sum_{t = 1}^T \sum_{s \in \mathcal{S}} \nu^{\pi^\ast}(s) \sum_{a \in \mathcal{A}} (\pi^\ast(a | s) - \pi_t(a | s)) \langle \bm\zeta_t, \bm\varphi(s, a) \rangle \leq \frac{\log \vert \mathcal{A} \vert}{\alpha} + \frac{\alpha T D_\varphi^2 D_\zeta^2}{2}
$$
where $\Vert \bm\zeta_t \Vert_2 \leq D_\zeta$, $t = 1, \dots, T$ and $\Vert \bm\varphi(\cdot, \cdot) \Vert_2 \leq D_\varphi$.
\end{lemma}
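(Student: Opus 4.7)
The plan is to obtain Lemma~\ref{lemma:mirror} as a direct instantiation of the preceding Mirror Descent lemma, choosing the reward-like functions to be $q_t(s,a) = \langle \bm\zeta_t, \bm\varphi(s,a) \rangle$, the comparator to be $\pi^\ast$, and the initial policy $\pi_1$ to be uniform over $\mathcal{A}$. Thus the proof reduces to verifying three items: (i) the softmax parameterization $\pi_{t+1} = \sigma(\alpha \sum_{i=1}^t \bm\Phi \bm\zeta_i)$ coincides with the multiplicative update in the Mirror Descent lemma; (ii) the functions $q_t$ are uniformly bounded; and (iii) the conditional entropy of $\pi^\ast$ relative to $\pi_1$ is controlled by $\log \vert \mathcal{A} \vert$.

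For (i), starting from the uniform $\pi_1$ and unrolling the update $\pi_{t+1}(a \mid s) \propto \pi_t(a \mid s) \exp(\alpha q_t(s,a))$ yields $\pi_{t+1}(a \mid s) \propto \exp\bigl(\alpha \sum_{i=1}^t \langle \bm\zeta_i, \bm\varphi(s,a) \rangle\bigr)$, which after normalization is exactly $\sigma\bigl(\alpha \sum_{i=1}^t \bm\Phi \bm\zeta_i\bigr)(a \mid s)$ since the entries of $\bm\Phi \bm\zeta_i$ are precisely $\langle \bm\zeta_i, \bm\varphi(\cdot, \cdot) \rangle$. For (ii), Cauchy-Schwarz gives $\vert q_t(s,a) \vert \leq \Vert \bm\zeta_t \Vert_2 \Vert \bm\varphi(s,a) \Vert_2 \leq D_\zeta D_\varphi$, so we may take $D_q = D_\zeta D_\varphi$ in the Mirror Descent lemma. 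For (iii), since $\pi_1(a \mid s) = 1/\vert \mathcal{A} \vert$, for every $s$ we have $\mathcal{D}(\pi^\ast(\cdot \mid s) \Vert \pi_1(\cdot \mid s)) = \sum_a \pi^\ast(a \mid s) \log(\vert \mathcal{A} \vert \pi^\ast(a \mid s)) \leq \log \vert \mathcal{A} \vert$, and averaging over the probability distribution $\nu^{\pi^\ast}$ preserves the bound: $\mathcal{H}(\pi^\ast \Vert \pi_1) \leq \log \vert \mathcal{A} \vert$.

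Plugging these three facts into the Mirror Descent lemma's bound $\frac{\mathcal{H}(\pi^\ast \Vert \pi_1)}{\alpha} + \frac{\alpha T D_q^2}{2}$ yields the desired right-hand side $\frac{\log \vert \mathcal{A} \vert}{\alpha} + \frac{\alpha T D_\varphi^2 D_\zeta^2}{2}$, and the left-hand side is immediately the quantity $\sum_{t=1}^T \sum_s \nu^{\pi^\ast}(s) \langle \pi^\ast(\cdot \mid s) - \pi_t(\cdot \mid s), q_t(s, \cdot) \rangle$ rewritten with $q_t(s,a) = \langle \bm\zeta_t, \bm\varphi(s,a) \rangle$. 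There is no serious obstacle: the only mildly technical point is checking the equivalence in (i), which amounts to matching normalizations in the softmax representation, and confirming that uniform initialization is what makes the entropy bound collapse to $\log \vert \mathcal{A} \vert$ rather than a potentially larger quantity.
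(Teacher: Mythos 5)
Your proposal is correct and matches the intended derivation: the paper states this result by citation (Lemma B.3 of \textcite{gabbianelli2023offline}) immediately after the Mirror Descent lemma, and your instantiation with $q_t(s,a)=\langle \bm\zeta_t,\bm\varphi(s,a)\rangle$, uniform $\pi_1$ (giving $\mathcal{H}(\pi^\ast\Vert\pi_1)\leq\log\vert\mathcal{A}\vert$), and $D_q = D_\zeta D_\varphi$ is exactly that route. The unrolling argument identifying the multiplicative update with the softmax $\sigma(\alpha\sum_{i=1}^t\bm\Phi\bm\zeta_i)$ is the only step needing care, and you handled it correctly.
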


\subsection{Bounding the regret of $\zeta$-player} \label{appendix:zeta-regret}

\begin{proof}[Proof of Lemma~\ref{lemma:phiTmu}]
Recall $\mu_{\bm\lambda(\bm{c}), \pi}(s, a) = \pi(a | s) \left[(1 - \gamma) \nu_0(s) + \gamma (\bm\Psi^T \bm\lambda(\bm{c}))_s \right]$ where we use the notation $(\bm{x})_s$ to denote the $s$th entry of vector $\bm{x}$.
We can write
\begin{align*}
\bm\Phi^T \bm\mu_{\bm\lambda(\bm{c}), \pi}
&= \sum_{s, a} \mu_{\bm\lambda(\bm{c}), \pi}(s, a) \bm\varphi(s, a) \\
&= \sum_{s, a} \pi(a | s) [ (1 - \gamma) \nu_0(s) + \gamma (\bm\Psi^T \bm\lambda(\bm{c}))_s ] \bm\varphi(s, a) \\
&= (1 - \gamma) \sum_a \pi(a | s_0) \bm\varphi(s_0, a) + \gamma \sum_{s} (\bm\Psi^T \bm\lambda(\bm{c}))_s \sum_a \pi(a | s) \bm\varphi(s, a) \\
&= (1 - \gamma) \bm\varphi(s_0, \pi) + \gamma \sum_s (\bm\Psi^T \bm\lambda(\bm{c}))_s \bm\varphi(s, \pi)
\end{align*}
where we use the notation $\bm\varphi(s, \pi) = \sum_a \pi(a | s) \bm\varphi(s, a)$.
Recall that $\bm\lambda(\bm{c}) = \frac{1}{n} \sum_{k = 1}^n c_k \bm\varphi(s_k, a_k)$ where $c_k \in [-B, B]$, $k = 1, \dots, n$.
Following the same argument for expanding $\bm\Phi^T \bm\mu_{\bm\lambda(\bm{c}), \pi}$, we get
\begin{align*}
\bm\Phi^T \widehat{\bm\mu}_{\bm\lambda(\bm{c}), \pi}
&=
(1 - \gamma) \bm\varphi(s_0, \pi) + \gamma \sum_s (\widehat{\bm\Psi^T \bm\lambda}(\bm{c}))_s \bm\varphi(s, \pi) \\
&=
(1 - \gamma) \bm\varphi(s_0, \pi)+ \frac{\gamma}{n} \sum_{k = 1}^n c_k \bm\varphi(s_k', \pi).
\end{align*}
Also, using $\bm\Psi^T \bm\lambda(\bm{c}) = \frac{1}{n} \sum_{k = 1}^n c_k \bm\Psi^T \bm\varphi(s_k, a_k) = \frac{1}{n} \sum_{k = 1}^n c_k P(\cdot | s_k, a_k) = \frac{1}{n} \sum_{k = 1}^n c_k \mathbb{E}[\bm{e}_{s_k'} | s_k, a_k]$, we get
\begin{align*}
\bm\Phi^T \bm\mu_{\bm\lambda(\bm{c}), \pi}
&=
(1 - \gamma) \bm\varphi(s_0, \pi) + \gamma \sum_s (\bm\Psi^T \bm\lambda(\bm{c}))_s \bm\varphi(s, \pi) \\
&=
(1 - \gamma) \bm\varphi(s_0, \pi) + \frac{\gamma}{n} \sum_{k = 1}^n c_k \mathbb{E} [ \bm\varphi(s_k', \pi) | s_k, a_k ].
\end{align*}
Hence,
\begin{align*}
\Vert \bm\Phi^T(\bm\mu_{\bm\lambda(\bm{c}), \pi} - \widehat{\bm\mu}_{\bm\lambda(\bm{c}), \pi}) \Vert_2
&= \gamma \left\Vert \frac{1}{n} \sum_{k = 1}^n c_k (\bm\varphi(s_k', \pi) - \mathbb{E}[\bm\varphi(s_k', \pi) | s_k, a_k ]) \right\Vert_2 \\
&\leq \mathcal{O}\left(B \sqrt{\frac{\log(d / \delta)}{n}}\right)
\end{align*}
where the last inequality uses Matrix Bernstein inequality (Lemma~\ref{lemma:berstein}) with $S_k = c_k \bm\varphi(s_k', \pi) - c_k \mathbb{E}[\bm\varphi(s_k', \pi) | s_k, a_k]$.
\end{proof}

\begin{lemma} \label{lemma:phiTmu-pi}
Given a fixed $\bm\lambda \in \mathcal{C}_n(B)$, we have for all $\pi \in \Pi(D_\pi)$ that
$$
\Vert \bm\Phi^T \bm\mu_{\bm\lambda, \pi} - \bm\Phi^T \widehat{\bm\mu}_{\bm\lambda, \pi} \Vert_2 \leq \mathcal{O}\left(B \sqrt{\frac{\log(d / \delta) + d \log (D_\pi d n)}{n}}\right)
$$
with probability at least $1 - \delta$.
\end{lemma}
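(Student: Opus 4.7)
The strategy is a standard cover-and-union-bound argument on the softmax class $\Pi(D_\pi)$, using Lemma~\ref{lemma:phiTmu} as the pointwise concentration ingredient. First I would fix an accuracy $\epsilon_0 = 1/n$ and invoke Lemma~\ref{lemma:cover-pi} to obtain a cover $\mathcal{C}_\pi \subseteq \Pi(D_\pi)$ of size $\log \lvert \mathcal{C}_\pi \rvert \leq d \log(1 + 16 D_\pi n) = \mathcal{O}(d \log(D_\pi n))$ with respect to $\Vert \cdot \Vert_{\infty, 1}$. Applying Lemma~\ref{lemma:phiTmu} with failure probability $\delta / \lvert \mathcal{C}_\pi \rvert$ for each $\pi' \in \mathcal{C}_\pi$ and taking a union bound yields, with probability at least $1 - \delta$,
$$
\Vert \bm\Phi^T \bm\mu_{\bm\lambda, \pi'} - \bm\Phi^T \widehat{\bm\mu}_{\bm\lambda, \pi'} \Vert_2 \leq \mathcal{O}\!\left(B \sqrt{\frac{\log(d/\delta) + d\log(D_\pi n)}{n}}\right) \qquad \text{for all } \pi' \in \mathcal{C}_\pi.
$$

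Next, for an arbitrary $\pi \in \Pi(D_\pi)$, pick $\pi' \in \mathcal{C}_\pi$ with $\Vert \pi - \pi' \Vert_{\infty,1} \leq \epsilon_0$ and use the triangle inequality to decompose
$$
\Vert \bm\Phi^T(\bm\mu_{\bm\lambda,\pi} - \widehat{\bm\mu}_{\bm\lambda,\pi}) \Vert_2 \leq \Vert \bm\Phi^T(\bm\mu_{\bm\lambda,\pi} - \bm\mu_{\bm\lambda,\pi'}) \Vert_2 + \Vert \bm\Phi^T(\bm\mu_{\bm\lambda,\pi'} - \widehat{\bm\mu}_{\bm\lambda,\pi'}) \Vert_2 + \Vert \bm\Phi^T(\widehat{\bm\mu}_{\bm\lambda,\pi'} - \widehat{\bm\mu}_{\bm\lambda,\pi}) \Vert_2.
$$
The middle term is bounded by the uniform-over-cover inequality above. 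For the two approximation terms, I would use the explicit formulas derived in the proof of Lemma~\ref{lemma:phiTmu}, namely
$$
\bm\Phi^T \bm\mu_{\bm\lambda(\bm{c}),\pi} = (1-\gamma)\bm\varphi(s_0, \pi) + \tfrac{\gamma}{n}\textstyle\sum_{k=1}^n c_k \,\mathbb{E}[\bm\varphi(s_k', \pi) \mid s_k, a_k],
$$
and the analogous identity for $\widehat{\bm\mu}$ with $\mathbb{E}[\bm\varphi(s_k', \pi) \mid s_k, a_k]$ replaced by $\bm\varphi(s_k', \pi)$. Since $\bm\varphi(s, \pi) - \bm\varphi(s, \pi') = \sum_a (\pi(a|s) - \pi'(a|s))\bm\varphi(s,a)$ and $\Vert \bm\varphi(s,a) \Vert_2 \leq 1$, each such difference is controlled by $\Vert \pi - \pi' \Vert_{\infty,1} \leq \epsilon_0$. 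Combined with $\tfrac{1}{n}\sum_k |c_k| \leq B$, both approximation terms are at most $\mathcal{O}(B \epsilon_0) = \mathcal{O}(B/n)$, which is lower order than the statistical term.

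Summing the three pieces gives the claimed bound (the extra factor of $d$ inside the logarithm is absorbed by loosening $\log(D_\pi n)$ to $\log(D_\pi d n)$). The only subtlety worth being careful about is making sure the conditioning in Lemma~\ref{lemma:phiTmu} is compatible with the union bound: since that lemma is already stated conditional on $\{(s_k, a_k)\}_{k=1}^n$, the cover $\mathcal{C}_\pi$ (which is a deterministic subset of $\Pi(D_\pi)$ depending only on $d$ and $D_\pi$) may be treated as fixed, and the union bound over $\mathcal{C}_\pi$ goes through directly. I do not expect any substantive obstacle; the entire argument is covering plus two one-line Lipschitz estimates for the map $\pi \mapsto \bm\varphi(s, \pi)$.
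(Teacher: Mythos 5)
Your proposal is correct and follows essentially the same route as the paper's proof: an $\Vert \cdot \Vert_{\infty,1}$-cover of $\Pi(D_\pi)$ via Lemma~\ref{lemma:cover-pi}, a union bound of Lemma~\ref{lemma:phiTmu} over the cover, and Lipschitz estimates for $\pi \mapsto \bm\varphi(s,\pi)$ to control the two approximation terms in the triangle-inequality decomposition. The only (harmless) deviation is that you bound the true-measure approximation term using the representation with $\mathbb{E}[\bm\varphi(s_k',\pi)\mid s_k,a_k]$, giving $\mathcal{O}(B\epsilon_0)$ directly, whereas the paper bounds it through $\bm\Psi^T\bm\lambda$ with the boundedness assumption on $\bm\Psi$ (incurring a $\sqrt{d}D_\psi B$ factor) and compensates by taking $\varepsilon = 1/\sqrt{dn}$; both yield the same final bound.
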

\begin{proof}
Consider an $\varepsilon$-cover of $\Pi(D_\pi)$ with covering balls    when measuring distances with the norm $\Vert \pi - \pi' \Vert_{\infty, 1} = \sup_{s \in \mathcal{S}} \sum_{a \in \mathcal{A}} \vert \pi(a | s) - \pi'(a | s) \vert$.
By Lemma~\ref{lemma:cover-pi}, there exists such a cover with log covering number
$$
\log \mathcal{N}(\Pi(D_\pi), \Vert \cdot \Vert_{\infty, 1}, \varepsilon) \leq d \log \left(1 + \frac{16 D_\pi}{\varepsilon} \right).
$$
Fix any $\pi \in \Pi(D_\pi)$ and consider its nearest cover center $\pi'$ measuring distances by $\Vert \cdot \Vert_{\infty, 1}$.
Then,
\begin{align*}
\Vert \bm\Phi^T \bm\mu_{\bm\lambda, \pi} - \bm\Phi^T \widehat{\bm\mu}_{\bm\lambda, \pi} \Vert_2
&\leq
\Vert \underbrace{\bm\Phi^T \bm\mu_{\bm\lambda, \pi} - \bm\Phi^T \bm\mu_{\bm\lambda, \pi'}}_{(i)} \Vert_2 + \Vert \underbrace{\bm\Phi^T \bm\mu_{\bm\lambda, \pi'} - \bm\Phi^T \widehat{\bm\mu}_{\bm\lambda, \pi'}}_{(ii)} \Vert_2 + \Vert \underbrace{\bm\Phi^T \widehat{\bm\mu}_{\bm\lambda, \pi'} - \bm\Phi^T \widehat{\bm\mu}_{\bm\lambda, \pi}}_{(iii)} \Vert_2.
\end{align*}
Note that
\begin{align}
\bm\Phi^T \bm\mu_{\bm\lambda, \pi}
&= \sum_{s, a} \mu_{\bm\lambda, \pi}(s, a) \bm\varphi(s, a) \notag \\
&= \sum_{s, a} \pi(a | s) [ (1 - \gamma) \nu_0(s) + \gamma (\bm\Psi^T \bm\lambda)_s ] \bm\varphi(s, a) \notag \\
&= (1 - \gamma) \sum_a \pi(a | s_0) \bm\varphi(s_0, a) + \gamma \sum_{s} (\bm\Psi^T \bm\lambda)_s \sum_a \pi(a | s) \bm\varphi(s, a) \notag \\
&= (1 - \gamma) \bm\varphi(s_0, \pi) + \gamma \sum_s (\bm\Psi^T \bm\lambda)_s \bm\varphi(s, \pi), \label{eqn:phi-mu}
\end{align}
where we use the notation $\bm\varphi(s, \pi) = \sum_a \pi(a | s) \bm\varphi(s, a)$.
The first term $(i)$ can be bounded by
\begin{align*}
\Vert \bm\Phi^T \bm\mu_{\bm\lambda, \pi} &- \bm\Phi^T \bm\mu_{\bm\lambda, \pi'} \Vert_2
\leq (1 - \gamma) \left\Vert \bm\varphi(s_0, \pi - \pi') \right\Vert_2 + \gamma \left\Vert \sum_s (\bm\Psi^T \bm\lambda)_s \bm\varphi(s, \pi - \pi') \right\Vert_2 \\
&= (1 - \gamma) \left\Vert \sum_a (\pi(a | s_0) - \pi'(a | s_0)) \bm\varphi(s_0, a) \right\Vert_2 + \gamma \left\Vert \sum_s (\bm\Phi^T \bm\lambda)_s \sum_a (\pi(a | s) - \pi'(a | s)) \bm\varphi(s, a) \right\Vert_2 \\
&\leq (1 - \gamma) \sum_a \vert \pi(a | s_0) - \pi'(a | s_0) \vert \Vert \bm\varphi(s_0, a) \Vert_2
+ \gamma \sum_s (\vert \bm\Psi \vert^T \vert \bm\lambda \vert)_s \sum_a \vert \pi(a | s) - \pi'(a | s) \vert \Vert \bm\varphi(s, a) \Vert_2 \\
&\leq (1 - \gamma) \varepsilon + \gamma \varepsilon \bm{1}_\mathcal{S}^T \vert \bm\Psi \vert^T \vert \bm\lambda \vert \\
&\leq
(1 - \gamma) \varepsilon + \gamma \varepsilon D_\psi \sqrt{d} \Vert \bm\lambda \Vert_2 \\
&\leq
(1 - \gamma) \varepsilon + \gamma \varepsilon \sqrt{d} D_\psi B
\end{align*}
where the second to last inequality uses the boundedness assumption on $\bm\Psi$ and the last inequality follows by $\Vert \bm\lambda \Vert_2 \leq B$.

The second term $(ii)$ can be bounded by a union bound of the concentration inequality in Lemma~\ref{lemma:phiTmu} across all $\pi'$ in the cover, resulting in the following bound
$$
\Vert \bm\Phi^T \bm\mu_{\bm\lambda, \pi'} - \bm\Phi^T \widehat{\bm\mu}_{\bm\lambda, \pi'} \Vert_2 \leq \mathcal{O}\left( B \sqrt{\frac{\log(d / \delta) + d \log(D_\pi / \varepsilon)}{n}} \right).
$$

To bound the third term $(iii)$, note that
\begin{align}
\bm\Phi^T \widehat{\bm\mu}_{\bm\lambda, \pi}
&=
(1 - \gamma) \bm\varphi(s_0, \pi) + \gamma \sum_s (\widehat{\bm\Psi^T \bm\lambda})_s \bm\varphi(s, \pi) \notag \\
&=
(1 - \gamma) \bm\varphi(s_0, \pi)+ \frac{\gamma}{n} \sum_{k = 1}^n c_k \bm\varphi(s_k', \pi). \label{eqn:phiTmuhat}
\end{align}
where $\bm\lambda = \frac{1}{n} \sum_{k = 1}^n c_k \bm\varphi(s_k, a_k)$.
Therefore,
\begin{align*}
\Vert \bm\Phi^T \widehat{\bm\mu}_{\bm\lambda, \pi'} - \bm\Phi^T \widehat{\bm\mu}_{\bm\lambda, \pi} \Vert_2
&\leq
(1 - \gamma) \Vert \bm\varphi(s_0, \pi - \pi') \Vert_2 + \frac{\gamma}{n} \sum_{k = 1}^n c_k \Vert \bm\varphi(s_k', \pi - \pi') \Vert_2 \\
&\leq
(1 - \gamma) \varepsilon + \gamma B \varepsilon
\end{align*}
where the last inequality uses $\Vert \bm\varphi(s, \pi - \pi') \Vert_2 = \Vert \sum_a (\pi(a | s) - \pi'(a | s)) \bm\varphi(s, a) \Vert_2 \leq \sum_a \vert \pi(a | s) - \pi'(a | s) \vert \Vert \bm\varphi(s, a) \Vert_2 \leq \varepsilon$.
Combining the bounds of the three terms, we get
$$
\Vert \bm\Phi^T \bm\mu_{\bm\lambda, \pi} - \bm\Phi^T \widehat{\bm\mu}_{\bm\lambda, \pi} \Vert_2 \leq \mathcal{O}\left(B \sqrt{\frac{\log(d / \delta) + d \log(D_\pi / \varepsilon)}{n}} + \sqrt{d} B \varepsilon \right).
$$
Choosing $\varepsilon = 1 / \sqrt{dn}$, we get the desired result.
\end{proof}

\begin{lemma} \label{lemma:regret-zeta}
The sequences $\{ \pi_t \}, \{ \bm\lambda(\bm{c}'_t) \}, \{\bm\zeta_t \}$ produced by Algorithm~\ref{alg:pdapc} satisfies
$$
\textsc{Reg}^\zeta_t = \langle \bm\zeta_t - \bm\zeta^{\pi_t}, \bm\Phi^T \bm\mu_{\bm\lambda(\bm{c}'_t), \pi_t} - \bm\lambda(\bm{c}'_t) \rangle \leq \mathcal{O}\left(
\frac{C^\ast d}{1 - \gamma} \sqrt{\frac{\log (d n T (\log \vert \mathcal{A} \vert) / \delta)}{n}}
\right)
$$
for $t = 1, \dots, T$ with probability at least $1 - \delta$.
\end{lemma}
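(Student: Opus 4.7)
The plan is to deploy the four-term decomposition of $\textsc{Reg}^\zeta_t$ already sketched in the paper,
\begin{align*}
\textsc{Reg}^\zeta_t
&= \underbrace{\langle \bm\zeta_t - \bm\zeta^{\pi_t},\, \bm\Phi^T \bm\mu_{\bm\lambda(\bm{c}'_t), \pi_t} - \bm\Phi^T \bm\mu_{\bm\lambda(\bm{c}''_t), \pi_t} \rangle}_{\text{(i)}} \\
&\quad + \underbrace{\langle \bm\zeta_t - \bm\zeta^{\pi_t},\, \bm\Phi^T \bm\mu_{\bm\lambda(\bm{c}''_t), \pi_t} - \bm\Phi^T \widehat{\bm\mu}_{\bm\lambda(\bm{c}''_t), \pi_t} \rangle}_{\text{(ii)}} \\
&\quad + \underbrace{\langle \bm\zeta_t - \bm\zeta^{\pi_t},\, \bm\Phi^T \widehat{\bm\mu}_{\bm\lambda(\bm{c}''_t), \pi_t} - \bm\Phi^T \widehat{\bm\mu}_{\bm\lambda(\bm{c}'_t), \pi_t} \rangle}_{\text{(iii)}} \\
&\quad + \underbrace{\langle \bm\zeta_t - \bm\zeta^{\pi_t},\, \bm\Phi^T \widehat{\bm\mu}_{\bm\lambda(\bm{c}'_t), \pi_t} - \bm\lambda(\bm{c}'_t) \rangle}_{\text{(iv)}},
\end{align*}
where $\bm{c}''_t$ is the nearest point to $\bm{c}'_t$ in a fixed $\varepsilon$-cover (in $\Vert\cdot\Vert_\infty$) of the box $\{\bm{c}\in[-2C^\ast,2C^\ast]^n : c_j=0\text{ for }j\notin\mathcal{I}\}$. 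Because Definition~\ref{definition:conversion} places every $\bm{c}'_t$ inside this effectively $|\mathcal{I}|\le d$ dimensional box, Lemma~\ref{lemma:cover-ball} provides such a cover with log cardinality $\mathcal{O}(d\log(C^\ast/\varepsilon))$.

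Throughout I will use $\Vert\bm\zeta_t - \bm\zeta^{\pi_t}\Vert_2\le 2D_\zeta$ together with Cauchy--Schwarz to reduce each of (i)--(iv) to the Euclidean norm of its second factor. Term (iv) is nonpositive by construction: $\bm\zeta_t$ is the greedy minimizer on $\mathbb{B}_d(D_\zeta)$ in Line~\ref{alg-line:zeta-unconstrained}, and $\bm\zeta^{\pi_t}\in\mathbb{B}_d(D_\zeta)$ by the preamble bound on $\Vert\bm\zeta^\pi\Vert_2$. Terms (i) and (iii) are pure discretization errors. Expanding $\bm\Phi^T\bm\mu_{\bm\lambda,\pi}$ and $\bm\Phi^T\widehat{\bm\mu}_{\bm\lambda,\pi}$ via \eqref{eqn:phi-mu} and \eqref{eqn:phiTmuhat} shows that both are affine in $\bm\lambda$; since $\bm{c}'_t-\bm{c}''_t$ is supported on $\mathcal{I}$ with $|\mathcal{I}|\le d$ entries of magnitude at most $\varepsilon$, linearity of $\bm\lambda(\cdot)$ together with the boundedness assumption on $\bm\psi$ yields $\Vert\bm\Phi^T\bm\mu_{\bm\lambda(\bm{c}'_t),\pi_t}-\bm\Phi^T\bm\mu_{\bm\lambda(\bm{c}''_t),\pi_t}\Vert_2 \le \gamma d\varepsilon/n$ and the same for the $\widehat{\bm\mu}$ version. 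Choosing $\varepsilon = 1/(dnT)$ makes (i) and (iii) of order $D_\zeta/(nT)$, hence negligible.

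Term (ii) is the real concentration piece. For any fixed $\bm{c}''$ in the cover, Lemma~\ref{lemma:phiTmu-pi} already delivers a bound uniform over $\pi\in\Pi(D_\pi)$. A single union bound across the cover, whose log cardinality is $\mathcal{O}(d\log(C^\ast dnT))$, yields, with probability at least $1-\delta$,
$$\Vert \bm\Phi^T\bm\mu_{\bm\lambda(\bm{c}''),\pi}-\bm\Phi^T\widehat{\bm\mu}_{\bm\lambda(\bm{c}''),\pi}\Vert_2 \;=\; \mathcal{O}\!\left(C^\ast\sqrt{\frac{d\log(D_\pi dnT/\delta)}{n}}\right)$$
simultaneously for every cover point and every $\pi\in\Pi(D_\pi)$, hence in particular for $(\bm{c}''_t,\pi_t)$ at all $t$. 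Substituting $D_\pi=\alpha T D_\zeta$ with the choice of $\alpha$ from the $\pi$-player analysis and multiplying by $2D_\zeta=\mathcal{O}(D_\psi\sqrt{d}/(1-\gamma))$ yields the advertised $\mathcal{O}\bigl(\tfrac{C^\ast d}{1-\gamma}\sqrt{\log(dnT\log|\mathcal{A}|/\delta)/n}\bigr)$ bound; summing with (i), (iii), (iv) finishes the claim.

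The main obstacle is making the cover on $\bm\lambda$ cheap: a naive covering of $[-C^\ast,C^\ast]^n$ has log cardinality $\mathcal{O}(n)$, which would wreck the sample complexity. The barycentric-spanner conversion of Definition~\ref{definition:conversion} is precisely what collapses the effective dimension of the coefficient space to $d$ while exactly preserving $\bm\lambda(\bm{c}_t)=\bm\lambda(\bm{c}'_t)$, and verifying this dimension collapse (and that its slight inflation of coordinates from $[-C^\ast,C^\ast]$ to $[-2C^\ast,2C^\ast]$ is harmless) is the step that requires the most care. After that, one only has to choose $\varepsilon$ polynomially small so that the discretization slack in (i), (iii) is absorbed without inflating the $\log(1/\varepsilon)$ factor inside (ii).
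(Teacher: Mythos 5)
Your proposal follows the paper's own proof essentially verbatim: the same four-term decomposition built on the barycentric-spanner conversion and an $\varepsilon$-cover of the effectively $d$-dimensional coefficient box, the same union bound over the cover applied to the policy-uniform concentration of Lemma~\ref{lemma:phiTmu-pi}, the same greedy argument making the last term nonpositive, and the same substitution of $D_\zeta = \mathcal{O}(\sqrt{d}/(1-\gamma))$ and $D_\pi = \mathcal{O}(\sqrt{T\log\vert\mathcal{A}\vert})$ at the end. The only (harmless) discrepancy is your stated discretization bound $\gamma d \varepsilon / n$ for the two approximation terms, which omits the extra $D_\psi d$-type factor incurred when propagating $\bm\lambda(\bm{c}'_t)-\bm\lambda(\bm{c}''_t)$ through $\bm\Psi$; since you take $\varepsilon$ polynomially small this does not affect the conclusion.
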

\begin{proof}
Recall that $\mathcal{I} \subseteq \{1, \dots, n\}$ is an index set of size $d$ such that $\{ \bm\varphi(s_j, a_j) \}_{j \in \mathcal{I}}$ is a 2-approximate barycentric spanner for $\{ \bm\varphi(s_k, a_k) \}_{k = 1}^n$.
Let $\mathcal{C}_n'(C^\ast) = \{ \bm{c}' \in [-2C^\ast, 2C^\ast]^n : c_j' = 0 ~\text{if}~ j \in \mathcal{I} \}$.
Consider a $\varepsilon$-cover of $\mathcal{C}'_n(C^\ast)$ with respect to distance induced by $\Vert \cdot \Vert_\infty$ where $\varepsilon$ is to be chosen later, and let $\bm{c}''_t$ be the closest covering center to $\bm{c}'_t$.
There exists a cover with covering number $(1 + 4 C^\ast / \varepsilon)^d$.
We can decompose the regret of $\zeta$-player at step $t$ into
\begin{align*}
\textsc{Reg}_t^\zeta
&= \langle \bm\zeta_t - \bm\zeta^{\pi_t}, \bm\Phi^T \bm\mu_{\bm\lambda(\bm{c}_t'), \pi_t} - \bm\lambda(\bm{c}_t') \rangle \\
&= \underbrace{\langle \bm\zeta_t - \bm\zeta^{\pi_t}, \bm\Phi^T \bm\mu_{\bm\lambda(\bm{c}_t'), \pi_t} - \bm\Phi^T \bm\mu_{\bm\lambda_t(\bm{c}''_t), \pi_t} \rangle}_{(a)}
+ \underbrace{\langle \bm\zeta_t - \bm\zeta^{\pi_t}, \bm\Phi^T \bm\mu_{\bm\lambda(\bm{c}''_t), \pi_t} - \bm\Phi^T \widehat{\bm\mu}_{\bm\lambda(\bm{c}''_t), \pi_t} \rangle}_{(b)} \\
&\quad\quad+ \underbrace{\langle \bm\zeta_t - \bm\zeta^{\pi_t}, \bm\Phi^T \widehat{\bm\mu}_{\bm\lambda(\bm{c}_t''), \pi_t} - \bm\Phi^T \widehat{\bm\mu}_{\bm\lambda(\bm{c}_t'), \pi_t} \rangle}_{(c)}
+ \underbrace{\langle \bm\zeta_t - \bm\zeta^{\pi_t}, \bm\Phi^T \widehat{\bm\mu}_{\bm\lambda(\bm{c}_t'), \pi_t} - \bm\lambda(\bm{c}_t') \rangle}_{(d)}.
\end{align*}

\paragraph{Bounding $(a)$}
Recall from equation \eqref{eqn:phi-mu} that
$$
\bm\Phi^T \bm\mu_{\bm\lambda, \pi} = (1 - \gamma) \bm\varphi(s_0, \pi) + \gamma \sum_s (\bm\Psi^T \bm\lambda)_s \bm\varphi(s, \pi)
$$
where we use the notation $\bm\varphi(s, \pi) = \sum_a \pi(a | s) \bm\varphi(s, a)$.
Also, since $\Vert \bm{c}_t' - \bm{c}_t'' \Vert_\infty \leq \varepsilon$, we have
$$
\left\Vert \bm\lambda(\bm{c}_t') - \bm\lambda(\bm{c}_t'') \right\Vert_2 = \frac{1}{n} \left\Vert \sum_{k = 1}^n (c_{tk}' - c_{tk}'') \bm\varphi(s_k, a_k) \right\Vert_2
\leq
\frac{1}{n} \sum_{k = 1}^n \vert c_{tk}' - c_{tk}'' \vert \Vert \bm\varphi(s_k, a_k) \Vert_2 \leq \varepsilon.
$$

Hence,
\begin{align*}
\Vert \bm\Phi^T \bm\mu_{\bm\lambda(\bm{c}_t'), \pi_t} - \bm\Phi^T \bm\mu_{\bm\lambda(\bm{c}_t''), \pi_t} \Vert_2
&=
\gamma \Vert \sum_{s \in \mathcal{S}} (\bm\Psi^T (\bm\lambda(\bm{c}_t') - \bm\lambda(\bm{c}_t'')))_s \bm\varphi(s, \pi) \Vert \\
&\leq
\gamma \sum_{s \in \mathcal{S}} \vert (\bm\Psi^T(\bm\lambda(\bm{c}_t') - \bm\lambda(\bm{c}_t'')))_s \vert \Vert \bm\varphi(s, \pi) \Vert_2 \\
&\leq
\gamma \sum_{s \in \mathcal{S}} \vert (\bm\Psi^T (\bm\lambda(\bm{c}_t') - \bm\lambda(\bm{c}_t'')))_s \vert \\
&\leq
\gamma \varepsilon \bm{1}_{\mathcal{S}}^T \vert \bm\Psi \vert^T \bm{1}_d \\
&\leq
\gamma \varepsilon D_\psi d
\end{align*}
where we use the notation $\vert \bm\Psi \vert$ for the matrix that takes element-wise absolute value of $\bm\Psi$.
The second inequality follows since $\Vert \bm\psi(s, \pi) \Vert_2 = \Vert \sum_a \pi(a | s) \bm\varphi(s, a) \Vert_2 \leq \sum_a \pi(a | s) \Vert \bm\varphi(s, a) \Vert_2 \leq \sum_a \pi(a | s) = 1$.
The last inequality follows by the boundedness assumption on $\bm\Phi$.
Hence, choosing $\varepsilon = 1 / \sqrt{dn}$, Term $(a)$ can be bounded by
\begin{align*}
\langle \bm\zeta_t - \bm\zeta_{\bm{w}_t}^{\pi_t}, \bm\Phi^T \bm\mu_{\bm\lambda_t, \pi_t} - \bm\Phi^T \bm\mu_{\bm\lambda_t', \pi_t} \rangle
&\leq
\Vert \bm\zeta_t - \bm\zeta_{\bm{w}_t}^{\pi_t} \Vert_2
\Vert \bm\Phi^T \bm\mu_{\bm\lambda_t, \pi_t} - \bm\Phi^T \bm\mu_{\bm\lambda_t', \pi_t} \Vert_2 \\
&\leq
\frac{2 \gamma D_\zeta D_\psi \sqrt{d}}{\sqrt{n}}.
\end{align*}

\paragraph{Bounding $(b)$}
The second term can be bounded by a union bound of the concentration inequality in Lemma~\ref{lemma:phiTmu-pi} over a $(1 / \sqrt{dn})$-cover of $\mathcal{C}_n'(C^\ast) = \{ \bm{c}' \in [-2C^\ast, 2C^\ast]^n : c_j' = 0 ~\text{if}~ j \in \mathcal{I} \}$, which gives
\begin{align*}
\langle \bm\zeta_t - \bm\zeta^{\pi_t}, \bm\Phi^T \bm\mu_{\bm\lambda_t', \pi_t} - \bm\Phi^T \widehat{\bm\mu}_{\bm\lambda_t', \pi_t} \rangle
&\leq
\Vert \bm\zeta_t - \bm\zeta^{\pi_t} \Vert_2
\Vert \bm\Phi^T \bm\mu_{\bm\lambda_t', \pi_t} - \bm\Phi^T \widehat{\bm\mu}_{\bm\lambda_t', \pi_t} \Vert_2 \\
&\leq
\mathcal{O}\left(
D_\zeta C^\ast \sqrt{\frac{d \log (D_\pi d n / \delta)}{n}}
\right)
\end{align*}

\paragraph{Bounding $(c)$}
Recall from \eqref{eqn:phiTmuhat} that
$\bm\Phi^T \widehat{\bm\mu}_{\bm\lambda, \pi} = (1 - \gamma) \bm\varphi(s_0, \pi)+ \frac{\gamma}{n} \sum_{k = 1}^n c_k \bm\varphi(s_k', \pi)$.
Since $\Vert c_t' - c_t'' \Vert_\infty \leq 1 / \sqrt{dn}$, we have
$$
\Vert \bm\Phi^T \widehat{\bm\mu}_{\bm\lambda(\bm{c}_t''), \pi_t} - \bm\Phi^T \widehat{\bm\mu}_{\bm\lambda(\bm{c}_t'), \pi_t} \Vert_2
=
\frac{\gamma}{n} \left\Vert \sum_{k = 1}^n (c_{tk}'' - c_{tk}') \bm\varphi(s_k', \pi_t) \right\Vert_2
\leq
\gamma / \sqrt{dn}.
$$
It follows by Cauchy-Schwartz that
$$
\langle \bm\zeta_t - \bm\zeta^{\pi_t}, \bm\Phi^T \widehat{\bm\mu}_{\bm\lambda(\bm{c}_t''), \pi_t} - \bm\Phi^T \widehat{\bm\mu}_{\bm\lambda(\bm{c}_t'), \pi_t} \rangle
\leq
\mathcal{O}(D_\zeta / \sqrt{dn}).
$$

\paragraph{Bounding $(d)$}
Recall that $\zeta$-player chooses $\bm\zeta_t \in \mathbb{B}_d(D_\zeta)$ greedily that minimizes $\langle \cdot, \bm\Phi^T \widehat{\bm\mu}_{\bm\lambda(\bm{c}_t'), \pi_t} - \bm\lambda(\bm{c}_t') \rangle$ and that $\bm\zeta_{\bm{w}_t}^{\pi_t} \in \mathbb{B}_d(D_\zeta)$.
Hence, the term $(d)$ can be bounded by
$$
\langle \bm\zeta_t - \bm\zeta^{\pi_t}, \bm\Phi^T \widehat{\bm\mu}_{\bm\lambda(\bm{c}_t'), \pi_t} - \bm\lambda(\bm{c}_t') \rangle \leq 0.
$$

Combining all the bounds, and using $D_\zeta \coloneqq \sqrt{d} + \frac{\gamma D_\psi \sqrt{d}}{1 - \gamma} \leq \mathcal{O}(\frac{\sqrt{d}}{1 - \gamma})$ and $D_\pi = \alpha T D_\zeta \leq \mathcal{O}(\sqrt{\log \vert \mathcal{A} \vert T})$, we get
$$
\textsc{Reg}_t^\zeta \leq \mathcal{O}\left(\frac{C^\ast d}{1 - \gamma} \sqrt{\frac{\log (dn T (\log \vert \mathcal{A} \vert) / \delta)}{n}}\right).
$$
\end{proof}

\subsection{Bounding the Regret of $\lambda$-Player} \label{appendix:regret-lambda}

\begin{lemma} \label{lemma:lambda-regret}
The sequences $\{ \pi_t \}, \{ \bm\lambda(\bm{c}_t) \}, \{\bm\zeta_t \}$ produced by Algorithm~\ref{alg:pdapc} satisfies
$$
\frac{1}{T} \sum_{t = 1}^T \textsc{Reg}^\lambda_t \leq \mathcal{O}\left(\frac{C^\ast d^{3/ 2}}{1 - \gamma} \sqrt{\frac{\log(dnT (\log \vert \mathcal{A} \vert) / \delta)}{n}} \right) + \epsilon_{\text{opt}}^\lambda(T)
$$
with probability at least $1 - \delta$.
\end{lemma}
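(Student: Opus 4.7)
}
The plan is to follow the sketch in the main text and execute the four telescoping substitutions in order: (i) replace the unknown comparator $\bm\lambda^\ast$ by an in-span surrogate $\widehat{\bm\lambda}^\ast \in \mathcal{C}_n(C^\ast)$ supplied by Lemma~\ref{lemma:lambda-hat}; (ii) replace the unknown vector $\bm\xi_t = \bm\theta + \gamma \bm\Psi \bm{v}_{\bm\zeta_t,\pi_t} - \bm\zeta_t$ by the observable surrogate $\widehat{\bm\xi}_t = \bm\theta - \bm\zeta_t + \gamma\,\widehat{\bm\Psi\bm{v}}_{\bm\zeta_t,\pi_t}$ that is actually fed to \textsc{OCO}; (iii) invoke the no-regret guarantee of the online convex optimization oracle against the fixed comparator $\widehat{\bm\lambda}^\ast$; (iv) control the per-step estimation error via a weighted Cauchy--Schwarz split using the empirical Gram matrix.

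\paragraph{Step-by-step decomposition.}
Writing $\bm\xi_t = \bm\theta + \gamma\bm\Psi\bm{v}_{\bm\zeta_t,\pi_t}-\bm\zeta_t$, I split
\begin{align*}
\textsc{Reg}_t^\lambda
= \langle \bm\lambda^\ast-\widehat{\bm\lambda}^\ast,\bm\xi_t\rangle
+ \langle \widehat{\bm\lambda}^\ast-\bm\lambda_t,\widehat{\bm\xi}_t\rangle
+ \gamma \langle \widehat{\bm\lambda}^\ast-\bm\lambda_t,\,\bm\Psi\bm{v}_{\bm\zeta_t,\pi_t}-\widehat{\bm\Psi\bm{v}}_{\bm\zeta_t,\pi_t}\rangle.
\end{align*}
For the first term I use Cauchy--Schwarz together with $\|\bm\xi_t\|_2 \leq \|\bm\theta\|_2 + \|\bm\zeta_t\|_2 + \gamma \|\bm\Psi\bm{v}_{\bm\zeta_t,\pi_t}\|_2 = \mathcal{O}(D_\zeta) = \mathcal{O}(\sqrt{d}/(1-\gamma))$ and Lemma~\ref{lemma:lambda-hat}, which gives $\mathcal{O}\bigl(C^\ast \sqrt{d}/(1-\gamma)\cdot\sqrt{\log(d/\delta)/n}\bigr)$ uniformly in $t$. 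For the second term, summed over $t$ and divided by $T$, the OCO regret bound of Definition~\ref{def:oco} directly yields $\epsilon_{\text{opt}}^\lambda(T)$, provided I check that $\widehat{\bm\xi}_t$ lies (after rescaling by a constant) in $[-1,1]^d$; since $\|\widehat{\bm\Psi\bm{v}}_{\bm\zeta_t,\pi_t}\|_2$ and $\|\bm\zeta_t\|_2$ are bounded by $\mathcal{O}(D_\zeta)$ on the high-probability event, this rescaling is absorbed in constants.

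\paragraph{Controlling the estimation error.}
For the third term I apply weighted Cauchy--Schwarz in the metric induced by $\widehat{\bm\Lambda}_n+\bm{I}/n$:
\begin{align*}
\bigl|\langle \widehat{\bm\lambda}^\ast-\bm\lambda_t,\,\bm\Psi\bm{v}_{\bm\zeta_t,\pi_t}-\widehat{\bm\Psi\bm{v}}_{\bm\zeta_t,\pi_t}\rangle\bigr|
\leq \|\widehat{\bm\lambda}^\ast-\bm\lambda_t\|_{(\widehat{\bm\Lambda}_n+\bm{I}/n)^{-1}}
\,\|\bm\Psi\bm{v}_{\bm\zeta_t,\pi_t}-\widehat{\bm\Psi\bm{v}}_{\bm\zeta_t,\pi_t}\|_{\widehat{\bm\Lambda}_n+\bm{I}/n}.
\end{align*}
Both $\widehat{\bm\lambda}^\ast$ and $\bm\lambda_t$ are linear combinations of $\{\bm\varphi(s_k,a_k)\}$ with coefficients in $[-C^\ast,C^\ast]/n$ after the barycentric conversion, so their difference lies in the column space of $\widehat{\bm\Lambda}_n$; hence $\|\cdot\|_{(\widehat{\bm\Lambda}_n+\bm{I}/n)^{-1}} \le \|\cdot\|_{\widehat{\bm\Lambda}_n^\dagger}$, and Lemma~\ref{lemma:confidence1} gives $\|\widehat{\bm\lambda}^\ast-\bm\lambda_t\|_{\widehat{\bm\Lambda}_n^\dagger}\le \mathcal{O}(C^\ast\sqrt{d})$. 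The remaining factor is bounded uniformly over $(\bm\zeta_t,\pi_t)\in \mathbb{B}_d(D_\zeta)\times \Pi(D_\pi)$ by Lemma~\ref{lemma:ls-uniform} with $D_v = \mathcal{O}(D_\zeta) = \mathcal{O}(\sqrt{d}/(1-\gamma))$, yielding $\mathcal{O}(\sqrt{d}/(1-\gamma)\cdot\sqrt{d\log(D_\zeta D_\pi n/\delta)})$. Multiplying the two factors gives per-step error $\mathcal{O}(C^\ast d^{3/2}/(1-\gamma)\cdot\sqrt{\log(dnT(\log|\mathcal{A}|)/\delta)/n})$ after substituting $D_\pi = \alpha T D_\zeta$ and $\alpha = \mathcal{O}(\sqrt{\log|\mathcal{A}|/T})$.

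\paragraph{Main obstacle and conclusion.}
The delicate point is the interplay between the two factors in the weighted Cauchy--Schwarz step: the natural regularized norm $\widehat{\bm\Lambda}_n + \bm{I}/n$ is what appears in Lemma~\ref{lemma:ls-uniform}, but Lemma~\ref{lemma:confidence1} only bounds the pseudoinverse norm, so I must justify the drop from $(\widehat{\bm\Lambda}_n+\bm{I}/n)^{-1}$ to $\widehat{\bm\Lambda}_n^\dagger$ via the column-space argument; this is precisely where the restriction that the $\lambda$-player plays inside $\mathcal{C}_n(C^\ast)$ pays off. Averaging the three bounds over $t=1,\dots,T$, replacing $D_\zeta D_\pi$ by its explicit value, and absorbing logarithmic factors delivers the advertised
$$
\frac{1}{T}\sum_{t=1}^T \textsc{Reg}_t^\lambda \le \mathcal{O}\!\left(\frac{C^\ast d^{3/2}}{1-\gamma}\sqrt{\frac{\log(dnT(\log|\mathcal{A}|)/\delta)}{n}}\right) + \epsilon_{\text{opt}}^\lambda(T),
$$
with the high-probability event being the intersection of the events from Lemmas~\ref{lemma:lambda-hat} and \ref{lemma:ls-uniform}, which holds with probability at least $1-\delta$ after a union bound with rescaled confidence levels.
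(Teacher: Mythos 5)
Your proof follows essentially the same route as the paper: the identical three-term decomposition via $\widehat{\bm\lambda}^\ast$ and $\widehat{\bm\Psi \bm{v}}_{\bm\zeta_t,\pi_t}$, the no-regret oracle applied to the surrogate losses for the middle term, Lemma~\ref{lemma:lambda-hat} plus Cauchy--Schwarz for the comparator swap, and the weighted Cauchy--Schwarz with the column-space drop to $\widehat{\bm\Lambda}_n^\dagger$ combined with Lemmas~\ref{lemma:confidence1} and \ref{lemma:ls-uniform} for the estimation-error term. The only blemishes are cosmetic and do not affect the result: $\Vert \bm\xi_t \Vert_2$ is in fact $\mathcal{O}(D_\zeta \sqrt{d})$ rather than $\mathcal{O}(D_\zeta)$ (since $\Vert \bm\Psi \bm{v} \Vert_2 \leq D_\psi \sqrt{d}\, \Vert \bm{v} \Vert_\infty$), which still leaves that term lower-order, and Lemma~\ref{lemma:ls-uniform} is stated in the $n\widehat{\bm\Lambda}_n + \bm{I}$ norm, so the factor $1/\sqrt{n}$ you ultimately carry into the final per-step bound should be made explicit when passing to the $\widehat{\bm\Lambda}_n + \bm{I}/n$ norm.
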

\begin{proof}
The regret of $\lambda$-player at step $t$ can be bounded by
\begin{align*}
\textsc{Reg}_t^\lambda
&= f(\bm\zeta_t, \bm\lambda^\ast, \pi_t) - f(\bm\zeta_t, \bm\lambda(\bm{c}_t), \pi_t) \\
&=
\langle \bm\lambda^\ast - \bm\lambda(\bm{c}_t), \bm\theta + \gamma \bm\Psi \bm{v}_{\bm\zeta_t, \pi_t} - \bm\zeta_t \rangle \\
&=
\langle \widehat{\bm\lambda}^\ast - \bm\lambda(\bm{c}_t), \bm\theta + \gamma \widehat{\bm\Psi \bm{v}}_{\bm\zeta_t, \pi_t} - \bm\zeta_t \rangle
+ \gamma \langle \widehat{\bm\lambda}^\ast - \bm\lambda(\bm{c}_t), \bm\Psi \bm{v}_{\bm\zeta_t, \pi_t} - \widehat{\bm\Psi \bm{v}}_{\bm\zeta_t, \pi_t} \rangle
+ \langle \bm\lambda^\ast - \widehat{\bm\lambda}^\ast, \bm\theta + \gamma \bm\Psi \bm{v}_{\bm\zeta_t, \pi_t} - \bm\zeta_t \rangle.
\end{align*}
The average of the first term over $t = 1, \dots, T$ is $\epsilon_{\text{opt}}^\lambda(T)$ which vanishes as $T$ increases since the $\lambda$-player employs a no-regret online convex optimization oracle (Definition~\ref{def:oco}) on the sequence of functions $\langle \cdot, \bm\theta + \gamma \widehat{\bm\Psi \bm{v}}_{\bm\zeta_t, \pi_t} - \bm\zeta_t \rangle$.
The second term can be bounded as follows.
\begin{align*}
\langle \widehat{\bm\lambda}^\ast - \bm\lambda(\bm{c}_t), \bm\Psi \bm{v}_{\bm\zeta_t, \pi_t} - \widehat{\bm\Psi \bm{v}}_{\bm\zeta_t, \pi_t} \rangle
&\leq
\Vert \widehat{\bm\lambda}^\ast - \bm\lambda(\bm{c}_t) \Vert_{(n \widehat{\bm\Lambda}_n + \bm{I})^{-1}}
\Vert \bm\Psi \bm{v}_{\bm\zeta_t, \pi_t} - \widehat{\bm\Psi \bm{v}}_{\bm\zeta_t, \pi_t} \Vert_{n \widehat{\bm\Lambda}_n + \bm{I}} \\
&\leq
\frac{1}{\sqrt{n}} \Vert \widehat{\bm\lambda}^\ast - \bm\lambda(\bm{c}_t) \Vert_{\widehat{\bm\Lambda}_n^\dagger}
\Vert \bm\Psi \bm{v}_{\bm\zeta_t, \pi_t} - \widehat{\bm\Psi \bm{v}}_{\bm\zeta_t, \pi_t} \Vert_{n \widehat{\bm\Lambda}_n + \bm{I}} \\
&\leq
\frac{2}{\sqrt{n}} C^\ast \sqrt{d} \cdot \mathcal{O}(D_v \sqrt{d \log (D_\zeta D_\pi n / \delta)} ) \\
&\leq
\mathcal{O}\left(\frac{C^\ast d^{3/ 2}}{1 - \gamma} \sqrt{\frac{\log(dn T (\log \vert \mathcal{A} \vert) / \delta)}{n}} \right)
\end{align*}
where the second inequality follows since $n \widehat{\bm\Lambda}_n + \bm{I} \succcurlyeq n \widehat{\bm\Lambda}_n$ and the fact that both $\widehat{\bm\lambda}^\ast$ and $\bm\lambda(\bm{c}_t)$ are in the column space of $\widehat{\bm\Lambda}$; the third inequality follows by Lemma~\ref{lemma:confidence1} and Lemma~\ref{lemma:ls-uniform} and the fact that the range of $\bm{v}_{\bm\zeta_t, \pi_t}$ is $[0, D_\zeta]$ so that we can set $D_v = D_\zeta$;
the last inequality follows by $D_\zeta \leq \mathcal{O}(\frac{\sqrt{d}}{1 - \gamma})$ and $D_\pi = \alpha T D_\zeta = \mathcal{O}(\sqrt{T \log \vert \mathcal{A} \vert})$.
The third term can be bounded by
\begin{align*}
\langle \bm\lambda^\ast - \widehat{\bm\lambda}^\ast, \bm\theta + \gamma \bm\Psi \bm{v}_{\bm\zeta_t, \pi_t} - \bm\zeta_t \rangle
&\leq
\Vert \bm\lambda^\ast - \widehat{\bm\lambda}^\ast \Vert_2
\Vert \bm\theta + \gamma \bm\Psi \bm{v}_{\bm\zeta_t, \pi_t} - \bm\zeta_t \Vert_2 \\
&\leq
\mathcal{O}\left( C^\ast \sqrt{\frac{\log(d / \delta)}{n}} \right) \cdot \mathcal{O}(D_\zeta \sqrt{d}) \\
&\leq
\mathcal{O}\left(\frac{C^\ast d}{1 - \gamma} \sqrt{\frac{\log(d / \delta)}{n}}\right)
\end{align*}
where the second inequality follows by Lemma~\ref{lemma:lambda-hat} and the last inequality follows by the bound $D_\zeta \leq \mathcal{O}(\frac{\sqrt{d}}{1 - \gamma})$ and the boundedness assumption on $\bm\Psi$.
Combining the three bounds completes the proof.
\end{proof}

\section{Details in Offline Constrained RL Setting} \label{appendix:constrained}

\subsection{Lagrangian Formulation}
Recall that in the linear CMDP setting, the optimization problem of interest is
$$
\begin{aligned}
\max_{\pi} \quad &J_0(\pi) \\
\text{subject to} \quad &J_i(\pi) \geq \tau_i,~~ i = 1, \dots, I.
\end{aligned}
$$
which we denote by $\mathcal{P}(\bm\tau)$ parameterized by the thresholds $\bm\tau \in \mathbb{R}^I$.
We write the Lagrangian function corresponding to the optimization problem $\mathcal{P}(\bm\tau)$ as
$$
L(\pi, \bm{w}) \coloneqq J(\pi) + \bm{w} \cdot (\bm{J}(\pi) - \bm\tau)
$$
where $\bm{J}(\cdot) = (J_1(\cdot), \dots, J_I(\cdot))$, $\bm\tau = (\tau_1, \dots, \tau_I)$ and $\bm{w} \in \mathbb{R}^I$ is the Lagrangian multipliers corresponding to the constraints.
The linear programming formulation of the constrained reinforcement learning problem is:
$$
\begin{aligned}
\max_{\bm\mu \geq \bm{0}} \quad\quad &\langle \bm{r}_0, \bm{\mu} \rangle \\
\text{subject to} \quad\quad
& \langle \bm{r}_i, \bm\mu \rangle \geq \tau_i, \quad i = 1, \dots, I, \\
&\bm{E}^T \bm{\mu} = (1 - \gamma) \bm{\nu}_0 + \gamma \bm{P}^T \bm\mu.
\end{aligned}
$$

Using $\bm{r}_i = \bm\Phi \bm\theta_i$, $i = 0, \dots, I$, and $\bm{P} = \bm\Phi \bm\Psi$, which holds by the linear CMDP assumption (Assumption~\ref{assumption:linear-cmdp}), the linear program can be written as
$$
\begin{aligned}
\max_{\bm\mu \geq 0} \quad\quad &\langle \bm\theta_0, \bm\Phi^T \bm{\mu} \rangle \\
\text{subject to} \quad\quad
&\langle \bm\theta_i, \bm\Phi^T \bm\mu \rangle \geq \tau_i, \quad i = 1, \dots, I, \\
&\bm{E}^T \bm{\mu} = (1 - \gamma) \bm{\nu}_0 + \gamma \bm\Psi^T \bm\Phi^T \bm\mu
\end{aligned}
$$
Note that the optimization variable $\bm\mu \in \mathbb{R}^{\vert \mathcal{S} \times \mathcal{A} \vert}$ is high-dimensional that depends on the size of $\mathcal{S}$.
With the goal of computational and statistical efficiency, we introduce a low-dimensional optimization variable $\bm\lambda = \bm\Phi^T \bm\mu \in \mathbb{R}^d$, which has the interpretation of the average occupancy in the feature space.
With the reparametrization, the optimization problem becomes
$$
\begin{aligned}
\max_{\bm\mu \geq \bm{0}, \bm\lambda} \quad\quad &\langle \bm{\theta}_0, \bm{\lambda} \rangle \\
\text{subject to} \quad\quad
&\langle \bm\theta_i, \bm\lambda \rangle \geq \tau_i, \quad i = 1, \dots, I, \\
&\bm{E}^T \bm{\mu} = (1 - \gamma) \bm{\nu}_0 + \gamma \bm\Psi^T \bm\lambda \\
&\bm\lambda = \bm\Phi^T \bm\mu.
\end{aligned}
$$
The dual of the linear program above is
$$
\begin{aligned}
\min_{\bm{w} \geq \bm{0}, \bm{v}, \bm\zeta} \quad\quad
& (1 - \gamma) \langle \bm\nu_0, \bm{v} \rangle - \langle \bm{w}, \bm\tau \rangle \\
\text{subject to} \quad\quad
& \bm\zeta = \bm\theta_0 + \bm\Theta \bm{w} +  \gamma \bm\Psi \bm{v} \\
& \bm{E} \bm{v} \geq \bm\Phi \bm\zeta.
\end{aligned}
$$
where we write $\bm\Theta = \begin{bmatrix} \bm\theta_1 & \cdots & \bm\theta_I \end{bmatrix} \in \mathbb{R}^{d \times I}$.
The Lagrangian associated to this pair of linear programs is
\begin{align*}
L(\bm\lambda, \bm\mu; \bm{v}, \bm{w}, \bm\zeta)
&= (1 - \gamma)\langle \bm\nu_0, \bm{v} \rangle + \langle \bm\lambda, \bm\theta_0 + \gamma \bm\Psi \bm{v} - \bm\zeta \rangle + \langle \bm\mu, \bm\Phi \bm\zeta - \bm{E} \bm{v} \rangle - \langle \bm{w}, \bm\tau - \bm\Theta^T \bm\lambda \rangle \\
&= \langle \bm\lambda, \bm\theta_0 \rangle + \langle \bm{v}, (1 - \gamma) \bm\nu_0 + \gamma \bm\Psi^T \bm\lambda - \bm{E}^T \bm\mu \rangle + \langle \bm\zeta, \bm\Phi^T \bm\mu - \bm\lambda \rangle - \langle \bm{w}, \bm\tau - \bm\Theta^T \bm\lambda \rangle.
\end{align*}

Note that the optimization variables $\bm\lambda, \bm\zeta \in \mathbb{R}^d$ and $\bm{w} \in \mathbb{R}^I$ are low-dimensional, but $\bm\mu \in \mathbb{R}^{\vert \mathcal{S} \times \mathcal{A} \vert}$ and $\bm{v} \in \mathbb{R}^{\vert \mathcal{S} \vert}$ are not.
With the goal of running a primal-dual algorithm on the Lagrangian using only low-dimensional variables, we introduce policy variable $\pi$ and parameterize $\bm\mu$ and $\bm{v}$, as was done for the unconstrained RL setting, by
\begin{align*}
\mu_{\bm\lambda, \pi}(s, a) &= \pi(a | s) \left[
  (1 - \gamma) \nu_0(s) + \gamma \langle \psi(s), \bm\lambda \rangle
\right] \\
v_{\bm\zeta, \pi}(s) &= \sum_a \pi(a | s) \langle \bm\zeta, \bm\varphi(s, a) \rangle.
\end{align*}
Note that the choice of $\bm\mu_{\bm\lambda, \pi}$ makes the Bellman flow constraint $\bm{E}^T \bm\mu = (1 - \gamma) \bm\nu_0 + \gamma \bm\Psi^T \bm\lambda$ of the primal problem satisfied.
Also, the choice of $\bm{v}_{\bm\zeta, \pi}$ makes $\langle \bm\mu_{\bm\lambda, \pi}, \bm\Phi \bm\zeta - \bm{E} \bm{v}_{\bm\zeta, \pi} \rangle = 0$.
Using the above parameterization, the Lagrangian can be rewritten in terms of $\bm\zeta, \bm\lambda, \bm{w}, \pi$ as follows:
\begin{align}
g(\bm\lambda, \bm\zeta, \bm{w}, \pi)
&= \langle \bm\lambda, \bm\theta_0 \rangle + \langle \bm\zeta, \bm\Phi^T \bm\mu_{\bm\lambda, \pi} - \bm\lambda \rangle - \langle \bm{w}, \bm\tau - \bm\Theta^T \bm\lambda \rangle \label{eqn:lagrangian1} \\
&= (1 - \gamma) \langle \bm\nu_0, \bm{v}_{\bm\zeta, \pi} \rangle + \langle \bm\lambda, \bm\theta_0 + \gamma \bm\Psi \bm{v}_{\bm\zeta, \pi} - \bm\zeta \rangle - \langle \bm{w}, \bm\tau - \bm\Theta^T \bm\lambda \rangle. \label{eqn:lagrangian2}
\end{align}

At the cost of having to keep track of $\pi$, we can now run a primal-dual algorithm on the low-dimensional variables $\bm\zeta$, $\bm\lambda$ and $\bm{w}$.
As is the case for the unconstrained RL setting, the introduction of $\pi$ in the equation does not make the algorithm inefficient because we can only keep track of the distribution $\pi(s | a)$ for state-action pairs that appear in the dataset.

\subsection{Technical Lemmas on Lagrangian} \label{appendix:lagrangian}

For a linearized reward function $u = r_0 + \bm{w} \cdot \bm{r}$ where we use the notation $\bm{r}$ to denote the vector of reward functions $r_1, \dots, r_I$ such that $u(s, a) = r_0(s, a) + \sum_{i = 1}^I w_i r_i(s, a)$, the Bellman equation becomes
\begin{equation} \label{eqn:bellman-factorized}
\bm{Q}_{\bm{w}}^\pi = \bm\Phi (\bm\theta_0 + \bm\Theta \bm{w} + \gamma \bm\Psi \bm{V}_{\bm{w}}^\pi) = \bm\Phi \bm\zeta_{\bm{w}}^\pi
\end{equation}
where we write $\bm{Q}_{\bm{w}}^\pi$ and $\bm{V}_{\bm{w}}^\pi$ as the value functions of the policy $\pi$ with respect to the linearized reward function $r_0 + \bm{w} \cdot \bm{r}$ and define
$$
\bm\zeta_{\bm{w}}^\pi \coloneqq \bm\theta_0 + \bm\Theta \bm{w} + \gamma \bm\Psi \bm{V}_{\bm{w}}^\pi.
$$
Note that if $\bm{w} \in D_w \bm\Delta^I$, we have
$\Vert \bm\zeta_{\bm{w}}^\pi \Vert_2 \leq 1 + D_w + \frac{\gamma \sqrt{d}(1 + D_w)}{1 - \gamma} = \mathcal{O}\left(\frac{D_w \sqrt{d}}{1 - \gamma}\right)$.
We define $D_\zeta \coloneqq 1 + D_w + \frac{\gamma \sqrt{d}(1 + D_w)}{1 - \gamma}$.

\begin{lemma} \label{lemma:lagrangian1}
Let $\bm\zeta_{\bm{w}}^\pi$ be the parameter that satisfies $\bm{Q}_{\bm{w}}^\pi = \bm\Phi \bm\zeta_{\bm{w}}^\pi$ for a given $\bm{w} \in \mathbb{R}^I$ and a policy $\pi$.
Then,
$$
L(\pi, \bm{w}) = g(\bm\zeta_{\bm{w}}^\pi, \bm\lambda, \pi, \bm{w})
$$
for all $\bm\lambda \in \mathbb{R}^d$ in the span of $\{ \bm\varphi(s, a) \}_{(s, a) \in \mathcal{S} \times \mathcal{A}}$.
\end{lemma}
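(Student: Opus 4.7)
The plan is to start from the second equivalent form of the Lagrangian, namely
Equation~\eqref{eqn:lagrangian2}, since that form isolates the $\bm\zeta$-dependence
inside a single inner product and makes the cancellation at $\bm\zeta = \bm\zeta_{\bm{w}}^\pi$ transparent.
The first step is to verify that when we plug in $\bm\zeta = \bm\zeta_{\bm{w}}^\pi$, the parametrized
state value function $\bm{v}_{\bm\zeta_{\bm{w}}^\pi, \pi}$ coincides with the true state value function
$\bm{V}_{\bm{w}}^\pi$ of $\pi$ under the linearized reward $r_0 + \bm{w}\cdot \bm{r}$. This is immediate from
the Bellman identity \eqref{eqn:bellman-factorized}: for each $(s, a)$ we have
$\langle \bm\zeta_{\bm{w}}^\pi, \bm\varphi(s, a)\rangle = Q_{\bm{w}}^\pi(s, a)$,
so that $v_{\bm\zeta_{\bm{w}}^\pi, \pi}(s) = \sum_a \pi(a\mid s) Q_{\bm{w}}^\pi(s, a) = V_{\bm{w}}^\pi(s)$.

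Next I would substitute this identity into the definition of $\bm\zeta_{\bm{w}}^\pi$
to compute the inner middle term of \eqref{eqn:lagrangian2}:
$$
\bm\theta_0 + \gamma \bm\Psi \bm{v}_{\bm\zeta_{\bm{w}}^\pi, \pi} - \bm\zeta_{\bm{w}}^\pi
= \bm\theta_0 + \gamma \bm\Psi \bm{V}_{\bm{w}}^\pi - \bigl(\bm\theta_0 + \bm\Theta \bm{w} + \gamma \bm\Psi \bm{V}_{\bm{w}}^\pi\bigr)
= -\bm\Theta \bm{w}.
$$
Hence $\langle \bm\lambda, \bm\theta_0 + \gamma \bm\Psi \bm{v}_{\bm\zeta_{\bm{w}}^\pi, \pi} - \bm\zeta_{\bm{w}}^\pi\rangle = -\langle \bm{w}, \bm\Theta^T \bm\lambda\rangle$, which exactly cancels the $\bm\Theta^T \bm\lambda$ piece inside $-\langle \bm{w}, \bm\tau - \bm\Theta^T \bm\lambda\rangle$. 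The $\bm\lambda$-dependent contributions disappear entirely, leaving
$$
g(\bm\zeta_{\bm{w}}^\pi, \bm\lambda, \pi, \bm{w}) = (1 - \gamma)\langle \bm\nu_0, \bm{V}_{\bm{w}}^\pi\rangle - \langle \bm{w}, \bm\tau\rangle.
$$

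Finally, since $\bm\nu_0 = \bm{e}_{s_0}$ and $V_{\bm{w}}^\pi$ is the value of $\pi$ under reward $r_0 + \bm{w}\cdot \bm{r}$, linearity of expectation gives
$(1 - \gamma)\langle \bm\nu_0, \bm{V}_{\bm{w}}^\pi\rangle = (1 - \gamma) V_{\bm{w}}^\pi(s_0) = J_0(\pi) + \bm{w}\cdot \bm{J}(\pi)$,
so the right-hand side equals $J_0(\pi) + \bm{w}\cdot(\bm{J}(\pi) - \bm\tau) = L(\pi, \bm{w})$, as claimed. The argument actually does not use the span hypothesis on $\bm\lambda$ (the cancellation is algebraic), so the stated restriction is more than sufficient; it is presumably included to match the situations in which the lemma is subsequently applied (where $\bm\lambda$ arises as $\bm\Phi^T\bm\mu$). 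The only nontrivial step is identifying $\bm{v}_{\bm\zeta_{\bm{w}}^\pi, \pi}$ with $\bm{V}_{\bm{w}}^\pi$ via the Bellman equation \eqref{eqn:bellman-factorized}; everything else is bookkeeping.
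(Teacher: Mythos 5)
Your argument is correct for the $\bm\zeta_{\bm{w}}^\pi$ defined in \eqref{eqn:bellman-factorized}, and it shares the paper's key first step (identifying $\bm{v}_{\bm\zeta_{\bm{w}}^\pi, \pi}$ with $\bm{V}_{\bm{w}}^\pi$ and then working from the form \eqref{eqn:lagrangian2}); but the two proofs diverge at the cancellation, and the difference is exactly where your closing remark about the span hypothesis goes astray. You substitute the explicit formula $\bm\zeta_{\bm{w}}^\pi = \bm\theta_0 + \bm\Theta\bm{w} + \gamma\bm\Psi\bm{V}_{\bm{w}}^\pi$, so the middle vector collapses to $-\bm\Theta\bm{w}$ identically and $\bm\lambda$ indeed plays no role. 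The paper instead uses only the hypothesis actually stated in the lemma, namely $\bm{Q}_{\bm{w}}^\pi = \bm\Phi\bm\zeta_{\bm{w}}^\pi$: it writes $\bm\lambda = \bm\Phi^T\bm\alpha$ and moves the inner product into state--action space, $\langle \bm\lambda, \bm\theta_0 + \bm\Theta\bm{w} + \gamma\bm\Psi\bm{V}_{\bm{w}}^\pi - \bm\zeta_{\bm{w}}^\pi\rangle = \langle \bm\alpha, \bm{u} + \gamma\bm{P}\bm{V}_{\bm{w}}^\pi - \bm{Q}_{\bm{w}}^\pi\rangle = 0$, which vanishes by the Bellman equation. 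This matters when $\bm\Phi$ does not have full column rank: then ``the parameter satisfying $\bm{Q}_{\bm{w}}^\pi = \bm\Phi\bm\zeta_{\bm{w}}^\pi$'' is not unique, the residual $\bm\theta_0 + \bm\Theta\bm{w} + \gamma\bm\Psi\bm{V}_{\bm{w}}^\pi - \bm\zeta_{\bm{w}}^\pi$ is only guaranteed to lie in $\ker\bm\Phi$ rather than to be zero, and the span condition on $\bm\lambda$ is precisely what makes the inner product with that residual vanish. So the span hypothesis is not decorative; it is what lets the lemma hold for any factorizing parameter, while your version proves it only for the canonical choice. Since the paper's algorithmic analysis does use the canonical $\bm\zeta_{\bm{w}}^\pi$, your proof suffices for the downstream applications, but as a proof of the lemma as stated you should either add the reduction to state--action space as the paper does, or note explicitly that you are taking \eqref{eqn:bellman-factorized} as the definition of $\bm\zeta_{\bm{w}}^\pi$.
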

\begin{proof}
For convenience, define the reward function $u(s, a) = r_0(s, a) + \sum_{i = 1}^I w_i r_i(s, a)$.
By the linear CMDP assumption, we have $\bm{u} = \bm\Phi (\bm\theta_0 + \bm\Theta \bm{w})$ where $\bm{u} \in \mathbb{R}^{\vert \mathcal{S} \times \mathcal{A} \vert}$ is the vector representation of the reward function $u$.
Also, by the definition of $v_{\bm\zeta, \pi}$ in \eqref{eqn:v}, we have
$$
v_{\bm\zeta_{\bm{w}}^\pi, \pi}(s) = \sum_a \pi(a | s) \langle \bm\zeta_{\bm{w}}^\pi, \bm\varphi(s, a) \rangle = \sum_a \pi(a | s) Q_{\bm{w}}^\pi(s, a) = V_{\bm{w}}^\pi(s).
$$
Since we assume $\bm\lambda \in \mathbb{R}^d$ is in the span of $\{ \bm\varphi(s, a) \}_{(s, a) \in \mathcal{S} \times \mathcal{A}}$, there exists $\bm\alpha \in \mathbb{R}^{\vert \mathcal{S} \times \mathcal{A} \vert}$ such that $\bm\lambda = \bm\Phi^T \bm\alpha$.
Hence, using the form \eqref{eqn:lagrangian2} of the Lagrangian function, we have
\begin{align*}
g(\bm\zeta_{\bm{w}}^\pi, \bm\lambda, \pi, \bm{w}) 
&=
(1 - \gamma) \langle \bm\nu_0, \bm{v}_{\bm\zeta_{\bm{w}}^\pi} \rangle + \langle \bm\lambda, \bm\theta_0 + \gamma \bm\Psi \bm{v}_{\bm\zeta_{\bm{w}}^\pi} - \bm\zeta_{\bm{w}}^\pi \rangle - \langle \bm{w}, \bm\tau - \bm\Theta^T \bm\lambda \rangle \\
&=
(1 - \gamma) \langle \bm\nu_0, \bm{V}_{\bm{w}}^\pi \rangle + \langle \bm\lambda, \bm\theta_0 + \bm\Theta \bm{w} + \gamma \bm\Psi \bm{V}_{\bm{w}}^\pi - \bm\zeta_{\bm{w}}^\pi \rangle - \langle \bm{w}, \bm\tau \rangle \\
&=
(1 - \gamma) \langle \bm\nu_0, \bm{V}_{\bm{w}}^\pi \rangle + \langle \bm\alpha, \bm\Phi (\bm\theta_0 + \bm\Theta \bm{w} + \gamma \bm\Psi \bm{V}_{\bm{w}}^\pi - \bm\zeta_{\bm{w}}^\pi) \rangle - \langle \bm{w}, \bm\tau \rangle \\
&=
(1 - \gamma) \langle \bm\nu_0, \bm{V}_{\bm{w}}^\pi \rangle + \langle \bm\alpha, \bm{u} + \gamma \bm{P} \bm{V}_{\bm{w}}^\pi - \bm{Q}_{\bm{w}}^\pi \rangle - \langle \bm{w}, \bm\tau \rangle \\
&=
(1 - \gamma) \langle \bm\nu_0, \bm{V}_{\bm{w}}^\pi \rangle - \langle \bm{w}, \bm\tau \rangle \\
&=
L(\pi, \bm{w})
\end{align*}
where the second to last equality uses the Bellman equation \eqref{eqn:bellman} and the last equality is by $L(\pi, \bm{w}) = J_0(\pi) + \bm{w} \cdot (\bm{J}(\pi) - \bm\tau)$ and the fact that $J_0(\pi) + \bm{w} \cdot \bm{J}(\pi)$ is the value of $\pi$ with respect to the linearized value function $r_0 + \bm{w} \cdot \bm{r}$.
\end{proof}
\begin{lemma}
Under the linear MDP setting, let $\bm\zeta^\pi$ be the parameter that satisfies $\bm{Q}^\pi = \bm\Phi \bm\zeta^\pi$ for a policy $\pi$. Then,
$$
J(\pi) = f(\bm\zeta^\pi, \bm\lambda, \pi)
$$
for all $\bm\lambda \in \mathbb{R}^d$ in the span of $\{ \bm\varphi(s, a) \}_{(s, a) \in \mathcal{S} \times \mathcal{A}}$.
\end{lemma}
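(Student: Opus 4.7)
The plan is to mirror the proof of Lemma~\ref{lemma:lagrangian1} specialized to the unconstrained case where the Lagrange multiplier $\bm{w}$ and the constraint-reward matrix $\bm\Theta$ are absent. I will start from the second form \eqref{eqn:lagrangian2-unconstrained} of the Lagrangian, evaluated at $\bm\zeta = \bm\zeta^\pi$:
\begin{align*}
f(\bm\zeta^\pi, \bm\lambda, \pi) = (1 - \gamma) \langle \bm\nu_0, \bm{v}_{\bm\zeta^\pi, \pi} \rangle + \langle \bm\lambda, \bm\theta + \gamma \bm\Psi \bm{v}_{\bm\zeta^\pi, \pi} - \bm\zeta^\pi \rangle.
\end{align*}
The goal is to show that the first term equals $J(\pi)$ and the second term vanishes.

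For the first term, I substitute $\bm\zeta = \bm\zeta^\pi$ into the definition \eqref{eqn:v} of $v_{\bm\zeta, \pi}$ and use the hypothesis $\bm{Q}^\pi = \bm\Phi \bm\zeta^\pi$, i.e., $Q^\pi(s,a) = \langle \bm\varphi(s,a), \bm\zeta^\pi \rangle$, to obtain
\begin{align*}
v_{\bm\zeta^\pi, \pi}(s) = \sum_{a} \pi(a \mid s) \langle \bm\zeta^\pi, \bm\varphi(s, a) \rangle = \sum_{a} \pi(a \mid s) Q^\pi(s, a) = V^\pi(s).
\end{align*}
Hence $\bm{v}_{\bm\zeta^\pi, \pi} = \bm{V}^\pi$, and since $\bm\nu_0 = \bm{e}_{s_0}$, the first term reduces to $(1 - \gamma) V^\pi(s_0) = J(\pi)$ by the definition of $J$.

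For the second term, I use the span assumption to write $\bm\lambda = \bm\Phi^T \bm\alpha$ for some $\bm\alpha \in \mathbb{R}^{\vert \mathcal{S} \times \mathcal{A} \vert}$. Then
\begin{align*}
\langle \bm\lambda, \bm\theta + \gamma \bm\Psi \bm{V}^\pi - \bm\zeta^\pi \rangle = \langle \bm\alpha, \bm\Phi (\bm\theta + \gamma \bm\Psi \bm{V}^\pi - \bm\zeta^\pi) \rangle = \langle \bm\alpha, \bm{r} + \gamma \bm{P} \bm{V}^\pi - \bm{Q}^\pi \rangle = 0,
\end{align*}
where the second equality uses $\bm{r} = \bm\Phi \bm\theta$ and $\bm{P} = \bm\Phi \bm\Psi$ from Assumption~\ref{assumption:linear-unconstrained}, and the final equality is the Bellman equation \eqref{eqn:bellman}. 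Combining the two terms gives $f(\bm\zeta^\pi, \bm\lambda, \pi) = J(\pi)$ as required.

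The argument is essentially a routine specialization of the constrained version, so I do not foresee any substantive obstacle. The only subtle point worth flagging is why the span assumption on $\bm\lambda$ is needed at all: the condition $\bm{Q}^\pi = \bm\Phi \bm\zeta^\pi$ only pins down $\bm\zeta^\pi$ up to the nullspace of $\bm\Phi$, so the residual $\bm\theta + \gamma \bm\Psi \bm{V}^\pi - \bm\zeta^\pi$ need not be identically zero, but it must lie in that nullspace; pairing it with any $\bm\lambda \in \mathrm{row}(\bm\Phi) = \mathrm{span}\{\bm\varphi(s,a)\}$ then eliminates it.
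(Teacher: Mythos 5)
Your proof is correct and is essentially the paper's own argument: the paper proves this statement by invoking Lemma~\ref{lemma:lagrangian1} with $\bm{w} = \bm{0}$, and your derivation simply spells out that specialization (showing $\bm{v}_{\bm\zeta^\pi,\pi} = \bm{V}^\pi$, reducing the first term to $(1-\gamma)V^\pi(s_0) = J(\pi)$, and killing the residual term via $\bm\lambda = \bm\Phi^T\bm\alpha$ and the Bellman equation). Your closing remark on why the span assumption is genuinely needed is a correct and worthwhile observation.
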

\begin{proof}
This is a direct corollary of Lemma~\ref{lemma:lagrangian1}, which can be seen by setting $\bm{w} = \bm{0}$.
\end{proof}

\begin{lemma} \label{lemma:lagrangian2}
Given a policy $\pi$, let $\mu^\pi$ be the occupancy measure induced by $\pi$ and let $\bm\lambda^\pi = \bm\Phi^T \bm\mu^\pi$.
Then, for any $\bm\zeta \in \mathbb{R}^d$ and any $\bm{w} \in \mathbb{R}^I$, we have
$$
L(\pi, \bm{w}) = g(\bm\zeta, \bm\lambda^\pi, \bm{w}, \pi).
$$
\end{lemma}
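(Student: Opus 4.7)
The plan is to show that when the primal variables $(\bm\mu, \bm\lambda)$ are taken to equal their ``policy-consistent'' values $(\bm\mu^\pi, \bm\lambda^\pi)$, the $\bm\zeta$-dependent terms in the Lagrangian \eqref{eqn:lagrangian1} vanish identically, and the remaining terms collapse to $L(\pi, \bm{w})$. I will use the form \eqref{eqn:lagrangian1} of $g$ since it displays $\bm\zeta$ only through the factor $\bm\Phi^T \bm\mu_{\bm\lambda, \pi} - \bm\lambda$.

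The first and key step is to establish the identity $\bm\mu_{\bm\lambda^\pi, \pi} = \bm\mu^\pi$ (pointwise in $(s,a)$), which immediately gives $\bm\Phi^T \bm\mu_{\bm\lambda^\pi, \pi} - \bm\lambda^\pi = \bm\Phi^T \bm\mu^\pi - \bm\lambda^\pi = \bm{0}$ and kills the $\bm\zeta$ term. To prove this identity, I will use that $\mu^\pi$ factorizes as $\mu^\pi(s,a) = \pi(a | s)\nu^\pi(s)$ where $\nu^\pi(s) = \sum_{a'} \mu^\pi(s,a')$ is the state occupancy, and combine it with the Bellman flow equation $\bm{E}^T \bm\mu^\pi = (1-\gamma)\bm\nu_0 + \gamma \bm{P}^T \bm\mu^\pi$ satisfied by any occupancy measure. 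The $s$-component of this flow constraint reads $\nu^\pi(s) = (1-\gamma)\nu_0(s) + \gamma \sum_{s',a'} P(s \mid s',a') \mu^\pi(s',a')$, which, after substituting $P(s \mid s', a') = \langle \bm\varphi(s', a'), \bm\psi(s)\rangle$ from Assumption~\ref{assumption:linear-cmdp}, becomes $\nu^\pi(s) = (1-\gamma)\nu_0(s) + \gamma \langle \bm\psi(s), \bm\Phi^T \bm\mu^\pi\rangle = (1-\gamma)\nu_0(s) + \gamma \langle \bm\psi(s), \bm\lambda^\pi\rangle$. Plugging this into the factorization yields $\mu^\pi(s,a) = \pi(a\mid s)[(1-\gamma)\nu_0(s) + \gamma\langle\bm\psi(s), \bm\lambda^\pi\rangle] = \mu_{\bm\lambda^\pi, \pi}(s,a)$, as desired.

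With the $\bm\zeta$ term killed, $g(\bm\zeta, \bm\lambda^\pi, \bm{w}, \pi)$ reduces to $\langle \bm\lambda^\pi, \bm\theta_0\rangle + \langle \bm{w}, \bm\Theta^T \bm\lambda^\pi - \bm\tau\rangle$. It remains to recognize each inner product as a normalized value. Using $\bm{r}_i = \bm\Phi \bm\theta_i$ from Assumption~\ref{assumption:linear-cmdp} together with $\bm\lambda^\pi = \bm\Phi^T \bm\mu^\pi$, I get
\[
\langle \bm\theta_i, \bm\lambda^\pi\rangle = \langle \bm\theta_i, \bm\Phi^T \bm\mu^\pi\rangle = \langle \bm\Phi \bm\theta_i, \bm\mu^\pi\rangle = \langle \bm{r}_i, \bm\mu^\pi\rangle = J_i(\pi)
\]
for $i = 0, 1, \dots, I$, where the last equality uses the identity $\langle \bm r, \bm \mu^\pi\rangle = J(\pi)$ noted earlier in the paper. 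Therefore $g(\bm\zeta, \bm\lambda^\pi, \bm{w}, \pi) = J_0(\pi) + \langle \bm w, \bm J(\pi) - \bm\tau\rangle = L(\pi, \bm w)$, independently of $\bm\zeta$.

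I do not anticipate a serious obstacle: the whole argument is a clean substitution once the policy-consistency identity $\bm\mu_{\bm\lambda^\pi, \pi} = \bm\mu^\pi$ is established, and that identity is essentially a restatement of the Bellman flow equation under the linear-MDP factorization. The only minor care is to verify that the identity holds as equality of vectors (not merely after projection by $\bm\Phi^T$), which is why I use the pointwise factorization of $\mu^\pi$ through $\pi(a\mid s)$ rather than working directly in feature coordinates.
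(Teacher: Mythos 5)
Your proof is correct and follows essentially the same route as the paper's: use the form \eqref{eqn:lagrangian1}, show $\bm\mu_{\bm\lambda^\pi,\pi} = \bm\mu^\pi$ so the $\bm\zeta$-term vanishes, and identify $\langle \bm\theta_i, \bm\lambda^\pi\rangle = \langle \bm r_i, \bm\mu^\pi\rangle = J_i(\pi)$. The only difference is that you explicitly justify the identity $\bm\mu_{\bm\lambda^\pi,\pi} = \bm\mu^\pi$ via the factorization $\mu^\pi(s,a)=\pi(a|s)\nu^\pi(s)$ and the Bellman flow equation, a step the paper states more tersely.
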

\begin{proof}
By the definition of $\mu_{\bm\lambda, \pi}$ in \eqref{eqn:mu}, we have
\begin{align*}
\mu_{\bm\lambda^\pi, \pi}(s, a) &= \pi(a | s) \left[
  (1 - \gamma) \nu_0(s) + \gamma \langle \bm\psi(s), \bm\lambda^\pi \rangle
\right] \\
&= \pi(a | s) \left[ (1 - \gamma) \nu_0(s) + \gamma \langle \bm\psi(s), \bm\Phi^T \bm\mu^\pi \rangle \right] \\
&= \mu^\pi(s, a).
\end{align*}
Using the form \eqref{eqn:lagrangian1} of the Lagrangian function, we have
\begin{align*}
g(\bm\zeta, \bm\lambda^\pi, \bm{w}, \pi)
&=
\langle \bm\lambda^\pi, \bm\theta_0 \rangle + \langle \bm\zeta, \bm\Phi^T \bm\mu_{\bm\lambda^\pi, \pi} - \bm\lambda^\pi \rangle - \langle \bm{w}, \bm\tau - \bm\Theta^T \bm\lambda^\pi \rangle \\
&=
\langle \bm\Phi^T \bm\mu^\pi, \bm\theta_0 \rangle - \langle \bm{w}, \bm\tau - \bm\Theta^T \bm\Phi^T \bm\mu^\pi \rangle \\
&=
\langle \bm\mu^\pi, \bm{r}_0 \rangle - \langle \bm{w}, \bm\tau - \bm{R}^T \bm\mu^\pi \rangle \\
&=
L(\pi, \bm{w})
\end{align*}
where the second equality uses $\bm\mu_{\bm\lambda^\pi, \pi} = \bm\mu^\pi$ and $\bm\lambda^\pi = \bm\Phi^T \bm\mu^\pi$; and the third equality uses the matrix notation for the reward functions $\bm{R} = \{ r_i(s, a) \}_{(s, a) \in \mathbb{R}^{\vert \mathcal{S} \times \mathcal{A} \vert}, i \in [I]}$; the last equality uses $J_i(\pi) = \langle \bm\mu^\pi, \bm{r}_i \rangle$, $i = 1, \dots, I$.
\end{proof}
\begin{lemma}
Under the linear MDP setting, let $\mu^\pi$ be the occupancy measure induced by a policy $\pi$ and let $\bm\lambda^\pi = \bm\Phi^T \bm\mu^\pi$.
Then, for any $\bm\zeta \in \mathbb{R}^d$, we have
$$
J(\pi) = f(\bm\zeta, \bm\lambda^\pi, \pi).
$$
\end{lemma}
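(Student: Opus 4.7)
The plan is to recognize this as the unconstrained specialization of Lemma~\ref{lemma:lagrangian2} and invoke it with $\bm{w} = \bm{0}$. Comparing definitions, the unconstrained Lagrangian $f$ in \eqref{eqn:lagrangian1-unconstrained} coincides with $g(\bm\zeta, \bm\lambda, \bm{0}, \pi)$ after identifying the single reward parameter $\bm\theta$ with $\bm\theta_0$, and $J(\pi) = L(\pi, \bm{0})$ since the constraint multiplier term vanishes. So Lemma~\ref{lemma:lagrangian2} with $\bm{w} = \bm{0}$ yields the conclusion at once.

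For a self-contained direct proof, I would first check that the parameterization \eqref{eqn:mu} gives $\bm\mu_{\bm\lambda^\pi, \pi} = \bm\mu^\pi$: the occupancy $\bm\mu^\pi$ satisfies the Bellman flow $\bm{E}^T \bm\mu^\pi = (1-\gamma)\bm\nu_0 + \gamma \bm{P}^T \bm\mu^\pi = (1-\gamma)\bm\nu_0 + \gamma \bm\Psi^T \bm\lambda^\pi$ and is factorized according to $\pi$, so it agrees with the formula defining $\bm\mu_{\bm\lambda^\pi, \pi}$. Plugging into \eqref{eqn:lagrangian1-unconstrained} then collapses the $\bm\zeta$-dependent term, since $\bm\Phi^T \bm\mu_{\bm\lambda^\pi, \pi} - \bm\lambda^\pi = \bm\Phi^T \bm\mu^\pi - \bm\lambda^\pi = 0$, leaving $f(\bm\zeta, \bm\lambda^\pi, \pi) = \langle \bm\lambda^\pi, \bm\theta \rangle = \langle \bm\mu^\pi, \bm\Phi \bm\theta \rangle = \langle \bm\mu^\pi, \bm{r} \rangle = J(\pi)$, visibly independent of $\bm\zeta$.

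There is no real obstacle here. The only conceptual point is that choosing $\bm\lambda = \bm\lambda^\pi = \bm\Phi^T \bm\mu^\pi$ makes the $\bm\zeta$-coupling term in $f$ vanish, so the Lagrangian reduces to the primal value $\langle \bm\lambda^\pi, \bm\theta \rangle = J(\pi)$ regardless of the dual variable $\bm\zeta$.
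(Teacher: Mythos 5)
Your proposal is correct and matches the paper's argument: the paper likewise proves this lemma as a direct corollary of Lemma~\ref{lemma:lagrangian2} by setting $\bm{w} = \bm{0}$, and your self-contained version simply unpacks that lemma's proof (showing $\bm\mu_{\bm\lambda^\pi, \pi} = \bm\mu^\pi$ via the Bellman flow and factorization, so the $\bm\zeta$-term in \eqref{eqn:lagrangian1-unconstrained} vanishes and $f(\bm\zeta, \bm\lambda^\pi, \pi) = \langle \bm\lambda^\pi, \bm\theta \rangle = \langle \bm\mu^\pi, \bm{r} \rangle = J(\pi)$). No gaps.
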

\begin{proof}
This is a direct corollary of Lemma~\ref{lemma:lagrangian2}, which can be seen by setting $\bm{w} = \bm{0}$.
\end{proof}

Define $L_\eta(\pi, \bm{w}) = J_0(\pi) + \bm{w} \cdot (\bm{J}(\pi) - \bm\tau - \eta \bm{1})$ to be the Lagrangian function associated with $\mathcal{P}(\bm\tau + \eta \bm{1})$.
The following lemma shows that the near saddle point of $L_\eta(\cdot, \cdot)$ is a nearly optimal solution of the optimization problem $\mathcal{P}(\bm\tau + \eta \bm{1})$.
\begin{lemma} \label{lemma:key2}
Assume that Slater's condition (Assumption~\ref{assumption:slater}) holds and that $\eta < \phi$ so that $\mathcal{P}(\bm\tau + \eta \bm{1})$ also satisfies Slater's condition.
Suppose $(\bar\pi, \bar{\bm{w}})$ satisfies $L_\eta(\pi, \bar{\bm{w}}) \leq L_\eta(\bar\pi, \bm{w}) + \xi$ for all policies $\pi$ and $\bm{w} \in B \bm\Delta^I$.
Let $(\pi_\eta^\ast, \bm{w}_\eta^\ast)$ be a primal-dual solution to $\mathcal{P}(\bm\tau + \eta \bm{1})$.
Assume $B > \Vert \bm{w}_\eta^\ast \Vert_1$.
Then, we have
\begin{align}
J_0(\bar\pi) &\geq J_0(\pi_\eta^\ast) - \xi \tag{Optimality} \\
J_i(\bar\pi) &\geq \tau_i + \eta - \frac{\xi}{B - \Vert \bm{w}_\eta^\ast \Vert_1},\quad \text{for all}~i = 1, \dots, I \tag{Feasibility}
\end{align}
\end{lemma}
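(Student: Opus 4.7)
The plan is to exploit the near--saddle-point hypothesis by plugging in two carefully chosen test arguments, one for the optimality bound and one (per constraint) for the feasibility bound. Throughout, I would rely on three standard duality facts for $\mathcal{P}(\bm\tau + \eta\bm{1})$, which is permissible because the Slater constant $\phi - \eta > 0$ is positive: (i) feasibility of $\pi_\eta^\ast$ gives $\bm{J}(\pi_\eta^\ast) \geq \bm\tau + \eta\bm{1}$, (ii) complementary slackness gives $\bm{w}_\eta^\ast \cdot (\bm{J}(\pi_\eta^\ast) - \bm\tau - \eta\bm{1}) = 0$, so $L_\eta(\pi_\eta^\ast, \bm{w}_\eta^\ast) = J_0(\pi_\eta^\ast)$, and (iii) strong duality yields $\max_\pi L_\eta(\pi, \bm{w}_\eta^\ast) = J_0(\pi_\eta^\ast)$, so in particular $L_\eta(\bar\pi, \bm{w}_\eta^\ast) \leq J_0(\pi_\eta^\ast)$. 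Combining (i) with $\bar{\bm{w}} \geq 0$ also gives the uniform lower bound $L_\eta(\pi_\eta^\ast, \bar{\bm{w}}) \geq J_0(\pi_\eta^\ast)$.

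For the optimality inequality, I would invoke the near--saddle-point hypothesis at $\pi = \pi_\eta^\ast$ and $\bm{w} = \bm{0} \in B\bm\Delta^I$, obtaining $L_\eta(\pi_\eta^\ast, \bar{\bm{w}}) \leq L_\eta(\bar\pi, \bm{0}) + \xi = J_0(\bar\pi) + \xi$. Sandwiching with the lower bound from (i) yields $J_0(\pi_\eta^\ast) \leq J_0(\bar\pi) + \xi$, i.e.\ $J_0(\bar\pi) \geq J_0(\pi_\eta^\ast) - \xi$.

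For the $i$th feasibility inequality, I would construct the test dual variable $\bm{w}_i \coloneqq \bm{w}_\eta^\ast + (B - \Vert \bm{w}_\eta^\ast \Vert_1)\bm{e}_i$. Since $\bm{w}_\eta^\ast \geq \bm{0}$ and $B > \Vert \bm{w}_\eta^\ast \Vert_1$, we have $\bm{w}_i \geq \bm{0}$ and $\Vert \bm{w}_i \Vert_1 = B$, so $\bm{w}_i \in B\bm\Delta^I$. Applying the near--saddle-point hypothesis at $\pi = \pi_\eta^\ast$ and $\bm{w} = \bm{w}_i$ gives
\[
L_\eta(\pi_\eta^\ast, \bar{\bm{w}}) \leq L_\eta(\bar\pi, \bm{w}_i) + \xi = L_\eta(\bar\pi, \bm{w}_\eta^\ast) + (B - \Vert \bm{w}_\eta^\ast \Vert_1)(J_i(\bar\pi) - \tau_i - \eta) + \xi.
\]
Bounding the left-hand side below by $J_0(\pi_\eta^\ast)$ using (i) and bounding $L_\eta(\bar\pi, \bm{w}_\eta^\ast)$ above by $J_0(\pi_\eta^\ast)$ using (iii), the $J_0(\pi_\eta^\ast)$ terms cancel and rearrangement (dividing by $B - \Vert \bm{w}_\eta^\ast \Vert_1 > 0$) produces the stated feasibility bound.

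The proof is a textbook primal--dual sandwich, so no step is technically deep. The only subtleties to be careful about are (a) verifying $\bm{w}_i \in B\bm\Delta^I$, which is precisely where the hypothesis $B > \Vert \bm{w}_\eta^\ast \Vert_1$ is used, and (b) invoking strong duality and complementary slackness for $\mathcal{P}(\bm\tau + \eta\bm{1})$, which is legitimate exactly because $\eta < \phi$ preserves Slater's condition. These are the two assumptions in the statement, so both are genuinely needed by the argument.
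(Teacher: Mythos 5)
Your proof is correct and follows essentially the same primal--dual sandwich as the paper: the optimality step is identical, and your feasibility step merely folds the paper's two inequalities (a lower bound on $J_0(\pi_\eta^\ast) - J_0(\bar\pi)$ via the saddle point $(\pi_\eta^\ast, \bm{w}_\eta^\ast)$ and an upper bound via the test multiplier $B\bm{e}_j$ at the worst-violated constraint) into a single application of the near--saddle-point hypothesis with the shifted multiplier $\bm{w}_\eta^\ast + (B - \Vert \bm{w}_\eta^\ast \Vert_1)\bm{e}_i$. Both arguments rest on the same three facts --- feasibility of $\pi_\eta^\ast$ together with $\bar{\bm{w}} \geq \bm{0}$, complementary slackness, and the saddle-point inequality $L_\eta(\bar\pi, \bm{w}_\eta^\ast) \leq J_0(\pi_\eta^\ast)$ --- so the difference is cosmetic.
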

\begin{proof}
We first prove near optimality of $\bar\pi$.

\paragraph{Optimality}

Since $(\bar\pi, \bar{\bm{w}})$ satisfies $L_\eta(\pi, \bar{\bm{w}}) \leq L_\eta(\bar\pi, \bm{w}) + \xi$ for all policies $\pi$ and $\bm{w} \in B \bm\Delta^I$, we have $L_\eta(\pi_\eta^\ast, \bar{\bm{w}}) \leq L_\eta(\bar\pi, \bm{w}) + \xi$ for all $\bm{w} \in B \bm\Delta^I$.
Choosing $\bm{w} = \bm{0}$, we get
$$
L_\eta(\pi_\eta^\ast, \bar{\bm{w}}) \leq L_\eta(\bar\pi, \bm{0}) + \xi = J_0(\bar\pi) + \xi.
$$
Rearranging, we get
$$
J_0(\bar\pi)
\geq
J_0(\pi_\eta^\ast) + \bar{\bm{w}} \cdot (\bm{J}(\pi_\eta^\ast) - \bm\tau - \eta \bm{1}) - \xi
\geq
J_0(\pi_\eta^\ast) - \xi
$$
where the second inequality uses the feasibility of $\pi_\eta^\ast$ for $\mathcal{P}(\bm\tau + \eta \bm{1})$.
Now, we prove feasibility of $\bar\pi$.

\paragraph{Feasibility}

Recall that $(\pi_\eta^\ast, \bm{w}_\eta^\ast)$ is a primal-dual solution to the optimization problem $\mathcal{P}(\bm\tau + \eta \bm{1})$ and $L_\eta(\cdot, \cdot)$ is the Lagrangian function corresponding to the problem $\mathcal{P}(\bm\tau + \eta \bm{1})$.
By strong duality, $(\pi_\eta^\ast, \bm{w}_\eta^\ast)$ is a saddle point for $L_\eta(\cdot, \cdot)$.
Hence, we have
$$
L_\eta(\bar\pi, \bm{w}_\eta^\ast)
\leq
L_\eta(\pi_\eta^\ast, \bm{w}_\eta^\ast)
=
J_0(\pi_\eta^\ast) + \bm{w}_\eta^\ast \cdot (\bm{J}(\pi_\eta^\ast) - \bm\tau - \eta \bm{1})
=
J_0(\pi_\eta^\ast)
$$
where the first inequality follows from the fact that $(\pi_\eta^\ast, \bm{w}_\eta^\ast)$ is a saddle point of $L_\eta(\cdot, \cdot)$ and the last equality follows from the complementary slackness property of the solution $(\pi_\eta^\ast, \bm{w}_\eta^\ast)$.
Rearranging, we get
\begin{equation} \label{eqn:perf-diff-lower-bound}
J_0(\pi_\eta^\ast) - J_0(\bar\pi)
\geq
\bm{w}_\eta^\ast \cdot (\bm{J}(\bar\pi) - \bm\tau - \eta \bm{1})
\geq
(m - \eta) \Vert \bm{w}_\eta^\ast \Vert_1
\end{equation}
where we define $m = \min_{i \in [I]} (J_i(\bar\pi) - \tau_i)$.
Now, to upper bound $J_0(\pi_\eta^\ast) - J_0(\bar\pi)$, we first use the feasibility of $\pi_\eta^\ast$ for $\mathcal{P}(\bm\tau + \eta \bm{1})$ as follows.
$$
L_\eta(\pi_\eta^\ast, \bar{\bm{w}})
=
J_0(\pi_\eta^\ast) + \bar{\bm{w}} \cdot (\bm{J}(\pi_\eta^\ast) - \bm\tau - \eta \bm{1})
\geq
J_0(\pi_\eta^\ast).
$$
On the other hand, since $(\bar\pi, \bar{\bm{w}})$ satisfies $L(\pi, \bar{\bm{w}}) \leq L(\bar\pi, \bm{w}) + \xi$ for all policies $\pi$ and $\bm{w} \in B \bm\Delta^I$, we have $L_\eta(\pi_\eta^\ast, \bar{\bm{w}}) \leq L_\eta(\bar\pi, \bm{w}) + \xi$ for any $\bm{w} \in B \bm\Delta^I$.
By choosing $\bm{w}$ such that $w_j = B$ for $j = \argmin_{i \in [I]} (J_i(\bar\pi) - \tau_i)$ and recalling $m = \min_{i \in [I]} (J_i(\bar\pi) - \tau_i)$, we get
$$
L_\eta(\pi_\eta^\ast, \bar{\bm{w}})
\leq
L_\eta(\bar\pi, \bm{w}) + \xi
= J_0(\bar\pi) + B (m - \eta) + \xi.
$$
Combining the previous two results (upper bound and lower bound of $L_\eta(\pi_\eta^\ast, \bar{\bm{w}})$), we get
\begin{equation} \label{eqn:perf-diff-upper-bound}
J_0(\pi_\eta^\ast) - J_0(\bar\pi) \leq B (m - \eta) + \xi.
\end{equation}
Combining the lower bound (\ref{eqn:perf-diff-lower-bound}) and the upper bound (\ref{eqn:perf-diff-upper-bound}) of $J_0(\pi_\eta^\ast) - J_0(\bar\pi)$ and rearranging, we get
$$
m - \eta \geq \frac{- \xi}{B - \Vert \bm{w}_\eta^\ast \Vert_1}.
$$
Since $J_i(\bar\pi) - \tau_i - \eta \geq m - \eta$ for all $i \in [I]$, rearranging the above gives
$$
J_i(\bar\pi) \geq \tau_i + \eta - \frac{\xi}{B - \Vert \bm{w}_\eta^\ast \Vert_1}
$$
for all $i = 1, \dots, I$.
\end{proof}

\begin{lemma}[Lemma 13 in \textcite{hong2023primal}] \label{lemma:dual-variable-bound}
Consider a constrained optimization problem $\mathcal{P}(\bm\tau)$ with threshold $\bm\tau = (\tau_1, \dots, \tau_I)$ with $\tau_i > 0$ for all $i = 1, \dots, I$.
Suppose the problem satisfies Slater's condition with margin $\varphi > 0$, in other words, there exists $\pi \in \Pi$ that satisfies the constraint $J_i(\pi) \geq \tau_i + \phi$ for all $i = 1, \dots, I$.
Then, the optimal dual variable $\bm\lambda^\ast$ of the problem satisfies $\Vert \bm\lambda^\ast \Vert_1 \leq \frac{1}{\varphi}$.
\end{lemma}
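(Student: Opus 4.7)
\textbf{Proof plan for Lemma~\ref{lemma:dual-variable-bound}.}
The plan is to exploit the saddle-point characterization of the primal-dual optimum together with Slater's strict feasibility to directly bound $\|\bm\lambda^\ast\|_1$ by comparing the value of the Lagrangian at the optimal primal and at the Slater point. Throughout I use the Lagrangian $L(\pi, \bm\lambda) = J_0(\pi) + \bm\lambda \cdot (\bm{J}(\pi) - \bm\tau)$ associated with $\mathcal{P}(\bm\tau)$, and I write $\pi^\ast$ for a primal optimal policy and $\tilde\pi$ for the Slater policy guaranteed by Assumption~\ref{assumption:slater}.

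First, I would invoke strong duality for $\mathcal{P}(\bm\tau)$. Slater's condition supplies a strictly feasible policy, so strong duality holds and the pair $(\pi^\ast, \bm\lambda^\ast)$ is a saddle point of $L$ over policies $\pi$ and $\bm\lambda \in \mathbb{R}_{\geq 0}^I$. In particular,
\[
L(\tilde\pi, \bm\lambda^\ast) \;\leq\; L(\pi^\ast, \bm\lambda^\ast) \;=\; J_0(\pi^\ast),
\]
where the equality uses that at the saddle point the constraint term $\bm\lambda^\ast \cdot (\bm{J}(\pi^\ast) - \bm\tau)$ vanishes (complementary slackness, which itself follows from the saddle-point property evaluated at $\bm\lambda = \bm 0$ and at $\bm\lambda = 2\bm\lambda^\ast$).

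Next, I would lower-bound $L(\tilde\pi, \bm\lambda^\ast)$ using the Slater gap. Since $\bm\lambda^\ast \geq \bm 0$ and $J_i(\tilde\pi) - \tau_i \geq \phi$ for every $i$,
\[
L(\tilde\pi, \bm\lambda^\ast) \;=\; J_0(\tilde\pi) + \sum_{i = 1}^I \lambda_i^\ast (J_i(\tilde\pi) - \tau_i) \;\geq\; J_0(\tilde\pi) + \phi \|\bm\lambda^\ast\|_1.
\]
Chaining the two displays gives $J_0(\pi^\ast) \geq J_0(\tilde\pi) + \phi \|\bm\lambda^\ast\|_1$, and rearranging yields
\[
\|\bm\lambda^\ast\|_1 \;\leq\; \frac{J_0(\pi^\ast) - J_0(\tilde\pi)}{\phi} \;\leq\; \frac{1}{\phi},
\]
where the last inequality uses the normalization $J_0(\pi) \in [0,1]$ noted after the definition of $J$ in the preliminaries, so $J_0(\pi^\ast) - J_0(\tilde\pi) \leq 1$.

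The argument is short and essentially standard Lagrangian duality; the only subtlety, and the place that deserves a careful sentence, is justifying the use of strong duality in this (possibly infinite but convex in the occupancy-measure formulation) setting. I would handle this by appealing to the LP reformulation already used in the paper: $\mathcal{P}(\bm\tau)$ is equivalent to a linear program in the occupancy measure $\bm\mu$ with linear inequality constraints $\langle \bm r_i, \bm\mu\rangle \geq \tau_i$, so Slater's condition for the LP (guaranteed by $\tilde\pi$, whose occupancy $\bm\mu^{\tilde\pi}$ strictly satisfies the reward constraints) delivers strong duality and the existence of an optimal dual $\bm\lambda^\ast \geq \bm 0$ satisfying the complementary slackness identity used above. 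With that in hand, the chain of inequalities above finishes the proof.
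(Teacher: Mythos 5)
Your proof is correct and follows essentially the same route as the paper: both bound $\|\bm\lambda^\ast\|_1$ by evaluating the Lagrangian at the Slater point, using strong duality (equivalently, complementary slackness) to identify the dual optimal value with $J_0(\pi^\ast)$, and then using $J_0 \in [0,1]$ to finish. The paper phrases the key inequality via the dual function $f(\bm\lambda^\ast) = \max_\pi L(\pi,\bm\lambda^\ast) \geq L(\widehat\pi,\bm\lambda^\ast)$ rather than your saddle-point formulation, but these are the same step.
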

\begin{proof}
Let $\pi^\ast$ be an optimal policy of the optimization problem $\mathcal{P}(\bm\tau)$.
Define the dual function $f(\bm\lambda) = \max_\pi J_0(\pi) + \bm\lambda \cdot (\bm{J}(\pi) - \bm\tau)$.
Let $\bm\lambda^\ast = \argmin_{\bm\lambda \in \mathbb{R}_+^I} f(\bm\lambda)$.
Trivially, $\lambda_i^\ast \geq 0$ for all $i = 1, \dots, I$.
Also, by strong duality, we have $f(\bm\lambda^\ast) = J_0(\pi^\ast)$.
Let $\widehat{\pi}$ be a feasible policy with $\bm{J}(\widehat{\pi}) \geq \bm\tau + \phi \bm{1}$ where the inequality is component-wise and $\bm{1} = (1, \dots, 1)$.
Such a policy exists by the assumption of this lemma.
Then,
$$
J_0(\pi^\ast)
= f(\bm\lambda^\ast)
\geq J_0(\widehat\pi) + \bm\lambda^\ast \cdot (\bm{J}(\widehat\pi) - \bm\tau)
\geq J_0(\widehat\pi) + \bm\lambda^\ast \cdot \phi \bm{1} = J_0(\widehat\pi) + \phi \Vert \bm\lambda^\ast \Vert_1.
$$
Rearranging and using $1 \geq J_0(\pi^\ast) \geq J_0(\widehat\pi) \geq 0$ completes the proof:
$$
\Vert \bm\lambda^\ast \Vert_1 \leq \frac{J_0(\pi^\ast) - J_0(\widehat\pi)}{\phi} \leq \frac{1}{\varphi}.
$$
\end{proof}

\subsection{Proof of Theorem~\ref{theorem:main}}
For a given $\bm{w} \in \mathbb{R}^I$ and a policy $\pi$, define $\bm\zeta_{\bm{w}}^\pi \in \mathbb{R}^d$ to be the parameter that satisfies $\bm{Q}_{\bm{w}}^\pi = \bm\Phi \bm\zeta_{\bm{w}}^\pi$ where $\bm{Q}_{\bm{w}}^\pi$ is the state-action value function of the policy $\pi$ with respect to the reward function $r_0 + \bm{w} \cdot \bm{r}$.
Using $g(\bm\zeta_{\bm{w}}^\pi, \bm\lambda, \bm{w}, \pi) = L(\pi, \bm{w})$ for any $\bm\lambda$ that is a linear combination of $\{ \bm\varphi(s, a) \}_{(s, a) \in \mathcal{S} \times \mathcal{A}}$ (Lemma~\ref{eqn:lagrangian1}) and $g(\bm\zeta, \bm\lambda^\pi, \bm{w}, \pi) = L(\pi, \bm{w})$ for any $\bm\zeta \in \mathbb{R}^d$ where $\bm\lambda^\pi = \bm\Phi^T \bm\mu^\pi$ (Lemma~\ref{eqn:lagrangian2}), we have
\begin{align*}
L(\pi^\ast, \bm{w}_t) - L(\pi_t, \bm{w})
&= g(\bm\zeta_t, \bm\lambda^\ast, \bm{w}_t, \pi)
 - g(\bm\zeta_{\bm{w}_t}^{\pi_t}, \bm\lambda(\bm{c}'_t), \bm{w}, \pi_t)) \\
&=
(\underbrace{g(\bm\zeta_t, \bm\lambda^\ast, \bm{w}_t, \pi^\ast) - g(\bm\zeta_t, \bm\lambda^\ast, \bm{w}_t, \pi_t)}_{\textsc{Reg}^\pi_t}) \\
&\hspace{10mm}+(\underbrace{g(\bm\zeta_t, \bm\lambda^\ast, \bm{w}_t, \pi_t) - g(\bm\zeta_t, \bm\lambda(\bm{c}'_t), \bm{w}_t, \pi_t)}_{\textsc{Reg}^\lambda_t}) \\
&\hspace{10mm}+
(\underbrace{g(\bm\zeta_t, \bm\lambda(\bm{c}'_t), \bm{w}_t, \pi_t) - g(\bm\zeta_t, \bm\lambda(\bm{c}'_t), \bm{w}, \pi_t)}_{\textsc{Reg}^w_t}) \\
&\hspace{10mm}+
(\underbrace{g(\bm\zeta_t, \bm\lambda(\bm{c}'_t), \bm{w}, \pi_t) - g(\bm\zeta_{\bm{w}_t}^{\pi_t}, \bm\lambda(\bm{c}'_t), \bm{w}, \pi_t)}_{\textsc{Reg}^\zeta_t})
\end{align*}
where $\pi^\ast$ is an optimal policy for the optimization problem $\mathcal{P}(\bm\tau)$ and we use the notation $\bm\lambda^\ast = \bm\lambda^{\pi^\ast}$.
Note that the suboptimality $L(\pi^\ast, \bm{w}_t) - L(\pi_t, \bm{w})$ is decomposed into regret terms of the four players.
As long as we show that the sum of the four regrets over $t = 1, \dots, T$ are sublinear in $T$ and the dataset size $n$, we obtain
$\frac{1}{T} \sum_{t = 1}^T L(\pi^\ast, \bm{w}_t) - L(\pi_t, \bm{w}) = L(\pi^\ast, \bar{\bm{w}}) - L(\bar{\pi}, \bm{w}) = o(1)$ where $\bar{\bm{w}} = \frac{1}{T} \sum_{t = 1}^T \bm{w}_t$ and $\bar{\pi} = \text{Unif}(\pi_1, \dots, \pi_T)$ is the mixture policy that chooses a policy among $\pi_1, \dots, \pi_T$ uniformly at random and runs the chosen policy for the entire trajectory.
Then, for large enough $T$ and $n$, we get $L(\pi^\ast, \bar{\bm{w}}) \leq L(\bar\pi, \bm{w}) + \epsilon$ where $\epsilon$ vanishes as $T$ and $n$ increase.
Such a pair is a near saddle point of the Lagrangian function $L(\cdot, \cdot)$ and it can be shown that the mixture policy $\bar\pi$ is a near-optimal solution of the optimization problem \eqref{eqn:opt}.
Specifically, adapting the proof of \textcite{hong2023primal}, we can  show that if the Slater's condition (Assumption~\ref{assumption:slater}) holds, then a near saddle point $(\bar\pi, \bar{\bm{w}})$ of $L(\cdot, \cdot)$ with $L(\pi, \bar{\bm{w}}) \leq L(\bar\pi, \bm{w}) + \mathcal{O}(\epsilon)$ for all policies $\pi$ and $\bm{w} \in (1 + \frac{1}{\phi}) \bm\Delta^I$ satisfies
\begin{align*}
J_0(\bar\pi) &\geq J_0(\pi^\ast) - \epsilon \\
J_i(\bar\pi) &\geq \tau_i - \epsilon, \quad i = 1, \dots, I
\end{align*}
where $\pi^\ast$ is the optimal policy for $\mathcal{P}(\bm\tau)$, implying that $\bar\pi$ is a nearly optimal solution for the optimization problem.

In the rest of the section, we sketch the analysis that shows that $L(\pi, \bar{\bm{w}}) - L(\bar\pi, \bm{w}) \leq \epsilon$ for large enough $T$ and $n = \mathcal{O}(\epsilon^{-2})$.
With the decomposition of $L(\pi, \bm{w}_t) - L(\pi_t, \bm{w})$ into regrets of the four players discussed previously, we study how the four regrets can be bounded in the next four subsections.

\subsubsection{Bounding Regret of $\pi$-Player}

Using the expression \eqref{eqn:lagrangian2}, the regret of $\pi$-player simplifies to
\begin{align*}
\text{Reg}^\pi_t &=
g(\bm\zeta_t, \bm\lambda^\ast, \bm{w}_t, \pi^\ast) - g(\bm\zeta_t, \bm\lambda^\ast, \bm{w}_t, \pi_t) \\
&= \langle \bm\nu^\ast, \bm{v}_{\bm\zeta_t, \pi} - \bm{v}_{\bm\zeta_t, \pi_t} \rangle \\
&= \langle \bm\nu^\ast, \textstyle \sum_a (\pi(a | \cdot ) - \pi_t(a | \cdot)) \langle \bm\zeta_t, \bm\varphi(\cdot, a) \rangle \rangle.
\end{align*}
where $\bm\nu^\ast = (1 - \gamma) \bm\nu_0 + \gamma \bm\Psi^T \bm\lambda^\ast$ is the state occupancy measure induced by $\pi^\ast$.
This is identical to the regret term for the $\pi$-player in the unconstrained RL setting.
As is done in the unconstrained RL setting, choosing $\alpha = \mathcal{O}((1 - \gamma) \sqrt{\log \vert \mathcal{A} \vert / (dT)})$, we get
$$
\frac{1}{T} \sum_{t = 1}^T \textsc{Reg}_t^\pi \leq \mathcal{O}\left(\frac{1}{1 - \gamma}\sqrt{(d \log \vert \mathcal{A} \vert) / T }\right)
$$
which is sublinear in $T$.
Consequently, choosing $T$ to be at least $\Omega(\frac{d \log \vert \mathcal{A} \vert}{(1 - \gamma)^2 \epsilon^2})$ gives $\frac{1}{T} \sum_{t = 1}^T \text{Reg}_t^\pi \leq \epsilon$.

\subsubsection{Bounding Regret of $\zeta$-Player}
Note that the regret for the $\zeta$-player simplifies to
\begin{align*}
\textsc{Reg}_t^\zeta &= g(\bm\zeta_t, \bm\lambda(\bm{c}'_t), \bm{w}, \pi_t) - g(\bm\zeta_{\bm{w}_t}^{\pi_t}, \bm\lambda(\bm{c}'_t), \bm{w}, \pi_t) \\
&=
\langle \bm\zeta_t - \bm\zeta_{\bm{w}_t}^{\pi_t}, \bm\Phi^T \bm\mu_{\bm\lambda_t, \pi_t} - \bm\lambda_t \rangle
\end{align*}
which has the same form as in the unconstrained case.
The proof is essentially the same as the proof in Section~\ref{appendix:zeta-regret} for the unconstrained setting.
The only difference is that $D_\zeta \leq \mathcal{O}(\frac{D_w \sqrt{d}}{1 - \gamma})$ where $D_w = 1 + \frac{1}{\phi}$.
Following the proof, we get
$$
\textsc{Reg}_t^\zeta \leq \mathcal{O}\left(\frac{C^\ast d}{(1 - \gamma)\phi} \sqrt{\frac{\log (dn T (\log \vert \mathcal{A} \vert) / (\delta \phi))}{n}}\right).
$$

\subsubsection{Bounding Regret of $\lambda$-Player}
Using the expression \eqref{eqn:lagrangian2}, the regret of $\lambda$-player simplifies to
\begin{align*}
\textsc{Reg}_t^\lambda &= f(\bm\zeta_t, \bm\lambda^\ast, \bm{w}_t, \pi_t) - f(\bm\zeta_t, \bm\lambda_t, \bm{w}_t, \pi_t) \\
&=
\langle \bm\lambda^\ast - \bm\lambda_t, \underbrace{\bm\theta_0 + \gamma \bm\Psi \bm{v}_{\bm\zeta_t, \pi_t} - \bm\zeta_t + \bm\Theta \bm{w}_t}_{= \bm\xi_t} \rangle
\end{align*}
Following the analysis for the unconstrained setting in Section~\ref{appendix:regret-lambda}, we get
$$
\frac{1}{T} \sum_{t = 1}^T \textsc{Reg}^\lambda_t \leq \mathcal{O}\left(\frac{C^\ast d^{3/ 2}}{(1 - \gamma)\phi} \sqrt{\frac{\log(dnT (\log \vert \mathcal{A} \vert) / (\delta\phi))}{n}} \right) + \epsilon_{\text{opt}}^\lambda(T)
$$

\subsubsection{Bounding Regret of $w$-Player} \label{section:regret-w}
Using expression \eqref{eqn:lagrangian1}, the regret of $w$-player simplifies to
\begin{align*}
g(\bm\zeta_t, \bm\lambda(\bm{c}_t'), \bm{w}_t, \pi_t) - g(\bm\zeta_t, \bm\lambda(\bm{c}_t'), \bm{w}, \pi_t)
=
\langle \bm{w}_t - \bm{w}, \bm\tau - \bm\Theta^T \bm\lambda_t \rangle
\end{align*}
which is bounded by $0$ since the $w$-player choose $\bm{w}_t \in D_w \Delta^I$ that minimizes $\langle \cdot, \bm\tau - \bm\Theta^T \bm\lambda_t \rangle$.

\subsection{Exact Feasibility} \label{appendix:exact-feasibility}

For producing an $\epsilon$-optimal policy that satisfies the constraints exactly, we make the following two-policy feature coverage assumptions.

\begin{assumption}[Two-policy feature coverage] \label{assumption:two-policy}
Assume the Slater's condition (Assumption~\ref{assumption:slater}) holds.
Denote by $\pi^\ast_\phi$ an optimal policy for the optimization problem $\mathcal{P}(\bm\tau + \phi \bm{1})$.
Denote by $\pi^\ast$ an optimal policy for $\mathcal{P}(\bm\tau)$.
Assume that
$$
(\bm\lambda^\ast)^T (\bm\Lambda^\dagger)^2 \bm\lambda^\ast \leq C^\ast, \quad\quad
(\bm\lambda_\phi^\ast)^T (\bm\Lambda^\dagger)^2 \bm\lambda_\phi^\ast \leq C^\ast.
$$
\end{assumption}

With the assumption above and the linear CMDP assumption (Assumption~\ref{assumption:linear-cmdp}), consider running Algorithm~\ref{alg:pdapc} with stricter thresholds $\bm\tau + \eta \bm{1}$ where $\eta = \phi \epsilon$ and $D_w = \frac{4}{\phi}$.
Since the Slater's constant for $\mathcal{P}(\bm\tau + \eta \bm{1})$ is $\phi(1 - \epsilon)$, following the main analysis gives
\begin{align}
L_\eta(\pi^\ast, \bar{\bm{w}}) &\leq L_\eta(\bar{\pi}, \bm{w}) + \epsilon \label{eqn:saddle1} \\
L_\eta(\pi^\ast_\phi, \bar{\bm{w}}) &\leq L_\eta(\bar{\pi}, \bm{w}) + \epsilon \label{eqn:saddle2}
\end{align}
for any $\bm{w} \in D_w \bm\Delta^I$ with probability at least $1 - \delta$ for sample size $n = \widetilde{\mathcal{O}}\left( \frac{(C^\ast)^2 d^3 \log(d / \delta)}{(1 - \gamma)^2 \phi^2 \epsilon^2} \right)$ since changing $\phi$ to $\phi(1 - \epsilon)$ does not affect the order of sample size bound where $L_\eta(\pi, \bm{w}) = J_0(\pi) + \bm{w} \cdot (\bm{J}(\pi) - \bm\tau - \eta \bm{1})$ is the Lagrangian function for $\mathcal{P}(\bm\tau + \eta \bm{1})$.
Then following the proof of Theorem 3 in \textcite{hong2023primal}, we can argue as follows.

\paragraph{Near Optimality}

Setting $\bm{w} = \bm{0}$ in (\ref{eqn:saddle1}) and rearranging, we get
\begin{align*}
J_0(\bar\pi)
&\geq
J_0(\pi^\ast) + \bar{\bm{w}} \cdot (\bm{J}(\pi^\ast) - \bm\tau - \eta \bm{1}) - \epsilon \\
&\geq
J_0(\pi^\ast) - \eta \Vert \bar{\bm{w}} \Vert_1 - \mathcal{O}(\epsilon) \\
&\geq
J_0(\pi^\ast) - \mathcal{O}(\epsilon)
\end{align*}
where the second inequality follows by the feasibility of $\pi^\ast$ for $\mathcal{P}(\bm\tau)$;
the last inequality follows by $\eta \Vert \bar{\bm{w}} \Vert_1 \leq \eta D_w = \mathcal{O}(\epsilon)$.
This proves near optimality of $\bar\pi$.
Now we prove that $\bar\pi$ is (exactly) feasible for $\mathcal{P}(\bm\tau)$.

\paragraph{Exact Feasibility}
Define $m = \min_{i \in [I]} (J_i(\bar\pi) - \tau_i)$.
If $m \geq 0$ then $J_i(\bar\pi) - \tau_i \geq 0$ for all $i = 1, \dots, I$ and exact feasibility trivially holds.
We only consider the case where $m < 0$.
Define a mixture policy $\widetilde\pi = (1 - \zeta) \pi^\ast + \zeta \pi_\phi^\ast$ where $\zeta \in (0, 1)$ is to be determined later.
The mixture policy has the interpretation of first drawing a policy from $\{ \pi^\ast, \pi_\phi^\ast \}$ with probabilities $(1 - \zeta)$ and $\zeta$, then running the drawn policy for the entire trajectory.
Since $L_\eta(\cdot, \bar{\bm{w}})$ is linear, a linear combination of (\ref{eqn:saddle1}) and (\ref{eqn:saddle2}) with coefficients $1 - \zeta$ and $\zeta$ respectively, we get
$$
L_\eta(\widetilde\pi, \bar{\bm{w}}) \leq L_\eta(\bar\pi, \bm{w}) + \epsilon.
$$
Choosing $\bm{w}$ such that $w_j = D_w$ for $j = \argmin_{i \in [I]} (J_i(\bar\pi) - \tau_i)$ and $w_j = 0$ for all other indices, we get
\begin{align*}
L_\eta(\widetilde\pi, \bar{\bm{w}})
&\leq
J_0(\bar\pi) + \bm{w} \cdot (\bm{J}(\bar\pi) - \bm\tau - \eta \bm{1}) + \epsilon \\
&=
J_0(\bar\pi) + D_w(m - \eta) + \epsilon.
\end{align*}
On the other hand, using the fact that $\widetilde\pi$ is feasible for $\mathcal{P}(\bm\tau + \zeta \phi \bm{1})$, we get
\begin{align*}
L_\eta(\widetilde\pi, \bar{\bm{w}})
&=
J_0(\widetilde\pi) + \bar{\bm{w}} \cdot (\bm{J}(\widetilde\pi) - \bm\tau - \eta \bm{1}) \\
&\geq
J_0(\widetilde\pi) + (\zeta \phi - \eta) \Vert \bar{\bm{w}} \Vert_1.
\end{align*}
Combining the previous two results (upper bound and lower bound of $L_\eta(\widetilde\pi, \bar{\bm{w}})$) and rearranging, we get
\begin{equation}
J_0(\widetilde\pi) - J_0(\bar\pi) \leq D_w(m - \eta) - (\zeta \phi - \eta) \Vert \bar{\bm{w}} \Vert_1 + \epsilon. \label{eqn:exact-feasibility-intermediate}
\end{equation}
Now, to get a lower bound of $J_0(\widetilde\pi) - J_0(\bar\pi)$, let $(\widetilde\pi^\ast, \widetilde{\bm{w}}^\ast)$ be a primal-dual solution of $\mathcal{P}(\bm\tau + \zeta \phi \bm{1})$.
Note that $\mathcal{P}(\bm\tau + \zeta \phi \bm{1})$ is feasible by the Slater's condition assumption \ref{assumption:slater} and the fact that $\zeta \phi \in (0, \phi)$.
Since $(\widetilde\pi^\ast, \widetilde{\bm{w}}^\ast)$ is a saddle point of $L_{\zeta \phi}(\pi, \bm{w})
= J_0(\pi) + \bm{w} \cdot (J_{\bm{C}}(\pi) - \bm\tau - \zeta \phi \bm{1})$, we get
$$
L_{\zeta \phi}(\bar\pi, \widetilde{\bm{w}}^\ast) \leq L_{\zeta \phi}(\widetilde\pi^\ast, \widetilde{\bm{w}}^\ast) = J_0(\widetilde\pi^\ast) \leq J_0(\pi^\ast) \leq \frac{1}{1 - \zeta} J_0(\widetilde\pi)
$$
where the equality follows by the complementary slackness property;
the second inequality follows since the feasibility set of $\mathcal{P}(\bm\tau)$ contains that of $\mathcal{P}(\bm\tau + \zeta \phi \bm{1})$;
and the last inequality follows by $J_0(\widetilde\pi) = (1 - \zeta) J_0(\pi^\ast) + \zeta J_0(\pi_\phi^\ast) \geq (1 - \zeta) J_0(\pi^\ast)$.
Rearranging, we get
\begin{align*}
J_0(\widetilde\pi) - J_0(\bar\pi)
&\geq - \zeta J_0(\bar\pi) + (1 - \zeta) \widetilde{\bm{w}}^\ast \cdot (\bm{J}(\bar\pi) - \bm\tau - \zeta \phi \bm{1}) \\
&\geq
-\zeta + (1 - \zeta) (m - \zeta \phi) \Vert \widetilde{\bm{w}}^\ast \Vert_1
\end{align*}
where the second inequality follows by $J_0(\cdot) \leq 1$ and the definition of $m$.
Combining with the upper bound of $J_0(\widetilde\pi) - J_0(\bar\pi)$ shown in (\ref{eqn:exact-feasibility-intermediate}) and rearranging, we get
\begin{equation} \label{eqn:exact-feasibility-intermediate2}
(D_w - (1 - \zeta) \Vert \widetilde{\bm{w}}^\ast \Vert_1)m
\geq
D_w\eta + (\zeta \phi - \eta) \Vert \bar{\bm{w}} \Vert_1 - \zeta - (1 - \zeta) \zeta \phi \Vert \widetilde{\bm{w}}^\ast \Vert_1 - \epsilon.
\end{equation}
Now, we choose our parameters as follows.
$$
\zeta = \epsilon, \quad
D_w = \frac{4}{\phi}, \quad
\eta = \phi \epsilon.
$$
Since $\widetilde{\bm{w}}^\ast$ is a dual solution of $\mathcal{P}(\bm\tau + \zeta \phi \bm{1})$, which has a margin of $\phi - \zeta \phi$, Lemma~\ref{lemma:dual-variable-bound} gives $\Vert \widetilde{\bm{w}}^\ast \Vert_1 \leq \frac{1}{\phi - \zeta \phi}$.
Hence,
$$
\zeta \phi \Vert \widetilde{\bm{w}}^\ast \Vert_1
\leq
\frac{\zeta \phi}{\phi - \zeta \phi}
\leq 2\zeta
= 2\epsilon
$$
where the second inequality uses $\zeta = \epsilon \leq \frac{1}{2}$.
Hence, $\Vert \widetilde{\bm{w}}^\ast \Vert_1 \leq \frac{2 \epsilon}{\zeta \phi} = \frac{2}{\phi} < D_w$ so that $D_w - (1 - \zeta) \Vert \widetilde{\bm{w}}^\ast \Vert_1 > 0$.
Hence, the previous result (\ref{eqn:exact-feasibility-intermediate2}) gives
\begin{align*}
(D_w - (1 - \zeta) \Vert \widetilde{\bm{w}}^\ast \Vert_1)m
&\geq
D_w\eta + (\zeta \phi - \eta) \Vert \bar{\bm\lambda} \Vert_1 - \zeta - (1 - \zeta) \zeta \phi \Vert \widetilde{\bm{w}}^\ast \Vert_1 - \epsilon \\
&\geq
4\epsilon + 0 - \epsilon - 2\epsilon - \epsilon \\
&=
0.
\end{align*}
Since $D_w - (1 - \zeta) \Vert \widetilde{\bm\lambda}^\ast \Vert_1 > 0$, we have $m \geq 0$ which implies $\tau_i - J_i(\bar\pi) \geq 0$ for all $i = 1, \dots, I$.
This leads to the following theorem.
\begin{theorem}
Under assumptions~\ref{assumption:linear-cmdp} and \ref{assumption:two-policy}, as long as $T$ is at least $\Omega(\frac{d \log \vert \mathcal{A} \vert}{(1 - \gamma)^2 \epsilon^2})$, the policy $\bar\pi$ produced by Algorithm~\ref{alg:pdapc} with thresholds $\bm\tau + \phi \epsilon \bm{1}$ and $D_w = \frac{4}{\phi}$ satisfies $J_0(\bar\pi) \geq J_0(\pi^\ast) - \epsilon$ and $J_i(\bar\pi) \geq \tau_i$, $i = 1, \dots, I$ with probability at least $1 - \delta$ for sample size 
$$
n = \mathcal{O}\left(
\frac{(C^\ast)^2 d^3 \log(dn (\log \vert \mathcal{A} \vert) / (\delta \phi \epsilon(1 - \gamma))}{(1 - \gamma)^2 \phi^2 \epsilon^2}
\right).
$$
\end{theorem}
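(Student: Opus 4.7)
The plan is to apply Theorem~\ref{theorem:main} to the tightened problem $\mathcal{P}(\bm\tau + \eta \bm{1})$ with $\eta = \phi\epsilon$, which still satisfies Slater's condition with margin $\phi(1-\epsilon)$, and then convert the resulting near-saddle-point guarantee into near optimality plus exact feasibility by a mixture-policy argument. Assumption~\ref{assumption:two-policy} is designed so that both the average-feature vector $\bm\lambda^\ast$ of the optimal policy $\pi^\ast$ for $\mathcal{P}(\bm\tau)$ and the corresponding $\bm\lambda_\phi^\ast$ for $\pi_\phi^\ast$ (optimal for $\mathcal{P}(\bm\tau + \phi\bm{1})$) are approximable by averages of $\bm\varphi(s_k,a_k)$ via Lemma~\ref{lemma:lambda-hat2}. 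Running Algorithm~\ref{alg:pdapc} with $D_w = 4/\phi$ and tracking the proof of Theorem~\ref{theorem:main} against both benchmarks in parallel, I obtain the two near-saddle-point inequalities $L_\eta(\pi^\ast, \bar{\bm{w}}) \le L_\eta(\bar\pi, \bm{w}) + \epsilon$ and $L_\eta(\pi_\phi^\ast, \bar{\bm{w}}) \le L_\eta(\bar\pi, \bm{w}) + \epsilon$, valid for every $\bm{w} \in D_w \bm\Delta^I$, with probability $1-\delta$ at the claimed sample size (the $\phi \to \phi(1-\epsilon)$ change affects only constants).

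Near optimality then follows by setting $\bm{w} = \bm{0}$ in the first inequality, rearranging, and bounding the linear term $\bar{\bm{w}}\cdot(\bm{J}(\pi^\ast) - \bm\tau - \eta\bm{1})$ below using feasibility of $\pi^\ast$ for $\mathcal{P}(\bm\tau)$ together with $\eta\|\bar{\bm{w}}\|_1 \le \eta D_w = 4\epsilon$, giving $J_0(\bar\pi) \ge J_0(\pi^\ast) - \mathcal{O}(\epsilon)$.

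For feasibility, let $m = \min_i (J_i(\bar\pi) - \tau_i)$; the case $m \ge 0$ is immediate, so assume $m < 0$. Introduce the mixture policy $\widetilde\pi = (1-\zeta)\pi^\ast + \zeta\pi_\phi^\ast$, which is feasible for $\mathcal{P}(\bm\tau + \zeta\phi\bm{1})$ by linearity of $J_i$ over mixture policies. The convex combination of the two near-saddle inequalities above with weights $(1-\zeta, \zeta)$ gives $L_\eta(\widetilde\pi, \bar{\bm{w}}) \le L_\eta(\bar\pi, \bm{w}) + \epsilon$. Upper-bounding the right-hand side by choosing $\bm{w}$ to place all mass $D_w$ on the most-violated coordinate yields $L_\eta(\widetilde\pi, \bar{\bm{w}}) \le J_0(\bar\pi) + D_w(m - \eta) + \epsilon$. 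For the matching lower bound, I would invoke strong duality for $\mathcal{P}(\bm\tau + \zeta\phi\bm{1})$ at a primal-dual pair $(\widetilde\pi^\ast, \widetilde{\bm{w}}^\ast)$ together with Lemma~\ref{lemma:dual-variable-bound} applied at Slater margin $\phi(1-\zeta)$ to bound $\|\widetilde{\bm{w}}^\ast\|_1 \le 1/(\phi(1-\zeta))$; a short manipulation using $J_0(\widetilde\pi) \ge (1-\zeta) J_0(\pi^\ast)$ then yields $J_0(\widetilde\pi) - J_0(\bar\pi) \ge -\zeta + (1-\zeta)(m-\zeta\phi)\|\widetilde{\bm{w}}^\ast\|_1$.

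Combining the two bounds on $L_\eta(\widetilde\pi, \bar{\bm{w}})$ and on $J_0(\widetilde\pi) - J_0(\bar\pi)$ rearranges to
$$(D_w - (1-\zeta)\|\widetilde{\bm{w}}^\ast\|_1)\,m \;\ge\; D_w \eta + (\zeta\phi - \eta)\|\bar{\bm{w}}\|_1 - \zeta - (1-\zeta)\zeta\phi\|\widetilde{\bm{w}}^\ast\|_1 - \epsilon.$$
The main obstacle is tuning $\zeta, \eta, D_w$ so that the coefficient of $m$ on the left is strictly positive while the right-hand side is non-negative; with $\zeta = \epsilon$, $\eta = \phi\epsilon$, and $D_w = 4/\phi$, the bound $\|\widetilde{\bm{w}}^\ast\|_1 \le 2/\phi < D_w$ holds for $\epsilon \le 1/2$, and the leading term $D_w \eta = 4\epsilon$ dominates the negative contributions $\epsilon + 2\epsilon + \epsilon$, forcing $m \ge 0$ and hence exact feasibility $J_i(\bar\pi) \ge \tau_i$ for every $i$.
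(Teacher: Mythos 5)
Your proposal is correct and follows essentially the same route as the paper's proof in Appendix~\ref{appendix:exact-feasibility}: run the algorithm on the tightened problem $\mathcal{P}(\bm\tau + \phi\epsilon\bm{1})$, derive the two near-saddle-point inequalities for $\pi^\ast$ and $\pi^\ast_\phi$ via the two-policy coverage assumption, and close the feasibility argument with the mixture policy $\widetilde\pi$, the dual-variable bound of Lemma~\ref{lemma:dual-variable-bound}, and the identical parameter choices $\zeta = \epsilon$, $\eta = \phi\epsilon$, $D_w = 4/\phi$. The final arithmetic ($4\epsilon$ dominating $\epsilon + 2\epsilon + \epsilon$, forcing $m \geq 0$) matches the paper's computation exactly.
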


\end{document}